\pgfplotsset{compat=newest}
\newcommand{\mem}{\mathrm{mem}}
\newcommand{\nom}{\mathrm{nom}}
\newcommand{\pmem}{\mathrm{pmem}}
\newcommand{\softmax}{\operatorname{softmax}}
\newcommand{\rset}{\mathbb{R}}
\newcommand{\nset}{\mathbb{N}}
\newcommand{\rmd}{\mathrm{d}}
\newcommand{\vareps}{\varepsilon}
\newcommand{\denoiser}{\bar{\vx}}
\newcommand{\denoisermem}{\denoiser_{\mem}}
\newcommand{\denoisergen}{\denoiser_{\star}}
\newcommand{\denoiserpmem}{\denoiser_{\pmem, M}}
\newcommand{\snr}{\psi}
\newcommand{\snrinv}{\varsigma}
\newcolumntype{P}[1]{>{\centering\arraybackslash}p{#1}}
\newcommand{\compressstyle}{\displaystyle}
\title{On the Edge of Memorization in Diffusion Models}
\author{%
  Sam Buchanan\thanks{These authors contributed equally to this work.} \\
  TTIC \\
  \And 
  Druv Pai$^{*}$ \\
  UC Berkeley \\
  \And
  Yi Ma \\
  UC Berkeley, HKU \\
  \And 
  Valentin De Bortoli \\
  Google DeepMind
}
\begin{document}

\maketitle

\begin{abstract}
    When do diffusion models reproduce their training data, and when are they able to generate samples beyond it?  A practically relevant theoretical understanding of this interplay between memorization and generalization may significantly impact real-world deployments of diffusion models with respect to issues such as copyright infringement and data privacy. In this work, to disentangle the different factors that influence memorization and generalization in practical diffusion models, we introduce a scientific and mathematical ``laboratory'' for investigating these phenomena in diffusion models trained on fully synthetic or natural image-like structured data. Within this setting, we hypothesize that the memorization or generalization behavior of an underparameterized trained model is determined by the \textit{difference in training loss} between an associated memorizing model and a generalizing model. To probe this hypothesis, we theoretically characterize a \textit{crossover point} wherein the weighted training loss of a fully generalizing model becomes greater than that of an underparameterized memorizing model at a critical value of model
    (under)parameterization. We then demonstrate via carefully-designed experiments %
    that the location of this crossover predicts a phase transition in diffusion models trained via gradient descent, validating our hypothesis. Ultimately, our theory enables us to analytically predict the model size at which memorization becomes predominant. Our work %
    provides an analytically tractable and practically meaningful setting for future theoretical and empirical investigations. Code for our experiments is available at \url{https://github.com/DruvPai/diffusion_mem_gen}.
\end{abstract}

\section{Introduction}

Diffusion models are one of the premier methodologies for deep generative modeling. They exhibit great capabilities across modalities and are state-of-the-art at synthesizing images \citep{saharia2022photorealistic,podell2023sdxl}, videos \citep{ho2022video,blattmann2023align}, or proteins  \citep{watson2023novo}. Despite significant success when used in practice, in the context of large-scale commercial deployment \citep{ramesh2022hierarchical} diffusion models are often plagued with data privacy and copyright infringement issues \citep{ghalebikesabi2023differentially, carlini2023extracting,nasr2023scalable,cui2023diffusionshield,wang2024replication,vyas2023provable,franceschelli2022copyright}, which may have significant long-term consequences. These issues stem from the possibility and (sometimes) propensity for diffusion models to \textit{memorize} their training data. Namely, in some cases, trained diffusion models generate outputs which are verbatim copies of the training samples  \citep{zhang2023emergence,kadkhodaie2023generalization}. Yet, in many other cases, these  models can generate outputs which appear natural but are not present in the training set; this behavior is often described as \emph{creativity} \citep{kamb2024analytic} or \emph{generalization} \citep{zhang2023emergence,kadkhodaie2023generalization,niedoba2024towards}.

In general, memorization and generalization are difficult to disambiguate. There has been significant \textit{empirical} work focusing on building heuristic approaches to detect and study memorization in large-scale diffusion models \citep{zhang2023emergence,gu2023memorization,yoon2023diffusion,carlini2023extracting,somepalli2023diffusion,wen2024detecting,ross2024geometric,wang2024discrepancy,wang2024replication,chen2024towards}. Most such approaches use ad-hoc definitions of memorization which compare the features of the generated samples (w.r.t.~some deep neural network encoder) to features of samples from the training data \citep{pizzi2022self}. This choice effectively reduces the question of \textit{how exactly can we define memorization?} to \textit{how are the features of the chosen encoder related to the input data?} which is also a difficult problem \citep{papyan2020prevalence,yu2023white}. While such methods appear to perform reasonably well in some practical cases, we emphasize that such measures of memorization are ultimately heuristic and there is still scientific disagreement about their efficacy \citep{stein2024exposing}. This is unsuitable for building a scientific understanding of diffusion models that could, among other things, potentially suggest resolutions to the aforementioned legal and societal issues.

Thus, building a theoretical and scientific understanding of memorization in diffusion models is critical. The central problem that such a theory needs to contend with is that the training loss promotes memorization: given a fixed sample set, a sufficiently powerful and perfectly-optimized diffusion model will \textit{always} reproduce the training data exactly, i.e., every single one of its generations will be exactly a training point  \citep{peluchetti2023non,biroli2024dynamical,kamb2024analytic}. Any theory of memorization must therefore explain why well-trained diffusion models do not always memorize. Several previous works have proposed different explanations for this issue in terms of certain aspects of the training procedure, such as the landscape of the stochastic optimization problem which minimizes the training loss \citep{wu2025taking,vastola2025generalization} and the parameterization of the backbone denoiser in the diffusion model \citep{yoon2023diffusion,zhang2023emergence,wang2024diffusion,kamb2024analytic,niedoba2024towards,george2025denoising}. 
However, a precise and predictive theoretical characterization of memorization remains elusive. 

\paragraph{Our contributions.} In this work, we theoretically investigate memorization and generalization in diffusion models. We first introduce a \textit{memorization laboratory}, a natural setting for investigating memorization and generalization in diffusion models trained on synthetic data. We justify this setting by proving that within it, we may distinguish a target distribution from its empirical version for many configurations of problem parameters. To explore the behavior of trained models, we hypothesize that \textit{denoisers trained via gradient descent-like methods memorize or generalize depending on whether the empirical (training) loss is lower for parameter-matched memorizing denoisers or generalizing denoisers}. This hypothesis, to the best of our knowledge, has not been formally explored by prior work, and can only be probed here due to the controlled laboratory setting. To formalize and eventually attempt to falsify this hypothesis, we introduce a \textit{partially memorizing denoiser}, an underparameterized denoiser whose output distribution's samples are always memorized. Within a simple  setting of our laboratory, we characterize the critical level of model (under)parameterization (``crossover point'') at which the training loss of the partially memorizing denoiser first becomes lower than the idealized generalizing denoiser, by deriving and using tight theoretical approximations to these losses. We show that, if our hypothesis is true, then this crossover characterization naturally provides the location of the \textit{phase transition} from generalization to memorization in trained denoisers. Ultimately, we use these theoretically-derived tools to build a \textit{predictive model for the phase transition} from generalization to memorization in terms of a minimal set of problem parameters, and show via experiments that our model achieves extremely low error in practice, validating our hypothesis. We finish our experiments by examining another setting of our laboratory which captures more of the complexities involved in training diffusion models on natural images, showing that despite its additional complexity it is qualitatively very similar to the first case. Beyond the current investigation, the framework we propose provides an analytically tractable yet rich setting for further investigation of memorization and generalization in diffusion models.

All proofs are included in the appendices. Our notation is presented in \Cref{tab:notation_shapes,tab:notation_params}.

\section{A Memorization/Generalization Laboratory}\label{sec:setup}

Diffusion models show a complex tradeoff between
model capacity, training compute, and dataset
size with respect to key behaviors such as memorization
and generalization.
We present a framework (``laboratory'') to disentangle these different factors. 
The class of models we study is sufficiently expressive to admit a rich family of 
behaviors while remaining tractable for theoretical analysis.

\paragraph{Diffusion models.}

Given a target probability distribution $\pi_\star$ and a training dataset $(\vx^i)_{i=1}^N$ from $\pi_\star$, the goal of generative modeling is to define an \emph{output probability distribution} $\hat{\pi}$ which approximates the underlying target $\pi_\star$. Diffusion models \citep{sohl2015deep,song2019generative,ho2020denoising} define such an output distribution $\hat{\pi}$ as the result of a stochastic process.  More precisely, we consider, for $t \in [0, 1]$, a noising process with marginals as follows:
\begin{equation}
    \label{eq:noising_time_t}
    X_t \equid \alpha_t X_0 + \sigma_t Z , \quad Z \sim \mathcal{N}(\Zero,
    \vI) , \quad X_0 \sim \pi_\star ,
\end{equation}
with $\alpha_0 = \sigma_1 = 1$ and $\alpha_1 = \sigma_0 = 0$.
This noising process can be associated with the dynamic
\begin{equation}\label{eq:fwd-sde}
    \mathrm{d} X_t = f_t X_t \mathrm{d} t + g_t \mathrm{d} B_t , \quad
    X_0 \sim \pi_{\star}.
\end{equation}
where $f_t$ and $g_t$ can be obtained in closed form, see
\citet{gao2025diffusionmeetsflow} for instance, and $(B_t)_{t \in
            [0,1]}$ is a $d$-dimensional Brownian motion. We recall that the
backward process associated with \eqref{eq:fwd-sde} is
given by
\begin{equation}\label{eq:bwd-sde}
    \mathrm{d} Y_t = \{-f_{1-t} Y_t + (g_{1-t}^2/2) \nabla \log p_{1-t}(Y_t) \}
    \mathrm{d} t + g_{1-t} \mathrm{d} B_t , \quad Y_0 \sim
    \mathcal{N}(\Zero, \vI),
\end{equation}
where $p_t$ is the density of $X_t$ %
at time $t$. 
Using Tweedie's identity \citep{Robbins1956-fz}, we have 
\begin{equation}\label{eq:tweedie}
    \nabla \log p_t(\vx_t) = (\alpha_t \denoiser(t, \vx_t) - \vx_t) /
    \sigma_t^2 ,
\end{equation}
where $\denoiser(t,\vx_t) = \mathbb{E}[X_0 \ | \ X_t = \vx_t]$.
Thus, using the samples $(\vx^i)_{i=1}^N$, practical diffusion models define a denoiser that
approximates $\denoiser$ by solving the training loss minimization problem
\begin{equation}\label{eq:denoising-objective}
    \compressstyle \min_{\theta} \sL_{N}(\denoiser_{\theta}, \lambda)
\end{equation}
where 
\begin{equation}\label{eq:training_loss_defns}
    \sL_{N}(\denoiser, \lambda) = \Esub*{t}{ \lambda(t)\sL_{N, t}(\denoiser) } , \qquad \sL_{N, t}(\denoiser) = \frac{1}{N} \sum_{i=1}^N \Esub*{X_t^i}{\norm*{\denoiser(t, X_t^i) - \vx^i }^2}
\end{equation}
over a parametric class of denoisers $\denoiser_\theta$, with $\lambda(t)$ a weighting function, $t$ distributed as $\Unif{[0, 1]}$, and  $X_t^i$ distributed according to the noising process \eqref{eq:noising_time_t} with $X_0 = \vx^i$, i.e., $X_{t}^{i} \equid \alpha_{t}\vx^{i} + \sigma_{t}Z$.
We then substitute the learned denoiser $\denoiser_\theta(t, \vx_t)$ into a suitable
discretization of the backward process \eqref{eq:bwd-sde}, via
\eqref{eq:tweedie}, which defines the \emph{output distribution} $\hat{\pi}$ as the
marginal distribution of $Y_1$.

As has been well noted in the literature, the training objective
\eqref{eq:denoising-objective} is at odds with the stated goal of diffusion models.
This is because the non-parametric minimizer of \eqref{eq:denoising-objective} memorizes the training data:
\begin{equation}\label{eq:memorizing-denoiser-empirically-optimal}
    \compressstyle\argmin_{\denoiser}\sL_{N}(\denoiser, \lambda) = \sum_{i=1}^{N} \vx^{i} \softmax(\vw(\spcdot))_i ,
\end{equation}
where $\denoiser(t, \spcdot)$ is square-integrable, for any $\vv \in \bbR^N$ we have $\softmax(\vv)_i = \mathrm{e}^{v_i} / \sum_{j=1}^N \mathrm{e}^{v_j}$, and
\begin{equation}\label{eq:softmax-vector-mem}
    \compressstyle w_i(\vx_t) = -\frac{1}{2\sigma_t^2}\| \alpha_t \vx^{i} - \vx_t \|^2, \quad
    i=1, \dots, N.
\end{equation}
We denote the memorizing denoiser in \eqref{eq:memorizing-denoiser-empirically-optimal} as $\denoisermem(t, \vx_t)$. Towards theoretically studying the memorization/generalization trade-off in diffusion models, we first specify our main assumptions regarding the data and our models which define our memorization/generalization laboratory.

\paragraph{Data and model assumptions.}

We set $\pi_\star$ to be an equally-weighted mixture of
$K$ Gaussians in $\bbR^d$ with means $\vmu_\star^k \in \bbR^d$ and covariances
$\vSigma_\star^k \succeq \Zero$:
\begin{equation}\label{eq:ground-truth-pi-anisotropic}
    \compressstyle \pi_\star = \frac{1}{K} \sum_{k=1}^{K} \sN(\vmu_\star^{k}, \vSigma_{\star}^k \vI).
\end{equation}
Gaussian mixture models are flexible enough to represent a large class of
datasets while being amenable to theoretical investigation, see \citep{wang2024unreasonable,Shah2023-qw,wang2024diffusion} for instance, \Cref{sub:experiments_image} for an example, and  \Cref{sec:gmm-denoiser-calcs} for a further discussion. 

Let $M \in \bbN$ denote the number of equally-weighted
mixture components in a generic Gaussian mixture $\pi_\theta$, with $\theta = (\vmu^1,
\vSigma^1, \dots, \vmu^M, \vSigma^M)$. %
Tweedie's identity \eqref{eq:tweedie} links the 
statistical model $\pi_\theta$ and its associated denoiser $\denoiser_\theta$,
which we recall below in \Cref{lemma:denoiser_mog} and prove in
\Cref{sec:gmm-denoiser-calcs}.
\begin{lemma}{Gaussian mixture model denoiser}{denoiser_mog}
    Assume that $\pi_\theta = (1/M) \sum_{i=1}^{M} \sN(\vmu^{i},
        \vSigma^i)$. Then, we have that
    \begin{equation}\label{eq:model-class-general}
    \compressstyle
        \denoiser_\theta(t,\vx_t) = 
        \frac{1}{\alpha_t}\left(
        \vx_t -
        \sigma^{2}_t\sum_{i =
        1}^{M}(\alpha_t^{2}\bm{\Sigma}^{i}
        + \sigma_t^{2}\bm{I})^{-1}(\vx_t - \alpha_t \bm{\mu}^{i})
        \operatorname{softmax}(\bm{w}(t, \vx_t))_{i}
        \right),
    \end{equation}
    where for all $i \in [M]$
    \begin{equation}
        w_i(t, \vx_t) = 
        -\frac{1}{2}\log\det(\alpha_t^{2}\bm{\Sigma}^{i} + \sigma_t^{2}\bm{I}) -
        \frac{1}{2}(\vx_t-
        \alpha_t\bm{\mu}^{i})^{\top}(\alpha_t^{2}\bm{\Sigma}^{i} +
        \sigma_t^{2}\bm{I})^{-1}(\vx_t - \alpha_t \bm{\mu}^{i}) 
        .
    \end{equation}
\end{lemma}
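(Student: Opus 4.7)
The plan is to directly compute $\denoiser_\theta(t, \vx_t) = \mathbb{E}[X_0 \mid X_t = \vx_t]$ using Tweedie's identity \eqref{eq:tweedie}, which reduces the claim to a computation of $\nabla \log p_t(\vx_t)$ where $p_t$ is the density of $X_t$ when $X_0 \sim \pi_\theta$.

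First I would use that, conditional on $X_0$, the noising process \eqref{eq:noising_time_t} gives $X_t \mid X_0 \sim \sN(\alpha_t X_0, \sigma_t^2 \vI)$, so marginalizing against the mixture $\pi_\theta = (1/M) \sum_{i=1}^M \sN(\vmu^i, \vSigma^i)$ shows that $X_t$ is itself a Gaussian mixture,
\begin{equation*}
    X_t \sim \frac{1}{M} \sum_{i=1}^M \sN\!\bigl(\alpha_t \vmu^i,\; \alpha_t^2 \vSigma^i + \sigma_t^2 \vI\bigr).
\end{equation*}
Writing $q_i(\vx_t)$ for the $i$-th mixture component density and $p_t = (1/M)\sum_i q_i$, a standard log-sum-exp manipulation yields
\begin{equation*}
    \nabla \log p_t(\vx_t) \;=\; \sum_{i=1}^M \frac{q_i(\vx_t)}{\sum_{j=1}^M q_j(\vx_t)}\, \nabla \log q_i(\vx_t).
\end{equation*}

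Next I would identify the two factors on the right. The gradient of a Gaussian log-density gives
\begin{equation*}
    \nabla \log q_i(\vx_t) \;=\; -\bigl(\alpha_t^2 \vSigma^i + \sigma_t^2 \vI\bigr)^{-1}(\vx_t - \alpha_t \vmu^i),
\end{equation*}
which produces the matrix-vector factor in \eqref{eq:model-class-general}. For the mixing weights, I would write $q_i(\vx_t) \propto \exp(w_i(t,\vx_t))$ where
\begin{equation*}
    w_i(t,\vx_t) \;=\; -\tfrac{1}{2}\log\det\!\bigl(\alpha_t^2 \vSigma^i + \sigma_t^2 \vI\bigr) \;-\; \tfrac{1}{2}(\vx_t - \alpha_t \vmu^i)^\top \bigl(\alpha_t^2 \vSigma^i + \sigma_t^2 \vI\bigr)^{-1}(\vx_t - \alpha_t \vmu^i),
\end{equation*}
with the common $(2\pi)^{-d/2}$ factor (and the prefactor $1/M$) cancelling in the ratio $q_i/\sum_j q_j$. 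This ratio is exactly $\softmax(\vw(t,\vx_t))_i$, matching the statement.

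Finally I would substitute into Tweedie's identity, $\denoiser_\theta(t, \vx_t) = (\vx_t + \sigma_t^2 \nabla \log p_t(\vx_t))/\alpha_t$, to recover \eqref{eq:model-class-general}. The argument is essentially a routine calculation; the only potential pitfall is bookkeeping the normalization constants so that the log-determinants appear inside the softmax (rather than outside as separate prefactors), which is handled by writing the Gaussian densities in the exponential form above before forming the ratio.
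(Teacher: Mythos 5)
Your proposal is correct and follows essentially the same route as the paper's proof in \Cref{sec:gmm-denoiser-calcs}: compute the marginal $p_t$ as a Gaussian mixture, differentiate to obtain $\nabla \log p_t$ as a responsibility-weighted sum of component score functions, identify the mixing weights with the softmax (absorbing the log-determinant normalizers), and substitute into Tweedie's identity. The only cosmetic difference is that you present $\nabla \log p_t$ via a log-sum-exp/responsibility decomposition, whereas the paper writes $\nabla p_t / p_t$ directly before simplifying; the two are the same calculation.
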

The class of $M$-parameter Gaussian mixture models constitutes a rich framework
for investigating memorization and generalization. For at one extreme, where
$M=K$, $\vSigma^k = \vSigma_\star^k$ and $\vmu^k = \vmu_\star^k$ for $k \in [K]$
we recover the \emph{generalizing denoiser} associated to the target
distribution \eqref{eq:ground-truth-pi-anisotropic}, which we will denote as
$\denoisergen$; and at the other extreme, with $M=N$, $\vSigma^i = \Zero$, and
$\vmu^i = \vx^i$ for $i \in [N]$ we recover the \emph{memorizing denoiser}
$\denoisermem$ as in \eqref{eq:memorizing-denoiser-empirically-optimal}.
We will therefore consider the training loss \eqref{eq:denoising-objective} with
the model class specified by \Cref{lemma:denoiser_mog}, and study the role
played by the model capacity $M$ as a function of the data complexity $K$, the
dimension $d$, and the number of training samples $N$. %

\paragraph{Memorization and generalization.} We now aim to quantify our notions of memorization and generalization.
Quantifying memorization in generative
models is a complicated issue and many metrics have been proposed to evaluate
it in practice.
We adopt a popular, relatively strict metric for memorization, proposed by
\citet{yoon2023diffusion}. 
\begin{definition}{Memorization}{mem}
    Given a dataset $(\vx^i)_{i=1}^N$, a small absolute constant $c \in (0, 1)$, and the output distribution $\hat{\pi}$ of
    a diffusion model, we say that a sample $\hat{\vx}\sim \hat{\pi}$ is
    \emph{memorized} if $\| \hat{\vx} - \vx^{(1)} \|^2 \leq c \| \hat{\vx} - \vx^{(2)} \|^2$, 
    where $\vx^{(k)}$ is the $k$-th nearest neighbor in $\ell_2$ norm to
    $\hat{\vx}$ in  $( \vx^i )_{i=1}^N$.
\end{definition}

On the other hand, generalization is easily formalized in statistical learning
terms (in contrast to, e.g., assessing creativity of practical image diffusion
models), see \Cref{sec:gmm-denoiser-calcs}. 
In experiments, we estimate the generalization error on a held-out set of
samples from $\pi_\star$, which we can freely generate.

Given these quantitative definitions, the key question that we
investigate in our laboratory is:
\begin{quote}
    \centering
    \textit{%
        For a trained {denoiser $\denoiser_\theta$}, can we quantitatively predict, based on problem parameters \(M\), \(N\), \(d\), and \(K\), whether it memorizes or generalizes?}
\end{quote}
By the phrases ``the denoiser memorizes/generalizes'' we mean that its associated output distribution \(\hat{\pi}\) produces memorized samples or samples (approximately) from the true distribution, as defined above. As we will immediately see, it is also fruitful to understand these behaviors as \(\hat{\pi}\) being ``close'' to the empirical distribution \(\pi_{\star}^{N}\) of the training set \((\vx^{i})_{i = 1}^{N}\) or the target distribution \(\pi_{\star}\) respectively.

\paragraph{Scaling the number of samples.} 
Under the Gaussian mixture model $\pi_\star$ that generates the training data
$(\vx^i)_{i=1}^N$, the parameters $K$ and $d$ (together with the means
$\vmu^k_\star$ and covariances $\vSigma_\star^k$) control the geometric
complexity of the data. Relative to these measures of complexity,
the scaling of the number of samples $N$ plays a fundamental role in the
dichotomy between memorization and generalization behavior we seek to establish.
We will focus on the case where $N = \poly(d)$, which we argue below is a correct scaling in
which to study memorization and generalization.

For assessing the similarity of probability distributions, 
it is standard to use the $2$-Wasserstein distance $\mathrm{W}_2$
\citep{Blau2017-de}. 
We will argue that when $N = \exp[d \log d]$, there is no meaningful
distinction between memorization and generalization in sufficiently high
dimensions. %
For simplicity, assume all covariance matrices $\vSigma_\star^k$ in the definition of $\pi_{\star}$ in \eqref{eq:ground-truth-pi-anisotropic} are full rank.
Then by \citep[Theorem 1, Proposition 7]{Weed2019-lo}, 
we have for all $d \geq 4$ that
$\E{\mathrm{W}_2(\pi_\star, \pi^N_\star)}
\leq C_0 N^{-1/2d} \leq C_0/\sqrt{d}$, for a constant $C_0 \geq 0$. %
Therefore, $\mathrm{W}_2(\pi_\star, \pi^N_\star) \to 0$ and we cannot
distinguish the true distribution $\pi_\star$
from its samples $\pi^N_\star$: memorization and generalization are equivalent.
On the other hand, using \citep[Theorem 1, Proposition 2]{Weed2019-lo}, 
one has for \textit{any} draw of the empirical measure that
$\mathrm{W}_2(\pi_\star, \pi_\star^N) \geq C_1 N^{-2 / d}$
for a constant $C_1 \geq 0$.
In particular, if $N = \poly(d)$ then
the $2$-Wasserstein distance is lower bounded by a constant for any
dimension, implying a meaningful distinction between memorization and
generalization.

\begin{minipage}[t]{0.35\textwidth}
    \centering
    \begin{tabular}{|c|c|}
        \hline
        Notation & Interpretation \\
        \hline
        $N$      & Number of samples \\
        $M$      & Capacity of the model \\
        $K$      & Number of modes \\
        $d$      & Problem dimension \\
        \hline
    \end{tabular}
    \captionof{table}{Data and model scalings.}
    \label{tab:notation_shapes}
\end{minipage}
\hspace{1em}
\begin{minipage}[t]{0.59\textwidth}
    \centering
    \begin{tabular}{|c|P{0.7\linewidth}|}
        \hline
        Notation & Interpretation \\
        \hline
        $(\vmu^i_\star)_{i=1}^K, \sigma_\star^2$ & Data generating process parameters \\
        $(\vmu^i)_{i=1}^M, \sigma^2$ & Learned model parameters \\
        $\denoisergen(t, \vx_t)$ & MMSE denoiser, data generating process \\
        $\denoiser_\theta(t, \vx_t)$ & MMSE denoiser, learned model \\
        $\denoisermem(t, \vx_t)$ & MMSE denoiser, empirical distribution \\
        $\denoiserpmem(t, \vx_t)$ & Partially memorizing denoiser \\
        $\vw$ & Softmax weight vector \\ \hline
    \end{tabular}
    \captionof{table}{Data and model parameters.}
    \label{tab:notation_params}
\end{minipage}

\section{Training Losses and Memorization}
\label{sec:training_losses}

Now that we have set up the framework, we turn to answering the following fundamental question: \textit{when do trained denoisers (i.e., approximate optimizers of \eqref{eq:denoising-objective}) memorize, and when do they generalize}? One first approach to answering this question would be to directly examine the critical points of the diffusion model training optimization problem \eqref{eq:denoising-objective} w.r.t.~the denoiser parameterization \eqref{eq:model-class-general}. However, this is not straightforward; the problem is non-convex and there are many spurious critical points. Therefore, we posit a hypothesis which, roughly speaking, says that \textit{the training losses of surrogate memorizing denoisers with \(M\) parameters and generalizing denoisers is all that matters} for predicting the behavior of trained models with \(M\) parameters. 

To formalize this hypothesis, we will define two {simple surrogate} denoisers whose training losses will help us identify transitions between memorization and generalization as a function of the models'  underparameterization.  
The first of these two summary models is the generalizing denoiser $\denoisergen$. The second is a \textit{partially memorizing} denoiser \(\denoiserpmem\), which memorizes the first \(M\) training samples:\footnote{For our theory, the choice of the subset of samples we use in the partial memorizing denoiser is irrelevant, as long as its size is \(M\) and it is chosen independently of the samples.}
\begin{equation}\label{eq:pmem-denoiser}
    \compressstyle \denoiserpmem(t,\vx_t) =
    \sum_{i=1}^{M} \vx^i \softmax(\vw(t, \vx_t))_i , \quad \text{where} \quad
    \compressstyle w_i(t, \vx_t) = -\frac{1}{2}\| \alpha_t \vx^i - \vx_t \|^2/ \sigma_t^2 .
\end{equation}
Note that here \(\vw \colon \bbR \times \bbR^{d} \to \bbR^{M}\), and in the case where $M=N$, it holds \(\denoiserpmem = \denoisermem\). %
Therefore, we can formulate our hypothesis as follows:
\begin{quote}
    \centering
    \textit{There exists a loss weighting \(\lambda\) such that a trained denoiser \(\denoiser_{\theta}\) with \(M\) parameters memorizes if and only if \(\sL_{N}(\denoiserpmem, \lambda) \leq \sL_{N}(\denoisergen, \lambda)\).}
\end{quote}

To attempt to verify this hypothesis, we will estimate the (excess) training losses of the two denoisers \(\denoiserpmem\) and \(\denoisergen\) and compare them to develop a criterion for when a trained denoiser memorizes. For tractability, we focus on a special case of the general mixture of Gaussians model \eqref{eq:ground-truth-pi-anisotropic} where each mixture component has identical covariance, equal to a constant multiple of the identity matrix (i.e., $\vSigma_\star^k = \sigma_\star^2 \vI$ for each $k \in [K]$ for the ground truth model \eqref{eq:ground-truth-pi-anisotropic}, and $\vSigma^i = \sigma^2 \vI$ for each $i \in [M]$ for general  denoisers $\denoiser_\theta$), as reflected in \Cref{tab:notation_params}. This is a common assumption in the theoretical literature on diffusion models \citep{Shah2023-qw,Gatmiry2024-yg}. The structure of the learned denoiser $\denoiser_\theta$ implied by \Cref{lemma:denoiser_mog} simplifies to:
\begin{equation}\label{eq:model-class}
    \compressstyle \denoiser_\theta(t,\vx_t) = \frac{\alpha_t \sigma^2}{\alpha_t^2
    \sigma^2 + \sigma_t^2} \vx_t + \frac{\sigma_t^2}{\alpha_t^2
    \sigma^2 + \sigma_t^2} \sum_{i=1}^{M} \vmu^{i}
    \softmax(\vw(\vx_t))_i ,
\end{equation}
where
\begin{equation}\label{eq:softmax-vector-model}
    \compressstyle w_i(\vx_t) = -\frac{1}{2}\| \alpha_t \vmu^{i} - \vx_t \|^2/
    (\alpha_t^2 \sigma^2 + \sigma_t^2), \quad i = 1, \dots, M,
\end{equation}
with the analogous structure for $\denoisergen = \denoiser_{(\vmu^1_\star, \dots, \vmu^K_\star, \sigma_\star^2)}$. In contrast to \eqref{eq:model-class}, the form of the memorizing and partially memorizing denoisers \eqref{eq:memorizing-denoiser-empirically-optimal} and \eqref{eq:pmem-denoiser} does not change. Note that when \(M \geq K\), the denoiser \(\denoiser_{\theta}\) with \(M\) parameters can generalize, and when \(M \geq N\), it can memorize.\footnote{As we show rigorously in \Cref{app:softmax-infinity} using properties of the softmax function, the loss \eqref{eq:denoising-objective} of any $M$-parameter denoiser $\denoiser_{(\vmu^1, \dots, \vmu^M, \sigma^2)}$ can be achieved by a sequence of $M+1$ parameter denoisers. Therefore for a fixed value of $M \geq K$, a denoiser with $M$ parameters can be (arbitrarily close to) the generalizing denoiser, i.e., denoisers with arbitrarily many parameters can generalize, and similarly for memorization with \(M \geq N\).}

\begin{wrapfigure}{r}{3.5in}
    \vspace{-1.25em}
    \centering
    \begin{tikzpicture}

    \definecolor{darkgray176}{RGB}{176,176,176}
    \definecolor{darkorange}{RGB}{255,140,0}
    \definecolor{goldenrod}{RGB}{218,165,32}
    \definecolor{green}{RGB}{0,128,0}
    \definecolor{teal}{RGB}{0,128,128}
    
    \begin{axis}[
    width=6cm,
    height=4.125cm,
    scale only axis,
    legend cell align={left},
    legend style={
      fill opacity=0.8,
      draw opacity=1,
      text opacity=1,
      at={(0.5,0.09)},
      anchor=south,
      draw=none
    },
    tick pos=both,
    title={Training Loss, \(\displaystyle \psi_t=6.704\)},
    x grid style={darkgray176},
    xlabel={\(\displaystyle M/N\)},
    xmin=-0.0395, xmax=1.0495,
    xtick style={color=black},
    xtick={-0.25,0,0.25,0.5,0.75,1,1.25},
    xticklabels={
      \(\displaystyle {\ensuremath{-}0.25}\),
      \(\displaystyle {0.00}\),
      \(\displaystyle {0.25}\),
      \(\displaystyle {0.50}\),
      \(\displaystyle {0.75}\),
      \(\displaystyle {1.00}\),
      \(\displaystyle {1.25}\)
    },
    y grid style={darkgray176},
    ylabel={\(\displaystyle \sL_{N, t}\)},
    ymin=1.66751366424431e-14, ymax=1051.25447053777,
    ymode=log,
    ytick style={color=black},
    ytick={1e-16,1e-14,1e-12,1e-10,1e-08,1e-06,0.0001,0.01,1,100,10000,1000000},
    yticklabels={
      \(\displaystyle {10^{-16}}\),
      \(\displaystyle {10^{-14}}\),
      \(\displaystyle {10^{-12}}\),
      \(\displaystyle {10^{-10}}\),
      \(\displaystyle {10^{-8}}\),
      \(\displaystyle {10^{-6}}\),
      \(\displaystyle {10^{-4}}\),
      \(\displaystyle {10^{-2}}\),
      \(\displaystyle {10^{0}}\),
      \(\displaystyle {10^{2}}\),
      \(\displaystyle {10^{4}}\),
      \(\displaystyle {10^{6}}\)
    }
    ]
    \addplot [thick, red]
    table {%
    0.01 181.170684814453
    0.02 133.855880737305
    0.03 123.162231445312
    0.04 116.370750427246
    0.05 123.61043548584
    0.06 104.535087585449
    0.07 98.3118743896484
    0.08 98.60205078125
    0.09 93.9722900390625
    0.1 95.2687225341797
    0.11 85.3343124389648
    0.12 83.6117477416992
    0.13 85.9586563110352
    0.14 78.6683044433594
    0.15 74.3079605102539
    0.16 72.0888290405273
    0.17 77.8091659545898
    0.18 73.4287033081055
    0.19 72.0889511108398
    0.2 77.3866577148438
    0.21 66.0567169189453
    0.22 68.5534439086914
    0.23 65.0179672241211
    0.24 66.7331314086914
    0.25 62.5315055847168
    0.26 62.551399230957
    0.27 60.9723968505859
    0.28 59.436595916748
    0.29 58.6361389160156
    0.3 56.4673614501953
    0.31 55.6729431152344
    0.32 54.2639579772949
    0.33 52.8108215332031
    0.34 53.0099906921387
    0.35 51.9190979003906
    0.36 51.7803840637207
    0.37 50.8909034729004
    0.38 49.1069107055664
    0.39 48.7385520935059
    0.4 47.5291976928711
    0.41 46.6154174804688
    0.42 45.3658599853516
    0.43 45.2809143066406
    0.44 44.1082420349121
    0.45 43.0011367797852
    0.46 41.9924278259277
    0.47 41.0245018005371
    0.48 40.7452278137207
    0.49 39.1224517822266
    0.5 38.3772315979004
    0.51 37.641674041748
    0.52 36.9856147766113
    0.53 35.4440650939941
    0.54 35.1983261108398
    0.55 34.8605690002441
    0.56 34.1064758300781
    0.57 32.9464874267578
    0.58 32.2447776794434
    0.59 30.739200592041
    0.6 30.1946067810059
    0.61 29.5526313781738
    0.62 29.4174747467041
    0.63 28.1524066925049
    0.64 26.9600772857666
    0.65 26.0730400085449
    0.66 24.9251556396484
    0.67 24.3658218383789
    0.68 24.357551574707
    0.69 22.5032386779785
    0.7 22.3039016723633
    0.71 22.0489330291748
    0.72 20.2791290283203
    0.73 19.7330188751221
    0.74 19.1006164550781
    0.75 18.4120788574219
    0.76 17.2008361816406
    0.77 16.6683349609375
    0.78 17.1075801849365
    0.79 15.435453414917
    0.8 14.7518835067749
    0.81 14.1077671051025
    0.82 12.8226547241211
    0.83 12.6238822937012
    0.84 12.1201686859131
    0.85 11.183557510376
    0.86 10.3495101928711
    0.87 9.71383857727051
    0.88 9.09174346923828
    0.89 7.93989086151123
    0.9 7.10683584213257
    0.91 6.54107666015625
    0.92 5.47833681106567
    0.93 5.42826700210571
    0.94 4.56988859176636
    0.95 3.86432957649231
    0.96 2.98018598556519
    0.97 2.03940987586975
    0.98 1.45248878002167
    0.99 0.714033365249634
    1 9.67585454575599e-14
    };
    \addlegendentry{$\sL_{N, t}(\denoiserpmem)$}
    \addplot [thick, darkorange, dash pattern=on 2pt off 3.3pt]
    table {%
    0.01 99
    0.02 98
    0.03 97
    0.04 96
    0.05 95
    0.06 94
    0.07 93
    0.08 92
    0.09 91
    0.1 90
    0.11 89
    0.12 88
    0.13 87
    0.14 86
    0.15 85
    0.16 84
    0.17 83
    0.18 82
    0.19 81
    0.2 80
    0.21 79
    0.22 78
    0.23 77
    0.24 76
    0.25 75
    0.26 74
    0.27 73
    0.28 72
    0.29 71
    0.3 70
    0.31 69
    0.32 68
    0.33 67
    0.34 66
    0.35 65
    0.36 64
    0.37 63
    0.38 62
    0.39 61
    0.4 60
    0.41 59
    0.42 58
    0.43 57
    0.44 56
    0.45 55
    0.46 54
    0.47 53
    0.48 52
    0.49 51
    0.5 50
    0.51 49
    0.52 48
    0.53 47
    0.54 46
    0.55 45
    0.56 44
    0.57 43
    0.58 42
    0.59 41
    0.6 40
    0.61 39
    0.62 38
    0.63 37
    0.64 36
    0.65 35
    0.66 34
    0.67 33
    0.68 32
    0.69 31
    0.7 30
    0.71 29
    0.72 28
    0.73 27
    0.74 26
    0.75 25
    0.76 24
    0.77 23
    0.78 22
    0.79 21
    0.8 20
    0.81 19
    0.82 18
    0.83 17
    0.84 16
    0.85 15
    0.86 14
    0.87 13
    0.88 12
    0.89 11
    0.9 10
    0.91 9
    0.92 8
    0.93 6.99999999999999
    0.94 6.00000000000001
    0.95 5
    0.96 4
    0.97 3
    0.98 2
    0.99 1
    1 0
    };
    \addlegendentry{$\hat{\sL}_{N, t}(\denoiserpmem)$}
    \addplot [thick, goldenrod, dash pattern=on 7.4pt off 3.2pt]
    table {%
    -0.0394999999999999 9.68486155530393e-14
    1.0495 9.68486155530393e-14
    };
    \addlegendentry{$\sL_{N, t}(\denoisermem)$}
    \addplot [thick, green, dash pattern=on 7.4pt off 3.2pt]
    table {%
    -0.0394999999999999 6.47629642486574
    1.0495 6.47629642486574
    };
    \addlegendentry{$\sL_{N, t}(\denoisergen)$}
    \addplot [thick, teal, dash pattern=on 12.8pt off 3.2pt on 2pt off 3.2pt]
    table {%
    -0.0394999999999999 6.4900841712952
    1.0495 6.4900841712952
    };
    \addlegendentry{$\hat{\sL}_{N, t}(\denoisergen)$}
    \end{axis}
    
    \end{tikzpicture}
    \vspace{-0.5em}
    \caption{\textbf{We observe a remarkable
    degree of agreement between our loss approximations and the empirical
    losses.} A simulation of the loss of the partially memorizing and
    generalizing denoisers $\denoisergen$ and $\denoiserpmem$ at
    a high-SNR value of $t$, with $d=50$, $K=12$, and $N=200$, and compare them
    to the approximations introduced in 
    \Cref{thm:asymptotics-generalizing,thm:simplification_p_mem_scaling}.
    We use a constant $2$ for the $\Theta(\spcdot)$ expression appearing
    in \Cref{thm:simplification_p_mem_scaling}, which we can prove is also an upper-bound. }
    \vspace{-2em}
    \label{fig:train-loss-approx-verify}
\end{wrapfigure}
\paragraph{Training losses of memorizing and generalizing denoisers.} In what
follows, we are going to state our main theoretical results which characterize
the behavior of two denoisers of interest. More precisely, we compare the
training loss \eqref{eq:denoising-objective} computed for the $K$-parameter
generalizing denoiser $\denoisergen$ with the training loss obtained by an
$M$-parameter partially memorizing denoiser \eqref{eq:pmem-denoiser}, with $M
\geq K$. To do this we compute the high dimensional asymptotics of these
denoisers' excess training losses. Note that these asymptotics provide
approximations (annotated with hats, i.e., \(\hat{\sL}_{N, t}\)) which agree
well with experiments at even moderate dimensions (see
\Cref{fig:train-loss-approx-verify} and \Cref{sec:experiments}).  

We recall the standard re-expression of the objective in
\eqref{eq:denoising-objective} via the orthogonality principle: for any $t$, 
\begin{align}\label{eq:training-loss-ortho}
    \sL_{N, t}(\denoiser)
    &=
    \frac{1}{N} \sum_{i=1}^N \Esub*{X_t^i}{\norm*{
    \denoiser(t, X_t^i) - \denoisermem(t, X_t^i)
    }^2}
    +
    \frac{1}{N} \sum_{i=1}^N \Esub*{X_t^i}{\norm*{
    \denoisermem(t, X_t^i) - \vx^i
    }^2}, \\
    &= \frac{1}{N} \sum_{i=1}^N \Esub*{X_t^i}{\norm*{
    \denoiser(t, X_t^i) - \denoisermem(t, X_t^i)
    }^2} + \sL_{N, t}(\denoisermem),
\end{align}
where $\denoisermem$ is the memorizing denoiser
\eqref{eq:memorizing-denoiser-empirically-optimal} and $X_t^i$ is distributed as
\eqref{eq:noising_time_t} initialized with $\vx^i$, i.e., $X_t^i \equid \alpha_t \vx^i + \sigma_t Z$ where $Z \sim \mathcal{N}(\Zero, \vI)$.

Our first result fully characterizes the expected excess training loss of the
generalizing denoiser $\denoisergen$, given by \eqref{eq:model-class} with
$\theta = (\vmu_\star^1, \dots, \vmu_\star^K,
\sigma_\star^2)$, over a draw of the random i.i.d.\ sample
$(\vx^1, \dots, \vx^N)$ from $\pi_\star$, under an assumption that the cluster
centers $\vmu_\star^k$ are well-separated in $\ell^2$ distance. 
For simplicity, we state our results in the regime where $\max_k
\norm{\vmu_\star^k}^2 = O(d)$ and $\sigma_\star^2 = \Theta(1)$.\footnote{To motivate this regime, note that in intuitive
terms, images sampled from such a $\pi_\star$ are centered at a nominal image
whose pixel intensities are normalized to $[0, 1]$, and the variability due to
the Gaussian noise can change each pixel by a constant amount.}
Denote the signal-to-noise ratio (SNR) of the noising process
\eqref{eq:noising_time_t} by $\psi : (0, 1) \to (0, +\infty)$, where $\snr_t
= \alpha_t^2/\sigma_t^2$, which is assumed to be decreasing, and its inverse by $\snrinv \colon (0, +\infty) \to (0, 1)$.

\begin{theorem}{}{asymptotics-generalizing}
    Assume that $N = \mathrm{poly}(d)$,
    $\min_{k \neq k'}\, \norm{\vmu_{\star}^k - \vmu_{\star}^{k'}}^2
    = \Theta(d)$,
    $\max_{k}\, \norm{\vmu_\star^k}^2 = \Theta(d)$ 
    and $\sigma_\star^2 = \Theta(1)$. 
    Let $\kappa(d) = \snrinv(\Theta(\log(d)^2/d))$. We have that uniformly on $t \in [0, \kappa(d)]$
    \begin{equation}
        \Esub*{(\vx^i)_{i=1}^{N}}{\sL_{N, t}(\denoisergen) - \sL_{N, t}(\denoisermem)}
        = \Theta\left(\frac{d\sigma_\star^2 }{\snr_t \sigma_\star^2 + 1}\right).
    \end{equation}
    In particular, the leading-order coefficient for the right-hand side is \(1\).
\end{theorem}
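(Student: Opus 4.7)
The plan is to apply the orthogonality reformulation
\[
    \sL_{N,t}(\denoisergen) - \sL_{N,t}(\denoisermem) = \frac{1}{N}\sum_{i=1}^{N}\Esub*{X_{t}^{i}}{\bigl\|\denoisergen(t, X_{t}^{i}) - \denoisermem(t, X_{t}^{i})\bigr\|^{2}},
\]
and then exploit cluster separation to show that, in the SNR regime $\snr_{t} \geq \Theta(\log(d)^{2}/d)$ corresponding to $t \in [0, \kappa(d)]$, both softmax weight vectors concentrate on a single index. This collapses each denoiser to an explicit closed-form expression whose difference has a straightforward expected squared norm.

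The first step is to establish the softmax concentration. Condition on $\vx^{i} = \vmu_{\star}^{k^{*}(i)} + \sigma_{\star}\vveps^{i}$ with $\vveps^{i} \sim \sN(\Zero, \vI)$ and write $X_{t}^{i} = \alpha_{t}\vx^{i} + \sigma_{t}Z$. Expanding \eqref{eq:softmax-vector-mem} gives $w_{i}(X_{t}^{i}) = -\|Z\|^{2}/2$, while for $j \neq i$,
\[
    w_{j}(X_{t}^{i}) - w_{i}(X_{t}^{i}) = -\tfrac{1}{2}\snr_{t}\|\vx^{j} - \vx^{i}\|^{2} + (\alpha_{t}/\sigma_{t})\langle \vx^{j} - \vx^{i}, Z\rangle.
\]
Gaussian concentration of $\|\vx^{j}-\vx^{i}\|^{2}$ together with the separation assumption gives leading gap $\Theta(d\snr_{t})$ when $\vx^{j}$ is from a different cluster and $\Theta(d\snr_{t}\sigma_{\star}^{2})$ when it is same-cluster, with the Gaussian cross term being of smaller order $O(\sqrt{d\snr_{t}\log d})$. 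A union bound over $N = \poly(d)$ indices then yields $\softmax(\vw(X_{t}^{i}))_{i} = 1 - d^{-\omega(1)}$ on an event of probability $1 - d^{-\omega(1)}$, provided $d\snr_{t}\sigma_{\star}^{2} \gtrsim \log(d)^{2}$. An identical argument for $\denoisergen$ via \Cref{lemma:denoiser_mog} produces concentration on the correct cluster $k^{*}(i)$ over $K \leq N$ alternatives, with separating gap $\Theta(d\snr_{t}/(1+\snr_{t}\sigma_{\star}^{2}))$.

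On the resulting high-probability event, $\denoisermem(t, X_{t}^{i}) = \vx^{i} + o(1)$ and \Cref{lemma:denoiser_mog} specializes to
\[
    \denoisergen(t, X_{t}^{i}) = \frac{\alpha_{t}\sigma_{\star}^{2}}{\alpha_{t}^{2}\sigma_{\star}^{2} + \sigma_{t}^{2}}X_{t}^{i} + \frac{\sigma_{t}^{2}}{\alpha_{t}^{2}\sigma_{\star}^{2} + \sigma_{t}^{2}}\vmu_{\star}^{k^{*}(i)} + o(1).
\]
Substituting the decomposition of $X_{t}^{i}$, the $\vmu_{\star}^{k^{*}(i)}$ contributions cancel exactly and
\[
    \denoisergen(t, X_{t}^{i}) - \denoisermem(t, X_{t}^{i}) = -\frac{\sigma_{t}^{2}\sigma_{\star}}{\alpha_{t}^{2}\sigma_{\star}^{2} + \sigma_{t}^{2}}\vveps^{i} + \frac{\alpha_{t}\sigma_{t}\sigma_{\star}^{2}}{\alpha_{t}^{2}\sigma_{\star}^{2} + \sigma_{t}^{2}}Z + o(1).
\]
Taking the expected squared norm and using independence of $\vveps^{i}$ and $Z$ gives leading term
\[
    \frac{\sigma_{t}^{2}\sigma_{\star}^{2}(\sigma_{t}^{2} + \alpha_{t}^{2}\sigma_{\star}^{2})}{(\alpha_{t}^{2}\sigma_{\star}^{2} + \sigma_{t}^{2})^{2}}\,d = \frac{d\sigma_{\star}^{2}}{\snr_{t}\sigma_{\star}^{2} + 1},
\]
yielding both the claimed $\Theta$-order and the leading constant $1$.

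The main obstacle will be controlling the complement of the concentration event uniformly in $t \in [0, \kappa(d)]$, so that the total excluded contribution is $o(d\sigma_{\star}^{2}/(\snr_{t}\sigma_{\star}^{2}+1))$ at both extremes of the SNR range. I would handle this by combining a deterministic envelope $\|\denoisergen(t, X_{t}^{i}) - \denoisermem(t, X_{t}^{i})\|^{2} \lesssim d$ (using $\max_{k}\|\vmu_{\star}^{k}\|^{2} = O(d)$ and a polynomial-tail bound on $\|Z\|^{2}$) with the $d^{-\omega(1)}$ bad-event probability. The most delicate case is the boundary $t = \kappa(d)$: the $\log(d)^{2}$ scaling in the definition of $\kappa(d)$ is calibrated as the minimal slack beyond the $O(\log d)$ required by the union bound over $\poly(d)$ training indices, while at higher SNRs the exponentially decaying concentration tails easily dominate the polynomially decaying target.
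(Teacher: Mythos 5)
Your proposal is correct and follows essentially the same route as the paper. The paper introduces an explicit intermediate ``nominal denoiser''
\begin{equation}
\denoiser^{\nom}(t,\vx_t^i) = \frac{\alpha_t\sigma_\star^2}{\alpha_t^2\sigma_\star^2 + \sigma_t^2}\vx_t^i + \frac{\sigma_t^2}{\alpha_t^2\sigma_\star^2 + \sigma_t^2}\vmu_\star^{k_i},
\end{equation}
decomposes $\denoisergen - \denoisermem$ as $(\denoisergen - \denoiser^{\nom}) + (\denoiser^{\nom} - \vx^i) + (\vx^i - \denoisermem)$, controls the two softmax-residual terms via separate gap lemmas (\Cref{lemma:softmax-weights-mean-gap-control,lemma:softmax-weights-samples-gap-control}) fed into the $\ell^p$ one-sparse estimate (\Cref{lemma:softmax-1sparse-lp}), and then cleans up cross terms and bad events via Cauchy--Schwarz with fourth moments. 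Your version folds steps 2--4 into a single high-probability event where both softmaxes are nearly one-sparse and substitutes the closed forms directly; the middle-term computation, the $\log(d)^2/d$ calibration (which matches the paper's $\veps = \Theta(\log d / \sqrt d)$ in the Chernoff bound), and the bad-event treatment via $\|\cdot\|^4$-type moments all land in the same place. The one small imprecision is calling your envelope ``deterministic'' --- $\|\denoisergen - \denoisermem\|^2 \lesssim d$ requires $\|\vx_t^i\|^2 = O(d)$, which itself is only high-probability --- but you immediately acknowledge the need for a polynomial-tail bound, which is exactly how the paper proceeds (Cauchy--Schwarz against $\E\|\vx_t^i\|^4$).
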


Although \Cref{thm:asymptotics-generalizing} is stated for the excess
training loss, our proofs further establish high-probability bounds on the behavior of
the excess loss of the same order of magnitude. Moreover, in the scaling regime for
$\sigma_\star^2$ and $(\vmu_\star^k)_{k=1}^K$ treated in
\Cref{thm:asymptotics-generalizing}, it is easily shown that for
practically-relevant choices of $\alpha_t$ and $\sigma_t$, for example the
scheme $\alpha_t = \sqrt{1-t^2}$, $\sigma_t = t$ that we use in our experiments
in \Cref{sec:experiments}, $\kappa(d) \to 1$ as $d\to \infty$.
As a consequence, for sufficiently large $d$, the uniform control of the risk
for $t \in [0, \kappa(d)]$ that is established in
\Cref{thm:asymptotics-generalizing} implies uniform control of the weighted integrated
loss \eqref{eq:denoising-objective}, via \eqref{eq:training-loss-ortho}.

Our second result estimates the excess training loss of the partially
memorizing denoiser $\denoiserpmem$.

\begin{theorem}{}{simplification_p_mem_scaling}
    Assume that $N = \mathrm{poly}(d)$,
    $\min_{k \neq k'}\, \norm{\vmu_{\star}^k - \vmu_{\star}^{k'}}^2
    = \Theta(d)$,
    $\max_{k}\, \norm{\vmu_\star^k}^2 = \Theta(d)$ 
    and $\sigma_\star^2 = \Theta(1)$. 
    Let $\kappa(d) = \snrinv(\Theta(\log(d)^2/d))$. We have that uniformly on $t
    \in [0, \kappa(d)]$
    \begin{equation}
        \Esub*{(\vx^i)_{i=1}^{N}}{\sL_{N, t}(\denoiserpmem) - \sL_{N, t}(\denoisermem)}
        = \Theta\left(\left(1 - \frac{M}{N}\right)
        d \sigma_\star^2\right).
    \end{equation}
    In particular, the leading-order coefficient for the right-hand side is between \(1\) and \(2\).
\end{theorem}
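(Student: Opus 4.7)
The plan is to apply the orthogonality decomposition~\eqref{eq:training-loss-ortho} to reduce the excess training loss to
\[
\frac{1}{N}\sum_{i=1}^{N} \Esub*{X_{t}^{i}}{\|\denoiserpmem(t, X_{t}^{i}) - \denoisermem(t, X_{t}^{i})\|^{2}},
\]
and then analyze the pointwise integrand via a clean algebraic identity combined with softmax concentration on $t \in [0, \kappa(d)]$. The key identity is
\[
\denoiserpmem(t, \vx_{t}) - \denoisermem(t, \vx_{t}) = \bigl(1 - \rho(t, \vx_{t})\bigr)\bigl(\denoiserpmem(t, \vx_{t}) - \denoiser_{\mathrm{rest}}(t, \vx_{t})\bigr),
\]
where $\rho(t, \vx_{t}) = \sum_{j=1}^{M}\softmax(\vw(t,\vx_{t}))_{j}$ is the total softmax mass on the first $M$ indices when the softmax is taken over all $N$ of them, and $\denoiser_{\mathrm{rest}}(t, \vx_{t}) = \sum_{j=M+1}^{N} \vx^{j}\, e^{w_{j}}/\sum_{\ell=M+1}^{N} e^{w_{\ell}}$ is the memorizing denoiser restricted to the last $N-M$ samples. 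The identity follows from the direct computation $\denoisermem = \rho\denoiserpmem + (1-\rho)\denoiser_{\mathrm{rest}}$.

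I would then split the outer sum over $i$ into the cases $i \leq M$ and $i > M$. For $i \leq M$, I expect Gaussian concentration of the quadratic forms $w_{j}(t, X_{t}^{i})$ (via the same techniques that prove \Cref{thm:asymptotics-generalizing}) to yield $w_{i} - w_{j} \geq \tfrac{1}{2}\snr_{t}\|\vx^{i} - \vx^{j}\|^{2} - O(\sqrt{\snr_{t} d})$ for all $j \neq i$, with high probability. Combined with the pairwise-separation bound $\|\vx^{i} - \vx^{j}\|^{2} = \Omega(d\sigma_{\star}^{2})$ (regardless of whether the pair is in-cluster or out-of-cluster) and $\snr_{t} \geq \Theta(\log(d)^{2}/d)$ on $[0,\kappa(d)]$, this gives $1 - \rho(t, X_{t}^{i}) \leq N\exp(-\Omega(\log(d)^{2}))$ with high probability, so squaring and bounding $\denoiserpmem - \denoiser_{\mathrm{rest}}$ trivially by $O(d)$ makes the $i \leq M$ contribution super-polynomially smaller than the target scaling.

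For $i > M$, the same concentration argument applied to the two restricted softmaxes gives $\rho(t, X_{t}^{i}) \leq M\exp(-\Omega(\log(d)^{2}))$, so $(1-\rho)^{2} = 1 - o(1)$; moreover the dominant softmax index for $\denoiser_{\mathrm{rest}}$ is $i$ itself, so $\denoiser_{\mathrm{rest}}(t, X_{t}^{i}) \approx \vx^{i}$. The partial denoiser $\denoiserpmem(t, X_{t}^{i})$ is a softmax-weighted convex combination of $\{\vx^{j}\}_{j \leq M}$, and when $M$ is large enough for each of the $K$ clusters to be represented in $[M]$ (which holds with high probability once, say, $M \gtrsim K\log K$), the softmax mass lies predominantly on in-cluster neighbors of $\vx^{i}$. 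By high-dimensional Gaussian concentration, $\E{\|\vx^{i} - \vx^{j}\|^{2}} = 2d\sigma_{\star}^{2}(1 + o(1))$ for an independent same-cluster $\vx^{j}$, which yields the upper leading-coefficient $2$ at the peakiest end of the SNR range (small $t$). For larger $t$ within $[0, \kappa(d)]$, the softmax averages over many in-cluster samples so that $\denoiserpmem(t, X_{t}^{i}) \approx \vmu_{\star}^{k(i)}$; since $\E{\|\vx^{i} - \vmu_{\star}^{k(i)}\|^{2}} = d\sigma_{\star}^{2}$, this gives the lower leading-coefficient $1$.

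The main obstacle is making the softmax-concentration argument uniform over $t \in [0, \kappa(d)]$ and sharp enough to pin the leading constants between $1$ and $2$ rather than merely establishing $\Theta$-rates. The choice $\kappa(d) = \snrinv(\Theta(\log(d)^{2}/d))$ is precisely the threshold at which $\snr_{t} \cdot d\sigma_{\star}^{2} \gtrsim \log(d)^{2}$, which leaves enough slack for a $\poly(d)$-size union bound over the $(i, j)$ pairs indexing the softmax coordinates and for a standard continuity/chaining argument in $t$. Once the uniform softmax concentration is in place, summing the $(N-M)$ dominant contributions and dividing by $N$ yields the claimed scaling $\Theta\bigl((1 - M/N) d\sigma_{\star}^{2}\bigr)$, with the leading-order coefficient sandwiched between $1$ and $2$ as stated. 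A passage from the high-probability bounds to the expectation over $(\vx^{i})_{i=1}^{N}$ can be absorbed by using the trivial deterministic $O(d)$ bound on $\denoiserpmem - \denoiser_{\mathrm{rest}}$ on the low-probability exceptional event.
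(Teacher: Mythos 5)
Your plan is broadly aligned with the paper's actual proof and is essentially correct in its main reductions. Both start from the orthogonality identity~\eqref{eq:training-loss-ortho} to reduce the excess loss to $\frac{1}{N}\sum_{i}\Esub*{X_t^i}{\|\denoiserpmem(t,X_t^i)-\denoisermem(t,X_t^i)\|^2}$, both show the $i\le M$ terms are super-polynomially small via softmax concentration, both show $\denoisermem(t,X_t^i)\approx\vx^i$ for $i>M$ so the problem collapses to $\E\|\denoiserpmem(t,X_t^i)-\vx^i\|^2$, and both use a coupon-collector argument to ensure every cluster is represented among the first $M$ samples (the paper uses \Cref{prop:coupon_collector}, which requires $M \gtrsim (1+\log d)K\log K$ rather than just $K\log K$). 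Your algebraic decomposition $\denoisermem = \rho\,\denoiserpmem + (1-\rho)\,\denoiser_{\mathrm{rest}}$, hence $\denoiserpmem - \denoisermem = (1-\rho)(\denoiserpmem - \denoiser_{\mathrm{rest}})$, is correct and in fact tidier than what the paper does, which is a plain Cauchy--Schwarz split on $\denoiserpmem - \vx^i + \vx^i - \denoisermem$.

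The substantive difference is in how the leading constant gets pinned in $[1,2]$. You propose to classify regimes of $t$: a peaky-softmax regime producing $\approx 2 d\sigma_\star^2$ and a spread-out regime producing $\approx d\sigma_\star^2$. The paper avoids this via \Cref{prop:control_softmax_norm}, a regime-free statement: for \emph{any} positive-weight softmax over mean-zero same-cluster samples, $\|\sum_j \softmax(\vw)_j \vx^j - \vx^n\|^2 \in [d\sigma_\star^2(1-3\veps),\ 2d\sigma_\star^2(1+3\veps)]$ with high probability. The upper bound is Jensen plus a union-bounded chi-square tail, and the lower bound is an inner-product expansion plus Gaussian concentration; crucially, both bounds are pointwise in the weights and therefore automatically handle the correlation between the data-dependent softmax weights and the samples. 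If you pursue the regime-based route, you need to address exactly this correlation: the nearest in-cluster neighbor $\vx^{j^\star}$ chosen by the softmax is not independent of $X_t^i$, so the statement $\|\vx^i - \vx^{j^\star}\|^2 = 2d\sigma_\star^2(1+o(1))$ must come from a high-probability bound uniform over all $j$ (which does hold, by a $\poly(d)$-size union bound), not from treating $\vx^{j^\star}$ as a fresh independent sample. Also, no chaining in $t$ is actually needed: the paper gets uniformity on $[0,\kappa(d)]$ simply because the coupling conditions and residual bounds are satisfied at every such $t$ with the single choice $\veps = \Theta(\log(d)/\sqrt{d})$.
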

 Note that by following the proof, we can also derive a corresponding high-probability bound.

\paragraph{When does each denoiser have lower training loss?} Comparing \Cref{thm:asymptotics-generalizing} and \Cref{thm:simplification_p_mem_scaling}, in high dimensions (\(d \to \infty\)) we estimate the training loss difference as
\begin{equation}
    \scalebox{0.975}{\(\compressstyle\Esub*{(\vx^i)_{i=1}^{N}}{\sL_{N}(\denoiserpmem, \lambda) - \sL_{N}(\denoisergen, \lambda)} \approx \Esub*{t}{\lambda(t)\left\{C\left(1 - \frac{M}{N}\right) d \sigma_\star^2 - \frac{d\sigma_\star^2}{\snr_t \sigma_\star^2 + 1}\right\}},\)}
\end{equation}
where \(C \in [1, 2]\) is a constant. Notice that this expression is monotonically decreasing as \(M \to N\), so there exists a ``crossover point'' \(M_{\star}\) such that the (approximate) training loss of the partially memorizing denoiser is higher than that of the generalizing denoiser for \(M \leq M_{\star}\) and lower for \(M \geq M_{\star}\). Solving for \(M_{\star}\), we obtain 
\begin{equation}\label{eq:crossover-point}
    \scalebox{0.975}{\(\compressstyle\Esub*{(\vx^i)_{i=1}^{N}}{\sL_{N}(\denoiser_{\pmem, M_{\star}}, \lambda) - \sL_{N}(\denoisergen, \lambda)} \approx 0 \implies M_{\star} \approx N\left\{1 - \frac{\Esub*{t}{\lambda(t)/(\snr_{t}\sigma_{\star}^{2} + 1)}}{C\Esub*{t}{\lambda(t)}}\right\},\)}
\end{equation}
which is a \textit{linear} function of \(N\). This criterion provides a \textit{test} for our hypothesis: if we believe that there exists a loss weighting \(\lambda\) such that the memorization and generalization properties of trained denoisers are controlled by the location of the crossover point w.r.t.~\(\lambda\), then we should see an approximately linear relationship between the location of memorization and the number of samples. We conduct this test in \Cref{sec:experiments}, in the immediate sequel.

\section{Experiments}
\label{sec:experiments}

In this section, we illustrate the versatility and efficacy of our laboratory through experimental analysis. First, we show that inside the setting of our laboratory, \textit{trained} models exhibit a \textit{phase transition from generalization to memorization}. Specifically in the isotropic Gaussian mixture setting, our main result builds a \textit{predictive model} for the onset of memorization, and shows that it resolves to a simple linear fit as in \eqref{eq:crossover-point}, which \textit{validates our hypothesis} from \Cref{sec:training_losses}. Finally, we showcase the flexibility of our laboratory by studying a low-rank Gaussian mixture setting which is constructed to imitate the structure of training a denoiser on natural images, yet still belongs to the setting of our theoretical laboratory and exhibits similar phase transition behavior as in the simple isotropic case.  
We provide further details and more results, including mechanistic analyses, in \Cref{app:experiments}.

\paragraph{High-level experiment details, and memorization criterion.} For training, we use the training loss \eqref{eq:denoising-objective} with the loss weighting \(\lambda(t) = (\alpha_{t}/\sigma_{t})^{2}\) (i.e., ``noise prediction''). For sampling, we use the DDIM sampler \citep{song2020denoising}, more precisely the implementation suggested by \citep{De-Bortoli2025-ng}, with the ``variance preserving'' coefficients \(\alpha_{t} = \sqrt{1 - t^{2}}\) and \(\sigma_{t} = t\). We discretize the time interval \([\epsilon, 1-\epsilon]\) uniformly into \(L+1\) timesteps \((t_{\ell})_{\ell = 0}^{L}\), where \(\epsilon = 10^{-3}\), and use these timesteps for both training and sampling. For completeness we formally describe the end-to-end procedure in \Cref{app:sub_sampling}.  
We consider the memorization criterion introduced in \Cref{def:mem} with constant \(c = 1/9\), i.e., \(\hat{\vx}\) is memorized if \(\norm{\hat{\vx} - \vx^{(1)}}^{2} \leq (1/9)\norm{\hat{\vx} - \vx^{(2)}}^{2}\) where \(\hat{\vx}\) is the generated sample, and \(\vx^{(k)}\) is the \(k^{\mathrm{th}}\) closest point to \(\hat{\vx}\) in the training dataset. Then, we say that a denoiser \(\bar{\vx}\) is \textit{memorizing on average} if its \textit{memorization ratio} is \(\geq 1/2\), i.e., at least 50\% of the samples \(\hat{\vx}\) drawn from the associated output measure \(\hat{\pi}\) are memorized; we say it has \textit{started the phase transition} if its memorization ratio is \(\geq 1/10\) and \textit{ended the phase transition} if its memorization ratio is \(\geq 9/10\).

\subsection{Experiments for an Isotropic Gaussian Mixture Model} 
\label{sub:experiments_gmm}

First, as in \Cref{sec:training_losses}, we consider the target measure \(\pi_{\star}\) to be an isotropic Gaussian mixture model, namely, \(\pi_{\star} = (1/K)\sum_{i = 1}^{K}\mathcal{N}(\vmu_{\star}^{i}, \sigma_{\star}^{2}\vI)\). As emphasized in the rest of the work, we will study what happens when we use a denoiser \(\bar{\vx}_{\theta}\) corresponding to an isotropic Gaussian mixture model with a different number of components \(M\), with the parameterization given by \eqref{eq:model-class}. %

\begin{figure}
    \centering

    \resizebox{\textwidth}{!}{
        \makebox{
            \begin{tikzpicture}

    \definecolor{darkgray176}{RGB}{176,176,176}
    \definecolor{gray}{RGB}{128,128,128}
    
    \begin{axis}[
    width=2.875cm,
    height=3cm,
    scale only axis,
    tick pos=both,
    title={Memorization Ratio},
    x grid style={darkgray176},
    xlabel={\(\displaystyle M/N\)},
    xmin=0.055, xmax=1.045,
    xtick style={color=black},
    xtick={0,0.2,0.4,0.6,0.8,1,1.2},
    xticklabels={
      \(\displaystyle {0.0}\),
      \(\displaystyle {0.2}\),
      \(\displaystyle {0.4}\),
      \(\displaystyle {0.6}\),
      \(\displaystyle {0.8}\),
      \(\displaystyle {1.0}\),
      \(\displaystyle {1.2}\)
    },
    y grid style={darkgray176},
    ylabel={Ratio},
    ymin=-0.05, ymax=1.05,
    ytick style={color=black},
    ytick={-0.2,0,0.2,0.4,0.6,0.8,1,1.2},
    yticklabels={
      \(\displaystyle {\ensuremath{-}0.2}\),
      \(\displaystyle {0.0}\),
      \(\displaystyle {0.2}\),
      \(\displaystyle {0.4}\),
      \(\displaystyle {0.6}\),
      \(\displaystyle {0.8}\),
      \(\displaystyle {1.0}\),
      \(\displaystyle {1.2}\)
    }
    ]
    \addplot [thick, blue, mark=*, mark size=1, mark options={solid}]
    table {%
    0.1 0
    0.2 0
    0.3 0
    0.4 0
    0.5 0
    0.6 0.0199999995529652
    0.64 0.0199999995529652
    0.68 0.0399999991059303
    0.7 0.140000000596046
    0.72 0.0999999940395355
    0.76 0.239999994635582
    0.8 0.560000002384186
    0.84 0.620000004768372
    0.88 0.799999952316284
    0.9 0.759999990463257
    0.92 0.919999957084656
    0.96 1
    1 1
    };
    \fill [gray!50!white, opacity=0.3] (axis cs:0.6,-0.05) rectangle (axis cs:1,1.05);
    \end{axis}
    
    \end{tikzpicture}
            \begin{tikzpicture}

    \definecolor{darkgray176}{RGB}{176,176,176}
    \definecolor{goldenrod}{RGB}{218,165,32}
    \definecolor{gray}{RGB}{128,128,128}
    \definecolor{green}{RGB}{0,128,0}
    
    \begin{axis}[
    width=5cm,
    height=3cm,
    scale only axis,
    legend cell align={left},
    legend style={
      fill opacity=0.8,
      draw opacity=1,
      text opacity=1,
      at={(0.0,0.02)},
      anchor=south west,
      draw=none,
      scale=0.75,
      font=\footnotesize
    },
    tick pos=both,
    title={Training Loss, \(\displaystyle \psi=6.704\)},
    x grid style={darkgray176},
    xlabel={\(\displaystyle M/N\)},
    xmin=-0.0395, xmax=1.0495,
    xtick style={color=black},
    xtick={-0.2,0,0.2,0.4,0.6,0.8,1,1.2},
    xticklabels={
      \(\displaystyle {\ensuremath{-}0.2}\),
      \(\displaystyle {0.0}\),
      \(\displaystyle {0.2}\),
      \(\displaystyle {0.4}\),
      \(\displaystyle {0.6}\),
      \(\displaystyle {0.8}\),
      \(\displaystyle {1.0}\),
      \(\displaystyle {1.2}\)
    },
    y grid style={darkgray176},
    ylabel={\(\displaystyle \mathcal{L}_{N, t}\)},
    ymin=0, ymax=10,
    ytick style={color=black},
    ]
    \addplot [thick, red]
    table {%
    0.01 181.170684814453
    0.02 133.855880737305
    0.03 123.162231445312
    0.04 116.370750427246
    0.05 123.61043548584
    0.06 104.535087585449
    0.07 98.3118743896484
    0.08 98.60205078125
    0.09 93.9722900390625
    0.1 95.2687225341797
    0.11 85.3343124389648
    0.12 83.6117477416992
    0.13 85.9586563110352
    0.14 78.6683044433594
    0.15 74.3079605102539
    0.16 72.0888290405273
    0.17 77.8091659545898
    0.18 73.4287033081055
    0.19 72.0889511108398
    0.2 77.3866577148438
    0.21 66.0567169189453
    0.22 68.5534439086914
    0.23 65.0179672241211
    0.24 66.7331314086914
    0.25 62.5315055847168
    0.26 62.551399230957
    0.27 60.9723968505859
    0.28 59.436595916748
    0.29 58.6361389160156
    0.3 56.4673614501953
    0.31 55.6729431152344
    0.32 54.2639579772949
    0.33 52.8108215332031
    0.34 53.0099906921387
    0.35 51.9190979003906
    0.36 51.7803840637207
    0.37 50.8909034729004
    0.38 49.1069107055664
    0.39 48.7385520935059
    0.4 47.5291976928711
    0.41 46.6154174804688
    0.42 45.3658599853516
    0.43 45.2809143066406
    0.44 44.1082420349121
    0.45 43.0011367797852
    0.46 41.9924278259277
    0.47 41.0245018005371
    0.48 40.7452278137207
    0.49 39.1224517822266
    0.5 38.3772315979004
    0.51 37.641674041748
    0.52 36.9856147766113
    0.53 35.4440650939941
    0.54 35.1983261108398
    0.55 34.8605690002441
    0.56 34.1064758300781
    0.57 32.9464874267578
    0.58 32.2447776794434
    0.59 30.739200592041
    0.6 30.1946067810059
    0.61 29.5526313781738
    0.62 29.4174747467041
    0.63 28.1524066925049
    0.64 26.9600772857666
    0.65 26.0730400085449
    0.66 24.9251556396484
    0.67 24.3658218383789
    0.68 24.357551574707
    0.69 22.5032386779785
    0.7 22.3039016723633
    0.71 22.0489330291748
    0.72 20.2791290283203
    0.73 19.7330188751221
    0.74 19.1006164550781
    0.75 18.4120788574219
    0.76 17.2008361816406
    0.77 16.6683349609375
    0.78 17.1075801849365
    0.79 15.435453414917
    0.8 14.7518835067749
    0.81 14.1077671051025
    0.82 12.8226547241211
    0.83 12.6238822937012
    0.84 12.1201686859131
    0.85 11.183557510376
    0.86 10.3495101928711
    0.87 9.71383857727051
    0.88 9.09174346923828
    0.89 7.93989086151123
    0.9 7.10683584213257
    0.91 6.54107666015625
    0.92 5.47833681106567
    0.93 5.42826700210571
    0.94 4.56988859176636
    0.95 3.86432957649231
    0.96 2.98018598556519
    0.97 2.03940987586975
    0.98 1.45248878002167
    0.99 0.714033365249634
    1 9.67585454575599e-14
    };
    \addlegendentry{$\sL_{N, t}(\denoiserpmem)$}
    \addplot [thick, goldenrod, dash pattern=on 7.4pt off 3.2pt]
    table {%
    -0.0395 9.68486155530393e-14
    1.0495 9.68486155530393e-14
    };
    \addlegendentry{$\sL_{N, t}(\denoisermem)$}
    \addplot [thick, green, dash pattern=on 7.4pt off 3.2pt]
    table {%
    -0.0395 6.47629642486574
    1.0495 6.47629642486574
    };
    \addlegendentry{$\sL_{N, t}(\denoisergen)$}
    \addplot [thick, blue, mark=*, mark size=1, mark options={solid}]
    table {%
    0.1 6.37981796264648
    0.2 6.20295000076294
    0.3 6.03699111938477
    0.4 5.81414318084717
    0.5 5.52396059036255
    0.6 5.1610255241394
    0.64 4.92577171325684
    0.68 4.76335334777832
    0.7 4.61235618591309
    0.72 4.44972991943359
    0.76 4.2041711807251
    0.8 3.79291558265686
    0.84 3.29685664176941
    0.88 2.7765588760376
    0.9 2.4839882850647
    0.92 2.11749267578125
    0.96 1.27712404727936
    1 2.21679408163311e-09
    };
    \addlegendentry{$\sL_{N, t}(\denoiser_{\theta})$}
    \fill [gray!50!white, opacity=0.3] (axis cs:0.6,1.54080320277688e-14) rectangle (axis cs:1,1055.21813775424);
    \end{axis}
    
    \end{tikzpicture}
            \begin{tikzpicture}

    \definecolor{darkgray176}{RGB}{176,176,176}
    \definecolor{goldenrod}{RGB}{218,165,32}
    \definecolor{gray}{RGB}{128,128,128}
    \definecolor{green}{RGB}{0,128,0}
    
    \begin{axis}[
    width=5cm,
    height=3cm,
    scale only axis,
    legend cell align={left},
    legend style={
      fill opacity=0.8,
      draw opacity=1,
      text opacity=1,
      at={(0,0.05)},
      anchor=south west,
      draw=none,
      scale=0.75,
      font=\footnotesize
    },
    tick pos=both,
    title={Test Loss, \(\displaystyle \psi=6.704\)},
    x grid style={darkgray176},
    xlabel={\(\displaystyle M/N\)},
    xmin=-0.0395, xmax=1.0495,
    xtick style={color=black},
    xtick={-0.2,0,0.2,0.4,0.6,0.8,1,1.2},
    xticklabels={
      \(\displaystyle {\ensuremath{-}0.2}\),
      \(\displaystyle {0.0}\),
      \(\displaystyle {0.2}\),
      \(\displaystyle {0.4}\),
      \(\displaystyle {0.6}\),
      \(\displaystyle {0.8}\),
      \(\displaystyle {1.0}\),
      \(\displaystyle {1.2}\)
    },
    y grid style={darkgray176},
    ylabel={\(\displaystyle \mathcal{L}_{t}\)},
    ymin=0, ymax=100,
    ytick style={color=black},
    ]
    \addplot [thick, red]
    table {%
    0.01 182.648956298828
    0.02 138.235626220703
    0.03 130.79035949707
    0.04 121.793327331543
    0.05 128.342742919922
    0.06 115.030586242676
    0.07 104.186859130859
    0.08 105.444831848145
    0.09 103.654624938965
    0.1 104.414688110352
    0.11 96.5981979370117
    0.12 93.131217956543
    0.13 93.7290649414062
    0.14 91.0798721313477
    0.15 89.1186599731445
    0.16 84.5350494384766
    0.17 93.0750732421875
    0.18 89.8771362304688
    0.19 86.3851547241211
    0.2 94.2025756835938
    0.21 84.696159362793
    0.22 84.9070587158203
    0.23 85.5024185180664
    0.24 86.8489608764648
    0.25 81.2818984985352
    0.26 82.7845611572266
    0.27 82.2338180541992
    0.28 81.298828125
    0.29 80.6306381225586
    0.3 79.9975891113281
    0.31 80.6664352416992
    0.32 79.3000946044922
    0.33 78.6838607788086
    0.34 79.8263397216797
    0.35 77.5668182373047
    0.36 79.1170120239258
    0.37 79.8742828369141
    0.38 76.3500061035156
    0.39 77.6649475097656
    0.4 79.0714416503906
    0.41 77.2842330932617
    0.42 76.9665603637695
    0.43 77.9435501098633
    0.44 77.3519287109375
    0.45 78.3345565795898
    0.46 76.1928405761719
    0.47 76.5116348266602
    0.48 75.4101486206055
    0.49 75.4235534667969
    0.5 77.8530807495117
    0.51 74.6759643554688
    0.52 76.5148010253906
    0.53 73.9760665893555
    0.54 76.9460220336914
    0.55 73.607551574707
    0.56 74.1822814941406
    0.57 75.8538055419922
    0.58 73.1226501464844
    0.59 73.8936309814453
    0.6 75.2544326782227
    0.61 74.2612762451172
    0.62 75.2601165771484
    0.63 74.4327774047852
    0.64 73.9657745361328
    0.65 72.3457794189453
    0.66 73.536247253418
    0.67 73.299201965332
    0.68 74.4795532226562
    0.69 73.5752410888672
    0.7 73.9506149291992
    0.71 72.0766296386719
    0.72 73.2578277587891
    0.73 72.8269882202148
    0.74 72.0265197753906
    0.75 72.3651962280273
    0.76 72.4191207885742
    0.77 71.8012924194336
    0.78 71.1602783203125
    0.79 71.6493911743164
    0.8 72.5336685180664
    0.81 71.4257431030273
    0.82 72.2184524536133
    0.83 72.2934722900391
    0.84 70.7857666015625
    0.85 71.5216827392578
    0.86 71.0553283691406
    0.87 70.759391784668
    0.88 70.9833374023438
    0.89 70.896110534668
    0.9 70.8670196533203
    0.91 70.9092102050781
    0.92 71.4205474853516
    0.93 70.654052734375
    0.94 70.7886657714844
    0.95 70.7557220458984
    0.96 70.4239349365234
    0.97 70.7647323608398
    0.98 70.4163208007812
    0.99 70.4398345947266
    1 70.2450637817383
    };
    \addlegendentry{$\sL_{t}(\denoiserpmem)$}
    \addplot [thick, goldenrod, dash pattern=on 7.4pt off 3.2pt]
    table {%
    -0.0395 70.3246459960937
    1.0495 70.3246459960937
    };
    \addlegendentry{$\sL_{t}(\denoisermem)$}
    \addplot [thick, green, dash pattern=on 7.4pt off 3.2pt]
    table {%
    -0.0395 6.49759244918823
    1.0495 6.49759244918823
    };
    \addlegendentry{$\sL_{t}(\denoisergen)$}
    \addplot [thick, blue, mark=*, mark size=1, mark options={solid}]
    table {%
    0.1 6.58203268051147
    0.2 6.75295495986938
    0.3 6.97911357879639
    0.4 7.45148324966431
    0.5 8.18113040924072
    0.6 9.52731895446777
    0.64 10.5725078582764
    0.68 11.3796195983887
    0.7 12.1607332229614
    0.72 13.0520534515381
    0.76 14.8885374069214
    0.8 17.8848819732666
    0.84 22.1123580932617
    0.88 28.0964393615723
    0.9 31.2064895629883
    0.92 35.3720779418945
    0.96 47.515811920166
    1 70.433837890625
    };
    \addlegendentry{$\sL_{t}(\denoiser_{\theta})$}
    \fill [gray!50!white, opacity=0.3] (axis cs:0.6,0.0) rectangle (axis cs:1,215.804467612237);
    \end{axis}
    
    \end{tikzpicture}
        }
    }

    \caption{\textbf{We observe a clear phase transition between from generalization to memorization in trained models.} \textit{Left:} A plot of the ``memorization ratio'' --- the ratio of generated samples that are memorized over a set of \(N_{\mathrm{eval}} = 50\) generations --- as the number of components of the trained denoiser increases. The start and end of the phase transition, as formally defined at the beginning of \Cref{sec:experiments}, are bracketed by the gray shaded region. \textit{Middle:} A plot of the training loss of the trained model in terms of the number of components, compared to partially memorizing, fully memorizing, and ground truth denoisers, at a representative \(t \approx 0.3\) with SNR \(\snr_{t} \approx 6.704\). \textit{Right:} A plot of the test loss \(\mathcal{L}_{t}\), estimated over a hold-out set, in the same setting. \textit{We  observe that there is a phase transition from generalization to memorization, visible through the behavior of the memorization ratio and loss plots}; the model stops generalizing (i.e., having an acceptable test loss) only when it starts memorizing (i.e., generating samples which are approximately contained in the training data).}
    \label{fig:memorization_phase_transition_loss}
    \vspace{-1.5em}
\end{figure}
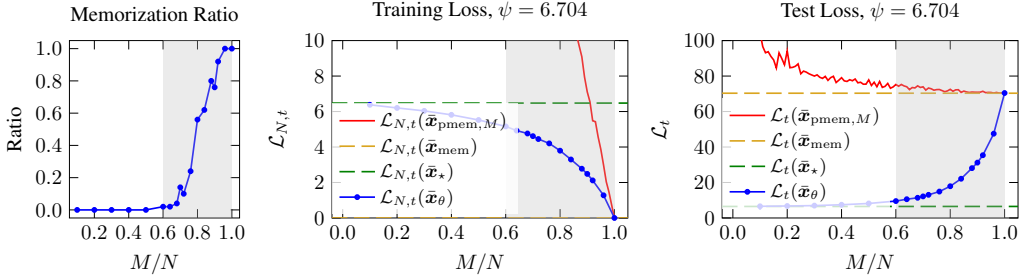

\begin{wrapfigure}{r}{2.5in}
    \vspace{-1.5em}
    \centering 
    \begin{tikzpicture}

\definecolor{darkgray176}{RGB}{176,176,176}
\definecolor{teal1293165}{RGB}{12,93,165}

\begin{axis}[
    width=1.5in,
    height=1.75cm,
    scale only axis,
log basis x={10},
tick pos=both,
title={Loss Weighting},
x grid style={darkgray176},
xlabel={\(\displaystyle \psi\)},
xmin=0.000735795363068942, xmax=2722173.25256142,
xmode=log,
xtick style={color=black},
xtick={1e-06,0.0001,0.01,1,100,10000,1000000,100000000,10000000000},
xticklabels={
  \(\displaystyle {10^{-6}}\),
  \(\displaystyle {10^{-4}}\),
  \(\displaystyle {10^{-2}}\),
  \(\displaystyle {10^{0}}\),
  \(\displaystyle {10^{2}}\),
  \(\displaystyle {10^{4}}\),
  \(\displaystyle {10^{6}}\),
  \(\displaystyle {10^{8}}\),
  \(\displaystyle {10^{10}}\)
},
y grid style={darkgray176},
ylabel={\(\displaystyle \tilde{\lambda}\)},
ymin=0.152767486125231, ymax=0.907359131425619,
ytick style={color=black},
ytick={
0.1,
0.2,
0.4,
0.6,
0.8,
1
},
yticklabels={
  \(\displaystyle {0.1}\),
  \(\displaystyle {0.2}\),
  \(\displaystyle {0.4}\),
  \(\displaystyle {0.6}\),
  \(\displaystyle {0.8}\),
  \(\displaystyle {1.0}\)
}
]
\addplot [teal1293165]
table {%
0.00200296496041119 0.384152472019196
0.0871520787477493 0.400420159101486
0.183634877204895 0.420388728380203
0.293555736541748 0.445237457752228
0.419530898332596 0.476641178131104
0.564846515655518 0.516896486282349
0.733673274517059 0.569263994693756
0.931372106075287 0.637539625167847
1.16492128372192 0.723576009273529
1.44354367256165 0.817021071910858
1.77963066101074 0.873059511184692
2.19014239311218 0.816907703876495
2.69879412651062 0.688570499420166
3.33955407142639 0.564615249633789
4.16247320175171 0.465678185224533
5.2437539100647 0.391677170991898
6.70406150817871 0.33677265048027
8.74368858337402 0.295649349689484
11.7151069641113 0.264496803283691
16.2861194610596 0.240740552544594
23.8506736755371 0.222644239664078
37.7325706481934 0.209055483341217
67.5730895996094 0.199142768979073
152.019729614258 0.192393571138382
596.212341308594 0.188411310315132
999998.75 0.187067106366158
};
\end{axis}

\end{tikzpicture}
    \vspace{-0.75em}
    \caption{\textbf{We can accurately predict the phase transition using our loss approximations.} The optimal loss weighting as per \eqref{eq:weighting_regression_problem} using normalized \textit{approximate losses}. 
    Train and test errors are \(\leq 2 \times 10^{-4}\) when the regression targets are  \( \approx 10^{0}\).%
    }
    \label{fig:phase_transition_predictable_approximation}
    \vspace{-1.5em}
\end{wrapfigure}
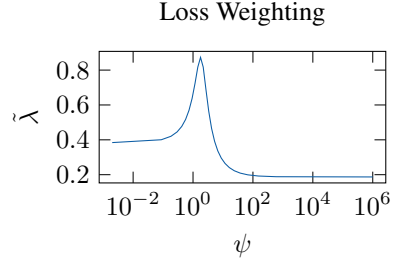
\paragraph{Existence of a phase transition.} The first and arguably most critical property of the trained denoisers in our laboratory is that \textit{there exists a phase transition from generalization to memorization as the model size increases}, which we show in \Cref{fig:memorization_phase_transition_loss}. Namely, we observe in the rightmost panel that initially, heavily underparameterized trained denoisers exhibit statistical generalization (i.e., similar train and test loss to the ground truth denoiser \(\denoisergen\)); then, as the model size \(M\) increases, we observe a rapid transition to \textit{memorization} where the generated samples are memorized (nearly) 100\% of the time (center and left panels). We emphasize that this replicates the central observations of many empirical studies of memorization in diffusion models, such as \citet{zhang2023emergence,kadkhodaie2023generalization}.%

\paragraph{Predicting the phase transition.} Next, we show (via \Cref{fig:phase_transition_predictable_approximation}) that the phase transition is \textit{predictable} using \textit{only the approximate training losses derived in \Cref{sec:training_losses}}; namely, it serves as the ``crossover point'' of the integrated approximate training loss $\hat{\sL}_{N}(\cdot, \tilde{\lambda})$ for a particular timestep weighting \(\tilde{\lambda}(t)\), and this crossover point is \textit{a linear function} of the number of training samples \(N\). To show this, we compute \(\tilde{\lambda}(t)\) as follows. First, we create a grid of \((N, d, K)\) tuples and train several denoisers in each setting in order to estimate the location of the phase transition \(M_{\mathrm{pt}}(N, d, K)\), the first $M$ such that the trained denoiser \(\bar{\vx}_{\theta}\) is memorizing on average. Then, we solve the following optimization problem in the loss weighting \(\tilde{\lambda}\)  to make \(M_{\mathrm{pt}}(N, d, K)\) close to the crossover point:
\begin{equation}\label{eq:weighting_regression_problem}
    \compressstyle\min_{\tilde{\lambda}}\sum_{(N, d, K)}\left(\frac{\tilde{M}_{\mathrm{pt}}(N, d, K, \tilde{\lambda})}{N} - \frac{M_{\mathrm{pt}}(N, d, K)}{N}\right)^{2} ,
\end{equation}
where \(\tilde{M}_{\mathrm{pt}}(N, d, K, \tilde{\lambda})\) is the nearest \(M\) to the crossover point with loss weighting \(\tilde{\lambda}\), i.e.,
\begin{equation}
\compressstyle
    \resizebox{0.925\textwidth}{!}{\(\compressstyle \tilde{M}_{\mathrm{pt}}(N, d, K, \tilde{\lambda}) = \argmin_{M}\left(\sum_{\ell = 0}^{L}\tilde{\lambda}(t_{\ell})\{\hat{\sL}_{N, t}(\denoiserpmem(N, d, K)) - \hat{\sL}_{N, t}(\denoisergen(N, d, K))\}\right)^{2}.\)}
\end{equation}
The optimization and normalization details are postponed to \Cref{app:sub_loss_weighting}.%
In \Cref{fig:phase_transition_predictable_approximation}, we report that the train error and test error (evaluated on a holdout set) are less than \(2 \times 10^{-4}\), signifying that we are able to compute the location of the memorization phase transition within one or two \(M\)'s on average, making our predictive model extremely accurate. %
Moreover, the recovered \(\tilde{M}_{\mathrm{pt}}\) is always a \textit{linear function} of \(N\), namely \(\tilde{M}_{\mathrm{pt}}(N, d, K, \tilde{\lambda}) = (4/5)N\). Therefore, \(M_{\mathrm{pt}}(N, d, K) \approx \tilde{M}_{\mathrm{pt}}(N, d, K, \tilde{\lambda}) = (4/5)N\), demonstrating experimentally that the consequences of our hypothesis (e.g., \eqref{eq:crossover-point}) do indeed hold.
Overall, \textit{the generalization-memorization phase transition is effectively predictable via our hypothesis and subsequent theoretical characterization of the loss}.

\subsection{Experiments for a Simple Image Model}\label{sub:experiments_image}

In the previous \Cref{sub:experiments_gmm}, we considered data which belonged to an isotropic Gaussian mixture model. To showcase the versatility of our Gaussian mixture model formulation, we construct a (low-rank) Gaussian mixture model whose samples resemble natural images. 
Namely, we consider a flattened and monochromatic $d \times d$ image (``template'') \(\bm{x}_{\star} \in \bbR^{d^{2}}\). Now, we endow \(\bm{x}_{\star}\) with a \textit{color} \(\bm{c}\), sampled randomly as \(\bm{c} \sim \mathcal{N}(\vu_{\star}, \sigma_{\star}^{2}\bm{I}) \in \bbR^{c}\). The colored output \(\bm{y} \in \bbR^{cd^{2}}\) %
is given by \(\vy = \vc \kron \vx_{\star}\)
where \(\kron\) is the Kronecker product. Defining the matrix \(\bm{A}_{\bm{x}} := \bm{I} \kron \bm{x}\) we have \(\bm{y} = \bm{A}_{\bm{x}_{\star}}\bm{c}\). The colored image \(\bm{Y} \in \bbR^{c \times d \times d}\) is obtained by reshaping \(\bm{y}\).  Since \(\bm{c} \sim \mathcal{N}(\vu_{\star}, \sigma_{\star}^{2}\vI)\) and \(\bm{y} = \bm{A}_{\bm{x}_{\star}}\bm{c}\), it holds that \(\bm{y} \sim \mathcal{N}(\bm{A}_{\bm{x}_{\star}}\vu_{\star}, \sigma_{\star}^{2}\bm{A}_{\bm{x}_{\star}}\bm{A}_{\bm{x}_{\star}}^{\top})\).
Note, in particular, that \(\bm{y}\) is a low-rank Gaussian random variable. 
By combining multiple templates and color distributions, we obtain a mixture of low-rank Gaussians for our data distribution, namely, \(\bm{y} \sim \pi_{\star} := (1/K)\sum_{i = 1}^{K}\mathcal{N}(\bm{A}_{\star}^{i}\vu_{\star}^{i}, \sigma_{\star}^{2}\bm{A}_{\star}^{i}(\bm{A}_{\star}^{i})^{\top})\). This is an instance of \Cref{eq:ground-truth-pi-anisotropic}, and so we can compute its denoiser via \Cref{lemma:denoiser_mog}. In \Cref{app:sub_image_model_simplified} we discuss some ways to efficiently implement this class of low-rank denoisers. We visualize some samples in \Cref{fig:image_data_samples} with FashionMNIST templates \citep{xiao2017fashion}.

\begin{figure}
\centering
    \includegraphics[width=.85\textwidth]{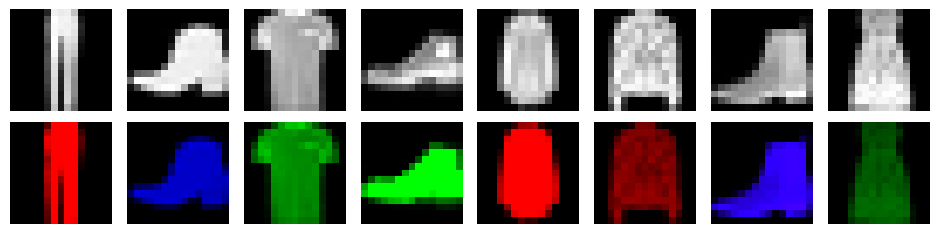}
    \caption{\textbf{Our memorization laboratory enables modeling of natural image distributions with latent low-dimensional structure.} {Some sample synthetic images from our simple image model (\textit{bottom}) visualized alongside their corresponding monochromatic image templates (\textit{top}).}}
    \label{fig:image_data_samples}
\end{figure}

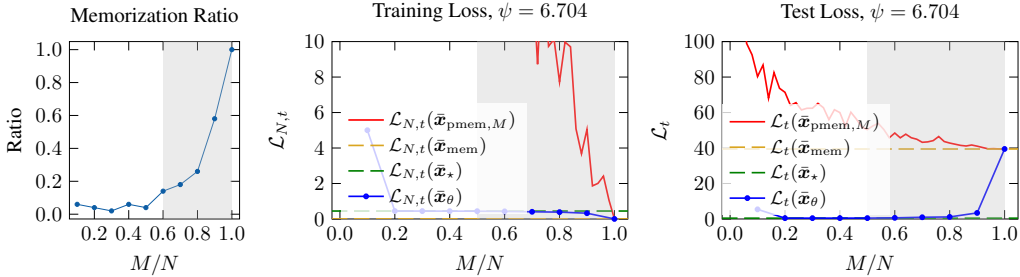
\begin{figure}
    \centering
    \resizebox{\textwidth}{!}{
        \makebox{
            \begin{tikzpicture}

    \definecolor{darkgray176}{RGB}{176,176,176}
    \definecolor{gray}{RGB}{128,128,128}
    \definecolor{teal1293165}{RGB}{12,93,165}
    
    \begin{axis}[
    width=2.875cm,
    height=3cm,
    scale only axis,
    tick pos=both,
    title={Memorization Ratio},
    x grid style={darkgray176},
    xlabel={\(\displaystyle M/N\)},
    xmin=0.055, xmax=1.045,
    xtick style={color=black},
    xtick={0,0.2,0.4,0.6,0.8,1,1.2},
    xticklabels={
      \(\displaystyle {0.0}\),
      \(\displaystyle {0.2}\),
      \(\displaystyle {0.4}\),
      \(\displaystyle {0.6}\),
      \(\displaystyle {0.8}\),
      \(\displaystyle {1.0}\),
      \(\displaystyle {1.2}\)
    },
    y grid style={darkgray176},
    ylabel={Ratio},
    ymin=-0.0290000004693866, ymax=1.04900000002235,
    ytick style={color=black},
    ytick={-0.2,0,0.2,0.4,0.6,0.8,1,1.2},
    yticklabels={
      \(\displaystyle {\ensuremath{-}0.2}\),
      \(\displaystyle {0.0}\),
      \(\displaystyle {0.2}\),
      \(\displaystyle {0.4}\),
      \(\displaystyle {0.6}\),
      \(\displaystyle {0.8}\),
      \(\displaystyle {1.0}\),
      \(\displaystyle {1.2}\)
    }
    ]
    \addplot [teal1293165, mark=*, mark size=1, mark options={solid}]
    table {%
    0.1 0.0599999986588955
    0.2 0.0399999991059303
    0.3 0.0199999995529652
    0.4 0.0599999986588955
    0.5 0.0399999991059303
    0.6 0.140000000596046
    0.7 0.179999992251396
    0.8 0.259999990463257
    0.9 0.579999983310699
    1 1
    };
    \fill [gray!50!white, opacity=0.3] (axis cs:0.6,-0.05) rectangle (axis cs:1,1.05);
    \end{axis}
    
    \end{tikzpicture}
            \begin{tikzpicture}

    \definecolor{darkgray176}{RGB}{176,176,176}
    \definecolor{goldenrod}{RGB}{218,165,32}
    \definecolor{gray}{RGB}{128,128,128}
    \definecolor{green}{RGB}{0,128,0}
    
    \begin{axis}[
        width=5cm,
        height=3cm,
        scale only axis,
    legend cell align={left},
    legend style={
      fill opacity=0.8,
      draw opacity=1,
      text opacity=1,
      at={(0.03,0.03)},
      anchor=south west,
      draw=none,
      scale=0.75,
      font=\footnotesize
    },
    tick pos=both,
    title={Training Loss, \(\displaystyle \snr=6.704\)},
    x grid style={darkgray176},
    xlabel={\(\displaystyle M/N\)},
    xmin=-0.029, xmax=1.049,
    xtick style={color=black},
    xtick={-0.2,0,0.2,0.4,0.6,0.8,1,1.2},
    xticklabels={
      \(\displaystyle {\ensuremath{-}0.2}\),
      \(\displaystyle {0.0}\),
      \(\displaystyle {0.2}\),
      \(\displaystyle {0.4}\),
      \(\displaystyle {0.6}\),
      \(\displaystyle {0.8}\),
      \(\displaystyle {1.0}\),
      \(\displaystyle {1.2}\)
    },
    y grid style={darkgray176},
    ylabel={\(\displaystyle \mathcal{L}_{N, t}\)},
    ymin=0, ymax=10,
    ytick style={color=black},
    ]
    \addplot [thick, red]
    table {%
    0.02 289.244140625
    0.04 199.549499511719
    0.06 96.8224029541016
    0.08 90.3000030517578
    0.1 76.6411361694336
    0.12 75.2649536132812
    0.14 66.6453475952148
    0.16 61.5230751037598
    0.18 69.0685195922852
    0.2 60.5547637939453
    0.22 51.6056442260742
    0.24 44.5550842285156
    0.26 48.357666015625
    0.28 42.9808502197266
    0.3 50.6403694152832
    0.32 47.947883605957
    0.34 41.1473999023438
    0.36 40.076229095459
    0.38 34.8987731933594
    0.4 30.8178768157959
    0.42 27.6831111907959
    0.44 29.5842247009277
    0.46 28.517068862915
    0.48 26.8566017150879
    0.5 23.3164958953857
    0.52 21.9266872406006
    0.54 25.1359081268311
    0.56 26.2124614715576
    0.58 17.0111713409424
    0.6 21.1423454284668
    0.62 15.2161741256714
    0.64 13.6478681564331
    0.66 17.7582530975342
    0.68 11.9048089981079
    0.7 14.3488616943359
    0.72 8.72116756439209
    0.74 15.0086107254028
    0.76 9.73533058166504
    0.78 10.1327590942383
    0.8 7.72761201858521
    0.82 9.97124481201172
    0.84 9.70571422576904
    0.86 5.07817649841309
    0.88 3.68169522285461
    0.9 5.04862117767334
    0.92 1.86594557762146
    0.94 2.00706005096436
    0.96 2.41379547119141
    0.98 1.06889677047729
    1 0.00210812175646424
    };
    \addlegendentry{$\sL_{N, t}(\denoiserpmem)$}
    \addplot [thick, goldenrod, dash pattern=on 7.4pt off 3.2pt]
    table {%
    -0.029 0.00381505605764687
    1.049 0.00381505605764687
    };
    \addlegendentry{$\sL_{N, t}(\denoisermem)$}
    \addplot [thick, green, dash pattern=on 7.4pt off 3.2pt]
    table {%
    -0.029 0.44788333773613
    1.049 0.44788333773613
    };
    \addlegendentry{$\sL_{N, t}(\denoisergen)$}
    \addplot [thick, blue, mark=*, mark size=1, mark options={solid}]
    table {%
    0.1 5.00232744216919
    0.2 0.447927176952362
    0.3 0.444556027650833
    0.4 0.44161194562912
    0.5 0.438873946666718
    0.6 0.427575618028641
    0.7 0.404855787754059
    0.8 0.386125713586807
    0.9 0.325212121009827
    1 0.00261645810678601
    };
    \addlegendentry{$\sL_{N, t}(\denoiser_{\theta})$}
    \fill [gray!50!white, opacity=0.3] (axis cs:0.5,0.00116688282153916) rectangle (axis cs:1,522.556210894483);
    \end{axis}
    
    \end{tikzpicture}
            \begin{tikzpicture}

    \definecolor{darkgray176}{RGB}{176,176,176}
    \definecolor{goldenrod}{RGB}{218,165,32}
    \definecolor{gray}{RGB}{128,128,128}
    \definecolor{green}{RGB}{0,128,0}
    
    \begin{axis}[
        width=5cm,
        height=3cm,
        scale only axis,
    legend cell align={left},
    legend style={
      fill opacity=0.8,
      draw opacity=1,
      text opacity=1,
      at={(0.0,0.02)},
      anchor=south west,
      draw=none,
      scale=0.75,
      font=\footnotesize
    },
    tick pos=both,
    title={Test Loss, \(\displaystyle \psi=6.704\)},
    x grid style={darkgray176},
    xlabel={\(\displaystyle M/N\)},
    xmin=-0.029, xmax=1.049,
    xtick style={color=black},
    xtick={-0.2,0,0.2,0.4,0.6,0.8,1,1.2},
    xticklabels={
      \(\displaystyle {\ensuremath{-}0.2}\),
      \(\displaystyle {0.0}\),
      \(\displaystyle {0.2}\),
      \(\displaystyle {0.4}\),
      \(\displaystyle {0.6}\),
      \(\displaystyle {0.8}\),
      \(\displaystyle {1.0}\),
      \(\displaystyle {1.2}\)
    },
    y grid style={darkgray176},
    ylabel={\(\displaystyle \mathcal{L}_{t}\)},
    ymin=0, ymax=100,
    ytick style={color=black},
    ]
    \addplot [thick, red]
    table {%
    0.02 316.628356933594
    0.04 245.716262817383
    0.06 99.0438232421875
    0.08 92.5423583984375
    0.1 80.3125381469727
    0.12 86.7876052856445
    0.14 68.0760345458984
    0.16 82.4900817871094
    0.18 73.6733551025391
    0.2 71.4006881713867
    0.22 64.0250778198242
    0.24 65.5560989379883
    0.26 61.4714813232422
    0.28 62.3947525024414
    0.3 62.441822052002
    0.32 65.1530838012695
    0.34 62.7442665100098
    0.36 56.0199241638184
    0.38 61.5885848999023
    0.4 59.9716987609863
    0.42 52.2216873168945
    0.44 60.1818199157715
    0.46 56.3050308227539
    0.48 51.8107986450195
    0.5 50.3637962341309
    0.52 52.4514389038086
    0.54 53.8685684204102
    0.56 50.4419708251953
    0.58 45.782341003418
    0.6 48.4453964233398
    0.62 45.7440605163574
    0.64 47.8025894165039
    0.66 45.7916374206543
    0.68 44.9627571105957
    0.7 43.2200546264648
    0.72 43.4549942016602
    0.74 44.1711235046387
    0.76 46.4839172363281
    0.78 43.658390045166
    0.8 42.8034362792969
    0.82 42.1383323669434
    0.84 41.4754066467285
    0.86 41.1378211975098
    0.88 41.6982498168945
    0.9 41.0512771606445
    0.92 40.2810134887695
    0.94 39.5459861755371
    0.96 39.628963470459
    0.98 39.4949684143066
    1 39.4652366638184
    };
    \addlegendentry{$\sL_{t}(\denoiserpmem)$}
    \addplot [thick, goldenrod, dash pattern=on 7.4pt off 3.2pt]
    table {%
    -0.029 39.4441986083985
    1.049 39.4441986083985
    };
    \addlegendentry{$\sL_{t}(\denoisermem)$}
    \addplot [thick, green, dash pattern=on 7.4pt off 3.2pt]
    table {%
    -0.029 0.449452310800552
    1.049 0.449452310800552
    };
    \addlegendentry{$\sL_{t}(\denoisergen)$}
    \addplot [thick, blue, mark=*, mark size=1, mark options={solid}]
    table {%
    0.1 5.42839956283569
    0.2 0.456748485565186
    0.3 0.466995030641556
    0.4 0.475321739912033
    0.5 0.481677114963531
    0.6 0.635057508945465
    0.7 0.966801583766937
    0.8 1.15631520748138
    0.9 3.40600347518921
    1 39.4684562683105
    };
    \addlegendentry{$\sL_{t}(\denoiser_{\theta})$}
    \fill [gray!50!white, opacity=0.3] (axis cs:0.5,0.323810038410463) rectangle (axis cs:1,439.484048695223);
    \end{axis}
    
    \end{tikzpicture}
        }
    }
    \caption{\textbf{A phase transition persists in the low-rank Gaussian natural image model.} \textit{Left:} The memorization ratio as the number of components of the trained denoiser increases while everything else is fixed. \textit{Middle:} The training loss of the trained model in terms of the number of components, compared to partially memorizing, fully memorizing, and ground truth denoisers, at a representative \(t \approx 0.3\) with SNR \(\snr \approx 6.704\). \textit{Right:} The test loss, computed over a hold-out set, in the same setting. 
     We emphasize the identical qualitative picture to \Cref{fig:memorization_phase_transition_loss}, wherein we can observe a phase transition from generalization to memorization from the memorization ratio and loss plots. Transient ``jaggedness'' may be explained by larger variance in the loss estimation.}
    \label{fig:image_data_phase_transition}
    \vspace{-1em}
\end{figure}

Our main result demonstrates that our framework is still expressive enough to model this facsimile of real-world image distributions, and that it exhibits similar phenomena to the isotropic case, enabling it to be studied in detail via our laboratory. Notably, we show in \Cref{fig:image_data_phase_transition} that we can still identify a phase transition phenomenon, which looks qualitatively similar to the isotropic Gaussian mixture model studied in \Cref{sub:experiments_gmm} and throughout the work.

\vspace{-0.1cm}
\section{Related Works}
\label{sec:related_works}

Understanding memorization and generalization properties of diffusion
models is crucial for practitioners \citep{somepalli2023diffusion,
    ren2024unveiling, rahman2024frame,
    wang2024replication,chen2024investigating,stein2024exposing}.
Indeed, concerns about privacy
\citep{ghalebikesabi2023differentially, carlini2023extracting,
    nasr2023scalable} and copyright infringement
\citep{cui2023diffusionshield,
    wang2024replication,vyas2023provable,franceschelli2022copyright} are
key issues as  these models are deployed. Several factors can
influence the memorization capabilities of diffusion models such as duplication \citep{carlini2023extracting,ross2024geometric,webster2023reproducible} or model architecture \citep{chavhan2024memorized}. We refer to \citep{gu2023memorization, kadkhodaie2023generalization} for an in-depth experimental
investigation of those issues. 

On the theoretical side, memorization in diffusion models has been
investigated through the lens of statistical physics leveraging the
concept of \emph{phase transition}
\citep{biroli2024dynamical,li2023generalization,ambrogioni2023statistical,ventura2024manifolds,raya2024spontaneous,
    sakamoto2024geometry, pavasovic2025understanding}. In particular in \citep{biroli2024dynamical},
the authors identify three critical transitions for the diffusion
model generative trajectories assuming that the score is assumed to be
perfectly learned. \citet{george2025denoising}  investigated generalization and memorization in trained random features neural network denoisers (nonparametric models) in the case where the data is a standard Gaussian. Our work is complementary to the theories of ``creativity'' and generalization of \citet{kamb2024analytic,niedoba2024towards,vastola2025generalization}, which suggest in different contexts that generalization in diffusion models arises from the implicit bias of an underparameterized denoiser (\citet{vastola2025generalization} also considers the landscape of the training objective, see \citep{bertrand2025closed} for a rebuttal). 
Our work disentangles the competing factors in the implicit biases discussed in these works and captures the essential features into our theoretical laboratory. While in this work we only study in detail the simplest and most interpretable instantiation of the laboratory, further connections from our framework to such previous work may be possible and enable us to broaden the scope of practical settings for which we have robust theoretical results. %

\vspace{-0.1cm}
\section{Conclusion}
\vspace{-0.1cm}

In this paper, we have introduced a theoretical laboratory for measuring and predicting memorization and creativity in diffusion models. 
Focusing on data drawn from $K$-component Gaussian mixture models and Gaussian mixture denoisers with a number of components ranging from $K$ to the number of training samples $N$, our laboratory disentangles different factors contributing to memorization and generalization, enabling us to compute tight theoretical approximations for the losses of representative denoisers within this model class and thereby {predict} the onset of memorization in trained models at inference time.

While our current framework allows us to study generalization and memorization behavior in a rigorous and testable setting, we highlight several avenues of improvement. First, our model can be extended to capture additional properties of larger and more realistic datasets such as intrinsic dimensionality or partial data replication. In future work, we plan on expanding the memorization/generalization laboratory to cover those cases and refining the asymptotics of our training losses. We envision the laboratory growing to encompass a framework for understanding memorization and generalization purely in terms of the \textit{geometry of the data}, with a robust and extensive experimental apparatus to test hypotheses and verify predictions, and a rich theoretical toolkit that reduces statistical questions of sampling in trained diffusion models to geometric questions about the data itself.

\paragraph{Acknowledgements.} YM acknowledges support from the joint Simons Foundation-NSF DMS grant \#2031899, the ONR grant N00014-22-1-2102, the NSF grant \#2402951, and the startup fund from the University of Hong Kong.

\bibliography{refs}
\bibliographystyle{unsrtnat}

\appendix

\newpage

\section*{Organization of the Appendix}

In \Cref{sec:gmm-denoiser-calcs}, we recall some basic calculations around
Gaussian Mixture Model denoisers and provide additional discussion of our
setting.
Justifications about our model class are given in \Cref{app:softmax-infinity}. 
Next, we recall some results on Gaussian concentration bounds in \Cref{sec:gmm-denoiser-calcs-cont}.
In \Cref{sec:softmax_approximation}, we present key results for approximating softmax operators.
We combine our concentration bounds and softmax approximation results in \Cref{sec:high_dim_denoiser} in order to provide approximation of the denoisers in generalizing and memorizing scenarios. 
Our main results regarding the training loss approximations such as
\Cref{thm:asymptotics-generalizing} are proved in \Cref{sec:training_loss_approx}. 
Finally, full experimental details are provided in
\Cref{app:experiments}.

\section{Gaussian Mixture Model Denoisers: Basic
  Calculations}\label{sec:gmm-denoiser-calcs}

\paragraph{Justification of Gaussian Mixture Models.} 
Gaussian mixture models represent a canonical model for assessing learning and
sampling algorithms in theoretical computer science, including in the
context of diffusion models \citep{Dasgupta2007-sh,Ge2018-ny,Shah2023-qw,Gatmiry2024-yg},
and they have the appealing ability to model data with \textit{geometric
structure}, including hierarchical structure \citep{Li2024-ly} and
low-dimensional structure in natural images \citep{Zoran2011-jx,wang2024diffusion}
Most importantly, the task of training and sampling with a diffusion model on
a Gaussian mixture target $\pi_\star$ via \eqref{eq:denoising-objective} represents an
ideal test-bed for investigating issues of memorization and generalization,
because as the number of components in the mixture is varied, and in particular
is as made as large as the number of training data samples $N$, one can
simultaneously represent the true distribution
\eqref{eq:ground-truth-pi-anisotropic} and the memorizing denoiser
\eqref{eq:memorizing-denoiser-empirically-optimal}
within the same class of models.

\paragraph{Discussion around the denoisers.}
The key technical challenges we investigate in our laboratory setting are the characterization
of the denoiser $\denoiser_\theta$ minimizing \eqref{eq:training_loss_defns} (or,
equivalently its parameters $\theta$), and using its properties to prove that
its associated sampling measure $\hat{\pi}$ either memorizes or generalizes.
For the first challenge, typical theoretical studies of
regression over a parametric class of models (as in
\eqref{eq:denoising-objective})
rely on the model class being well-specified with respect to the true model
parameters, here those corresponding to $\pi_\star$. In
\eqref{eq:denoising-objective},
this is the case when $M = K$, but as soon as $M > K$, existing studies of the
loss landscape for diffusion model training with Gaussian mixture model
data/denoisers break down \citep{wang2024diffusion,Shah2023-qw}. With regards to
both challenges, a clear complication that we have alluded to previously is that
as $M \to N$, the model class becomes expressive enough to represent the
optimal, memorizing denoiser \eqref{eq:memorizing-denoiser-empirically-optimal},
which is suggestive that at intermediate values of $M$, parameters $\theta$ that
lead to generalizing-denoiser-like behavior of $\hat{\pi}$ are no longer
prevalent.
To overcome these challenges, we adopt the hybrid theoretical-empirical
methodology outlined in the main body, which our laboratory setting enables.

\paragraph{Basic denoiser calculations.}

We recall \Cref{lemma:denoiser_mog}.

\begin{lemma}{}{denoiser_mog_appendix}
    Assume that $\pi_\theta = (1/M) \sum_{i=1}^{M} \sN(\vmu^{i},
        \vSigma^i)$. Then, we have that
    \begin{equation}\label{eq:model-class-general-appendix}
    \compressstyle
        \denoiser_\theta(t,\vx_t) = 
        \frac{1}{\alpha_t}\left(
        \vx_t -
        \sigma^{2}_t\sum_{i =
        1}^{M}(\alpha_t^{2}\bm{\Sigma}^{i}
        + \sigma_t^{2}\bm{I})^{-1}(\vx_t - \alpha_t \bm{\mu}^{i})
        \operatorname{softmax}(\bm{w}(t, \vx_t))_{i}
        \right),
    \end{equation}
    where for all $i \in [M]$
    \begin{equation}\label{eq:softmax-vector-model-general}
        w_i(t, \vx_t) = 
        -\frac{1}{2}\log\det(\alpha_t^{2}\bm{\Sigma}^{i} + \sigma_t^{2}\bm{I}) -
        \frac{1}{2}(\vx_t-
        \alpha_t\bm{\mu}^{i})^{\top}(\alpha_t^{2}\bm{\Sigma}^{i} +
        \sigma_t^{2}\bm{I})^{-1}(\vx_t - \alpha_t \bm{\mu}^{i}) 
        .
    \end{equation}
\end{lemma}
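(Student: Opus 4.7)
The plan is to apply Tweedie's identity \eqref{eq:tweedie}, which reduces the computation of $\denoiser_\theta$ to the computation of $\nabla \log p_t$, where $p_t$ is the density of $X_t$ under the noising process \eqref{eq:noising_time_t} initialized at $X_0 \sim \pi_\theta$. Since the noising process is equivalent in distribution to convolving $\alpha_t X_0$ with an isotropic Gaussian of scale $\sigma_t$, and each component of $\pi_\theta$ is Gaussian, the density $p_t$ is itself a uniformly weighted Gaussian mixture with components $\sN(\alpha_t \vmu^i, \alpha_t^2 \vSigma^i + \sigma_t^2 \vI)$. I would record this as the first step, writing $p_t(\vx_t) = (1/M)\sum_{i=1}^M q_i(\vx_t)$ with $q_i(\vx_t) = \sN(\vx_t; \alpha_t \vmu^i, \alpha_t^2 \vSigma^i + \sigma_t^2 \vI)$.

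The second step is to differentiate $\log p_t$ directly. Using $\nabla q_i(\vx_t) = -(\alpha_t^2 \vSigma^i + \sigma_t^2 \vI)^{-1}(\vx_t - \alpha_t \vmu^i)\, q_i(\vx_t)$ together with $\nabla \log p_t = \nabla p_t / p_t$, one obtains
\begin{equation}
\nabla \log p_t(\vx_t) = -\sum_{i=1}^{M} (\alpha_t^2 \vSigma^i + \sigma_t^2 \vI)^{-1}(\vx_t - \alpha_t \vmu^i) \cdot \frac{q_i(\vx_t)}{\sum_{j=1}^{M} q_j(\vx_t)}.
\end{equation}
The remaining step is to recognize the mixing weights on the right-hand side as the softmax of the vector $\vw(t, \vx_t)$ from \eqref{eq:softmax-vector-model-general}. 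Since these weights are invariant under an $i$-independent multiplicative rescaling of the $q_i$'s, the common factor $(2\pi)^{-d/2}$ cancels and $q_i(\vx_t) \propto \exp(w_i(t, \vx_t))$, where the term $-\tfrac{1}{2}\log\det(\alpha_t^{2}\vSigma^{i} + \sigma_t^{2}\vI)$ in $w_i$ arises precisely from the $(\det \vSigma^i_t)^{-1/2}$ normalization of each Gaussian density — this factor is $i$-dependent and therefore must be retained inside the softmax. Plugging the resulting expression for $\nabla \log p_t$ back into Tweedie's identity and rearranging yields \eqref{eq:model-class-general-appendix}.

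I do not anticipate any serious obstacle: the computation is a standard score calculation for a Gaussian mixture. The only point requiring care is the bookkeeping separating $i$-dependent normalization factors, which become part of $w_i$, from $i$-independent factors, which cancel out of the softmax. The genuinely nontrivial uses of this structure appear only later in the appendices, when one must control the concentration of $\vw(t, X_t^i)$ for random noised samples in order to approximate the various denoisers — here the statement itself is purely algebraic and deterministic.
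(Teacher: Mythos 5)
Your proposal is correct and takes essentially the same route as the paper's proof: compute the marginal density $p_t$ as a Gaussian mixture with means $\alpha_t \vmu^i$ and covariances $\alpha_t^2 \vSigma^i + \sigma_t^2 \vI$, take $\nabla \log p_t = \nabla p_t / p_t$ to obtain the softmax-weighted form, and substitute into Tweedie's identity. Your explicit discussion of which normalization factors cancel in the softmax versus which ($i$-dependent $\log\det$) must be retained matches the paper's implicit bookkeeping and is the only mildly delicate point.
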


In the rest of this section, we are going to prove \Cref{lemma:denoiser_mog}.
This property is well-known and can be found in
\citep{peluchetti2023non,biroli2024dynamical,kamb2024analytic}
for instance but we include its proof for
completeness.
First, we compute $p_t$. Recalling the noising process,
\eqref{eq:noising_time_t}, we get that
for any $t \in [0,1]$ and $\vx_t \in \rset^d$

\begin{align}
    p_t(\vx_t) & = \int_{\rset^d} p_{t|0}(\vx_t|\vx_0) \rmd \pi(\vx_0)                         \\
               & =(1/M) \sum_{i=1}^{M} \int_{\rset^d} p_{t|0}(\vx_t|\vx_0) \sN(\vmu^{i},
    \vSigma^i)(\vx_0) \rmd \vx_0                                                    \\
               & = (1/M) \sum_{i=1}^{M} \sN(\vx_t; \alpha_t \vmu^{i}, 
               \sigma_t^2 \vI + \alpha_t^2 \vSigma^i) .
\end{align}
Therefore, we get that for any $t \in [0,1]$ and $\vx_t \in \rset^d$
\begin{align}
    \nabla \log p_t(\vx_t) & = \nabla p_t(\vx_t) / p_t(\vx_t)                                         \\
                           & = \frac{\sum_{i=1}^{M} (\sigma_t^2 \vI + \alpha_t^2
                           \vSigma^i)^{-1}(\alpha_t \vmu^{i}
                           - \vx_t)\sN(\vx_t; \alpha_t \vmu^{i}, \sigma_t^2 \vI
                           + \alpha_t^2 \vSigma^i)}{\sum_{i=1}^{M} \sN(\vx_t; \alpha_t \vmu^{i},
    \sigma_t^2 \vI + \alpha_t^2 \vSigma^i)}                                                  \\
                           & = {\sum_{i=1}^{M} (\sigma_t^2 \vI + \alpha_t^2
                           \vSigma^i)^{-1}(\alpha_t \vmu^{i}
                           - \vx_t)\softmax(\vw(\vx_t))_i}, \label{eq:mog_stein_score}
\end{align}
with $\vw$ defined as in the lemma statement.
Finally, using Tweedie's identity we get that for any $t \in [0,1]$
and $x_t \in \rset^d$
\begin{equation}
    \nabla \log p_t(\vx_t) = (\alpha_t \denoiser_\theta(t, \vx_t) - \vx_t) / \sigma_t^2 .
\end{equation}
Therefore combining this result and \eqref{eq:mog_stein_score}, we get that for
any $t \in [0,1]$ and $\vx_t \in \rset^d$
    \begin{equation}
        \denoiser_\theta(t,\vx_t) = 
        \frac{1}{\alpha_t}\left(
        \vx_t -
        \sigma^{2}_t\sum_{i =
        1}^{M}(\alpha_t^{2}\bm{\Sigma}^{i}
        + \sigma_t^{2}\bm{I})^{-1}(\vx_t - \alpha_t \bm{\mu}^{i})
        \operatorname{softmax}(\bm{w}(\vx_t))_{i}
        \right),
    \end{equation}
which concludes the proof.

\section{Nesting of Model Classes}\label{app:softmax-infinity}

Here we will sketch a rigorous proof of the claim that the loss
\eqref{eq:training_loss_defns} associated 
to any $M$-parameter GMM denoiser can be represented, in a limiting sense, by
that of a $(M+1)$-parameter GMM denoiser.
For simplicity, we will treat the case where $\vSigma^i = \vSigma$ for $i \in
[M]$, i.e.\ all components have the same variance.
Given such a $M$-parameter GMM denoiser following the form in
\Cref{lemma:denoiser_mog}, we show that the $M+1$ parameter
model given by the parameters $(\vmu^1, \vSigma, \dots, \vmu^M, \vSigma,
m \vmu^M, \vSigma)$, for $m \in \bbN$, provides a suitable loss approximation as
$m \to \infty$. Comparing the denoisers in \Cref{lemma:denoiser_mog}, we see
that it suffices to show a suitable degree of approximation of the softmax
autoregression
\begin{equation}
    \begin{bmatrix}
        \vmu^1
         &
         \hdots
         &
         \vmu^M
         &
        m \vmu^M
    \end{bmatrix}
    \softmax(\vw_{M+1}(\vx_t))
    \approx
    \begin{bmatrix}
        \vmu^1
         &
         \hdots
         &
         \vmu^M
    \end{bmatrix}
    \softmax(\vw_{M}(\vx_t)),
\end{equation}
where the weight vectors are subscripted to denote the number of parameters.
Because we are concerned with loss approximations, it suffices to show this
approximation for $\vx_t$ given by a noisy sample from the GMM $\pi_\star$,
following \Cref{eq:noising_time_t}. We have
\begin{align}
    &\begin{bmatrix}
        \vmu^1
         &
         \hdots
         &
         \vmu^M
         &
        m \vmu^M
    \end{bmatrix}
    \softmax(\vw_{M+1}(\vx_t))
    \\
    &\qquad=
    m \vmu^M
    \frac{
        e^{
            -\half(m\alpha_t \vmu^M - \vx_t)^\top (\alpha_t^2 \vSigma
            + \sigma_t^2 \vI)^{-1} (m\alpha_t \vmu^M - \vx_t)
            }
    }{\sum_{\ell=1}^M e^{w_{\ell}} + e^{
            -\half(m\alpha_t \vmu^M - \vx_t)^\top (\alpha_t^2 \vSigma
            + \sigma_t^2 \vI)^{-1} (m\alpha_t \vmu^M - \vx_t)
            }}
            \\
            &\qquad\qquad
    + \sum_{k=1}^M \vmu^k
    \frac{e^{w_k}}{\sum_{\ell=1}^M e^{w_{\ell}} + e^{
            -\half(m\alpha_t \vmu^M - \vx_t)^\top (\alpha_t^2 \vSigma
            + \sigma_t^2 \vI)^{-1} (m\alpha_t \vmu^M - \vx_t)
            }},
\end{align}
where we write $w_k$ for the $k$-th entry of $\vw_{M+1}$ above.
From here, we can construct a high-probability event (using Gaussian
concentration) on which $X_t^i$ is bounded for any sample $\vx^i$ initializing
the noising process (coming from \eqref{eq:training_loss_defns}), and then it
follows by an application of Gaussian concentration that on this event,
\begin{equation}
    \lim_{m\to\infty}
    e^{
            -\half(m\alpha_t \vmu^M - \vx_t)^\top (\alpha_t^2 \vSigma
            + \sigma_t^2 \vI)^{-1} (m\alpha_t \vmu^M - \vx_t)
            }
            =0,
\end{equation}
whence by the previous expression
\begin{equation}
    \begin{bmatrix}
        \vmu^1
         &
         \hdots
         &
         \vmu^M
         &
        m \vmu^M
    \end{bmatrix}
    \softmax(\vw_{M+1}(\vx_t))
    \to_{m\to\infty}
    \begin{bmatrix}
        \vmu^1
         &
         \hdots
         &
         \vmu^M
    \end{bmatrix}
    \softmax(\vw_{M}(\vx_t)).
\end{equation}
Given that loss approximations for \eqref{eq:training_loss_defns} only evaluate on $X_t^i$
for which we can construct the aforementioned high-probability event, this
establishes the claim that $(M+1)$-parameter models' losses can achieve any
$M$-parameter model's loss.

\section{Concentration Bounds}\label{sec:gmm-denoiser-calcs-cont}

\subsection{Chernoff-Cramér bound for $\chi_2$}

In this section, we give some basic results regarding the concentration bounds of $\chi_2$ random variables. Those lemmas will be key to establish our main sparsity results in \Cref{sec:high_dim_denoiser}.
First, we recall the Chernoff-Cramér bound, see \citep[page 21]{boucheron2003concentration} for instance.

\begin{theorem}{}{chernoff_bound}
    Let $X$ be a real-valued random variable. Let $M(t) =
        \mathbb{E}[\exp[tX]]$. Then, we have that for any $a \in \rset$,
    $\mathbb{P}(X \leq a) \leq \inf_{t > 0} M(t) \exp[-ta]$. In
    particular if $X \sim \chi_2(p)$ with $p \in \nset$, we get that for
    any $\vareps \in (0,1)$
    \begin{equation}
        \label{eq:chernoff_chi_square_up}
        \mathbb{P}(X \leq (1-\vareps)p) \leq \exp\left[ \frac{p}{2}(\vareps +
            \log(1-\vareps)) \right] ,
    \end{equation}
    and
    \begin{equation}
        \label{eq:chernoff_chi_square_down}
        \mathbb{P}(X \geq (1+\vareps)p) \leq \exp\left[ -\frac{p}{2}(\vareps
            - \log(1+\vareps)) \right] .
    \end{equation}
    As a consequence for any $\vareps \in [0,1]$
    \begin{equation}
        \label{eq:chernoff_chi_square_up_down}
        \mathbb{P}(\abs{X - p} \geq \veps p) \leq 2 \exp\left[ -\frac{p\veps^2}{8}
            \right].
    \end{equation}

\end{theorem}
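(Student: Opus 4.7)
The plan is to handle the three parts of the statement in sequence: the general Chernoff-Cram\'er inequality, the tight chi-squared tail bounds via direct optimization of the MGF, and the combined sub-Gaussian-style bound. All three are classical, so I expect no single ``hard step''; the main care needed is in passing from the tight optimized exponents to the simpler quadratic exponent.

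For the general bound, I would apply Markov's inequality to the nonnegative random variable $e^{tX}$. For $t > 0$, $\Pr{X \geq a} = \Pr{e^{tX} \geq e^{ta}} \leq M(t) e^{-ta}$, and taking the infimum over $t > 0$ yields the Chernoff-Cram\'er estimate (and symmetrically for the lower tail, by applying the same argument to $-X$, i.e., evaluating the MGF at $-t$ and using $e^{ta}$ as the bar).

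For $X \sim \chi^2(p)$ I would use the explicit MGF $M(t) = (1-2t)^{-p/2}$, valid for $t < 1/2$. For the upper tail, I would minimize $-\tfrac{p}{2}\log(1-2t) - t(1+\veps)p$ over $t \in (0, 1/2)$; setting the derivative to zero gives the optimizer $t^\star = \veps/(2(1+\veps))$, and substituting this back produces the stated exponent $-\tfrac{p}{2}(\veps - \log(1+\veps))$. For the lower tail, I would apply the same strategy to $-X$ with MGF $(1+2t)^{-p/2}$ for $t > 0$; optimizing $-\tfrac{p}{2}\log(1+2t) + t(1-\veps)p$ yields $t^\star = \veps/(2(1-\veps))$ and the exponent $\tfrac{p}{2}(\veps + \log(1-\veps))$.

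The final two-sided bound then follows from a union bound once we control both exponents by $-p\veps^2/8$. For this I would use integral representations, which avoid fiddly Taylor-remainder estimates: writing
\begin{equation}
\veps + \log(1-\veps) = -\int_0^{\veps} \frac{s}{1-s}\,\diff s, \qquad \veps - \log(1+\veps) = \int_0^{\veps} \frac{s}{1+s}\,\diff s,
\end{equation}
and observing that $s/(1+s) \geq s/2$ on $s \in [0,1]$ immediately gives $\veps - \log(1+\veps) \geq \veps^2/4$, while $s/(1-s) \geq s$ on $s \in [0,\veps]$ gives $\veps + \log(1-\veps) \leq -\veps^2/2$. Both therefore feed an exponent of at most $-p\veps^2/8$, and summing the two tail probabilities yields the factor of $2$. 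The only subtlety worth flagging is the upper-tail inequality $\veps - \log(1+\veps) \geq \veps^2/4$: unlike the lower tail, the series here is alternating, so the integral representation is the cleanest route to a clean quadratic lower bound over all of $[0,1]$.
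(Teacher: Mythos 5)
Your proof is correct and follows the same skeleton as the paper's: Markov's inequality applied to $e^{tX}$ for the general Chernoff--Cram\'er bound, explicit optimization over $t$ of the chi-squared moment generating function $(1-2t)^{-p/2}$ for the two one-sided tails, and an elementary estimate on the optimized exponents followed by a union bound. The one place you diverge is in that last estimate. The paper asserts the Taylor-type inequalities $\log(1-x)+x \leq -x^2/2$ and $\log(1+x)-x \leq -x^2/2 + x^3/6$; the first is fine, but the second is misstated --- in fact $\log(1+x)-x \geq -x^2/2 + x^3/6$ on $[0,1]$ (differentiating $\log(1+x)-x+x^2/2-x^3/6$ gives $x^2(1-x)/(2(1+x)) \geq 0$), and the usable alternating-series upper bound is $\log(1+x)-x \leq -x^2/2+x^3/3$ --- though the target inequality $\log(1+x)-x \leq -x^2/8$ that the paper actually needs does hold. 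Your integral-representation route sidesteps this cleanly: $\veps - \log(1+\veps) = \int_0^\veps \frac{s}{1+s}\,\diff s \geq \veps^2/4$ and $\veps + \log(1-\veps) = -\int_0^\veps \frac{s}{1-s}\,\diff s \leq -\veps^2/2$ follow immediately from $1+s \leq 2$ and $1-s \leq 1$, are correctly signed by construction, and deliver both exponents $\leq -p\veps^2/8$ with no series bookkeeping. So while the high-level approach is identical, your handling of the final inequality is the tidier of the two and quietly repairs a small slip in the paper's write-up.
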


\begin{proof}
    We have that \eqref{eq:chernoff_chi_square_up} and \eqref{eq:chernoff_chi_square_down} are direct consequences of the Chernoff-Cramér bounds. Then for any $x \in [0,1]$, we have that $\log(1-x) + x \leq -x^2/2$ and $\log(1+x) - x \leq -x^2/2 + x^3/6$. Hence, we have that for any $x \in [0,1]$, we have that $\log(1-x) + x \leq -x^2/8$ and $\log(1+x) - x \leq -x^2/8$ which concludes the proof of \eqref{eq:chernoff_chi_square_up_down} using an union bound.
\end{proof}

One of the main application of \Cref{thm:chernoff_bound} is to establish the concentration of the squared norm of Gaussian random variables.

\begin{lemma}{}{gaussian-norm-nonzero-mean}
    Let $\vg \sim \sN(\vmu, \sigma^2 \vI)$ be an isotropic Gaussian with mean
    $\vmu$ and covariance $\sigma^2 \vI$.
    Then for any $0 \leq \veps \leq 1$, one has
    \begin{equation}
        \Pr*{
            \abs*{\norm{\vg}^2 - (\norm{\vmu}^2 + \sigma^2 d)}
            \geq \veps \sigma\sqrt{d}(\sigma \sqrt{d} + \norm{\vmu})
        }
        \leq 4 \exp[-d \veps^2 / 8].
    \end{equation}

\end{lemma}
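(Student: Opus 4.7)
The plan is to decompose $\vg = \vmu + \sigma \vz$ with $\vz \sim \sN(\Zero, \vI)$ and expand the squared norm, so that
\begin{equation}
\norm{\vg}^2 - (\norm{\vmu}^2 + \sigma^2 d) = 2\sigma \ip{\vmu}{\vz} + \sigma^2(\norm{\vz}^2 - d).
\end{equation}
This cleanly splits the deviation into a linear-in-$\vz$ part, which is Gaussian, and a $\chi^2$-type part, which is handled by the Chernoff-Cramér estimate already stated in \Cref{thm:chernoff_bound}. Notice that the target deviation factors as $\veps \sigma \sqrt d(\sigma \sqrt d + \norm{\vmu}) = \veps \sigma^2 d + \veps \sigma \sqrt d \, \norm{\vmu}$, which is suggestively the sum of a scale matching the $\chi^2$ term and one matching the Gaussian cross term.

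First I would bound the $\chi^2$ contribution. Since $\norm{\vz}^2 \sim \chi_2(d)$, the inequality \eqref{eq:chernoff_chi_square_up_down} yields
\begin{equation}
\Pr*{\sigma^2\abs*{\norm{\vz}^2 - d} \geq \veps \sigma^2 d} \leq 2\exp[-d\veps^2/8].
\end{equation}
Next I would handle the cross term. Since $\ip{\vmu}{\vz} \sim \sN(0, \norm{\vmu}^2)$, we have $2\sigma \ip{\vmu}{\vz} = 2\sigma \norm{\vmu} Z$ for $Z \sim \sN(0,1)$, so by the standard sub-Gaussian tail bound $\Pr[\abs{Z} \geq t] \leq 2\exp[-t^2/2]$ with $t = \veps\sqrt{d}/2$,
\begin{equation}
\Pr*{2\sigma\abs*{\ip{\vmu}{\vz}} \geq \veps \sigma \sqrt d \, \norm{\vmu}} \leq 2\exp[-d\veps^2/8].
\end{equation}

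Finally, applying the triangle inequality to the decomposition and a union bound over the two events above produces the claimed $4\exp[-d\veps^2/8]$ bound. There is no real obstacle here; the only detail to get right is choosing how to partition the total deviation budget between the two terms so that each piece contributes exactly $\exp[-d\veps^2/8]$ in the exponent, which is why the allocation $\veps \sigma^2 d$ and $\veps \sigma \sqrt d \, \norm{\vmu}$ is the natural one (matching the standard deviations of the two components up to constants).
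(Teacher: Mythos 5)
Your proof is correct and follows essentially the same route as the paper's: decompose $\vg = \vmu + \sigma\vz$, split the deviation into the $\chi^2$ fluctuation of $\norm{\vz}^2$ (controlled by \Cref{thm:chernoff_bound}) and the Gaussian cross term $2\sigma\ip{\vmu}{\vz}$ (controlled by the standard Gaussian tail bound at $t = \veps\sqrt{d}/2$), then union bound. The only cosmetic difference is that the paper reduces $\ip{\vmu}{\vz}$ to $\norm{\vmu}\,w_1$ via rotational invariance before applying the tail bound, while you observe directly that $\ip{\vmu}{\vz} \sim \sN(0,\norm{\vmu}^2)$; these are the same fact.
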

\begin{proof}
    We have $\vg \equid \vmu + \sigma \vw$, where $\vw \sim \sN(\Zero, \vI)$, so
    \begin{align}
        {\norm{\vg}^2} \equid {\norm{\vmu + \sigma \vw}^2}
        = \norm*{\vmu}^2 + \sigma^2\norm{\vw}^2 + 2 \sigma \ip{\vmu}{\vw}
         & \equid \norm*{\vmu}^2 + \sigma^2\norm{\vw}^2 + 2 \sigma \ip{\vmu}{\vw}  \\
         & \equid \norm{\vmu}^2 + \sigma^2\norm{\vw}^2 + 2\sigma \norm{\vmu} w_1
        \labelthis \label{eq:gaussian-nonzero-mean-rotinvar}
    \end{align}
    where the last line follows by rotational invariance of the Gaussian
    distribution,
    and therefore in particular
    \begin{equation}
        \E*{\norm{\vg}^2}
        = \norm*{\vmu}^2 + d \sigma^2.
    \end{equation}
    By \Cref{thm:chernoff_bound},
    we have for every $0 \leq \veps \leq 1$
    \begin{equation}
        \Pr*{ \abs*{\norm{\vw}^2 - d} \geq d\veps } \leq 2 \exp[-d\veps^2 / 8],
    \end{equation}
    and by Gaussian concentration, for any $t \geq 0$, we have
    \begin{equation}
        \Pr*{\abs{w_1} \geq t} \leq 2 \exp[-t^2 / 2],
    \end{equation}
    so in particular
    \begin{equation}
        \Pr*{\abs{w_1} \geq \frac{\veps \sqrt{d}}{2}} \leq 2 \exp[-d \veps^2 / 8].
    \end{equation}
    Thus, using a union bound, we have for any $\vareps \in [0,1]$
    \begin{align}
         & \Pr*{
            \abs*{\norm{\vg}^2 - (\norm{\vmu}^2 + \sigma^2 d)}
            \geq \veps \sigma\sqrt{d}(\sigma \sqrt{d} + \norm{\vmu})
        }                                       \\
         & \qquad = \Pr*{
            \abs*{\sigma^2\norm{\vw}^2 + 2\sigma \norm{\vmu} w_1 -  \sigma^2 d}
            \geq \veps \sigma\sqrt{d}(\sigma \sqrt{d} + \norm{\vmu})
        }                                       \\
         & \qquad \leq \Pr*{
            \sigma^2\abs*{\norm{\vw}^2 - d} + 2\sigma \norm{\vmu} \abs*{w_1}
            \geq \veps \sigma\sqrt{d}(\sigma \sqrt{d} + \norm{\vmu})
        }                                       \\
         & \qquad \leq \Pr*{
            \sigma^2 \left[\abs*{\norm{\vw}^2 - d} - d\veps \right] + 2\sigma \norm{\vmu} \left[ \abs*{w_1} - \frac{\vareps \sqrt{d}}{2}\right]
            \geq 0
        }                                       \\
         & \qquad \leq 4 \exp[-d \veps^2 / 8] ,
    \end{align}
    which concludes the proof.
\end{proof}

The following result will be used in the proof of \Cref{thm:partial-memorizing-denoiser-loss-approximation_appendix}. 

\begin{proposition}{}{control_softmax_norm}
    Let $(\vg^i)_{i=1}^n$ be i.i.d Gaussian random variables $\mathcal{N}(0, \sigma_\star^2 \Id)$ and $(\vw)_{i=1}^{n-1}$ a collection of positive random variables. Then, for any $\vareps \in (0,1)$, we have with probability at least $1 - 6n \exp[-d\vareps^2/2]$
    \begin{equation}
        d\sigma_\star^2(1-3\vareps) \leq \norm*{\sum_{j=1}^{n-1} \softmax(\vw)_j \vx^j - \vx^n}^2 \leq 2d\sigma_\star^2(1+ 3\vareps) . 
    \end{equation}
\end{proposition}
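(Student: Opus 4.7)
The plan is to exploit two structural features of the target quantity. First, because $s_j := \softmax(\vw)_j$ forms a probability vector (positive entries summing to one), Jensen's inequality applied to $\vx \mapsto \|\vx\|^2$ will control the ``mean'' term $\|\sum_{j<n} s_j \vg^j\|^2$ using only the individual norms $\|\vg^j\|^2$. Second, the only cross-term truly exposed in $\|v - \vg^n\|^2$ (after expansion) is $\langle v, \vg^n\rangle = \sum_{j<n} s_j \langle \vg^j, \vg^n\rangle$, which by $\sum_j s_j = 1$ and a max-bound is dominated by $\max_{j<n} |\langle \vg^j, \vg^n\rangle|$---requiring only $n-1$ one-dimensional Gaussian tail bounds rather than the $\binom{n}{2}$ pairwise bounds a naive expansion would demand. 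Interpreting $\vx^j$ in the statement as $\vg^j$, this is the route I would take.

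Concretely, set $v := \sum_{j=1}^{n-1} s_j \vg^j$ and expand
\begin{equation*}
\|v - \vg^n\|^2 = \|v\|^2 + \|\vg^n\|^2 - 2\langle v, \vg^n\rangle.
\end{equation*}
Define the good event $\Omega$ by: (a) $|\|\vg^j\|^2 - d\sigma_\star^2| \leq \vareps d\sigma_\star^2$ for every $j \in \{1,\dots,n\}$; and (b) $|\langle \vg^j, \vg^n\rangle| \leq \vareps d\sigma_\star^2$ for every $j \in \{1,\dots,n-1\}$. On $\Omega$, Jensen gives $\|v\|^2 \leq \sum_{j<n} s_j \|\vg^j\|^2 \leq (1+\vareps) d\sigma_\star^2$, while $|\langle v, \vg^n\rangle| \leq \sum_{j<n} s_j |\langle \vg^j, \vg^n\rangle| \leq \vareps d\sigma_\star^2$ by (b). Combining with $\|v\|^2 \geq 0$ and (a) applied at $j=n$, one obtains
\begin{equation*}
(1-\vareps)d\sigma_\star^2 - 2\vareps d\sigma_\star^2 \;\leq\; \|v - \vg^n\|^2 \;\leq\; (1+\vareps) d\sigma_\star^2 + (1+\vareps) d\sigma_\star^2 + 2\vareps d\sigma_\star^2,
\end{equation*}
which after absorbing constants into $\vareps$ yields the stated $(1-3\vareps)d\sigma_\star^2$ and $2(1+3\vareps)d\sigma_\star^2$ bounds.

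For the probability estimate, event (a) follows from \Cref{thm:chernoff_bound} and a union bound over the $n$ samples, each contributing $O(\exp[-d\vareps^2])$. For event (b), condition on $\vg^n$: conditionally each $\langle \vg^j, \vg^n\rangle$ is a centered one-dimensional Gaussian with variance $\sigma_\star^2 \|\vg^n\|^2$. On (a) we have $\|\vg^n\|^2 \leq 2d\sigma_\star^2$ deterministically, so a standard Gaussian tail bound gives $\Pr(|\langle \vg^j, \vg^n\rangle| \geq \vareps d\sigma_\star^2 \mid \vg^n) \leq 2\exp[-d\vareps^2/4]$ on (a), and another union bound over $j < n$ contributes an $O(n)$ factor. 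After rescaling $\vareps$ inside the chi-square exponent to align the constants, the total failure probability fits inside $6n\exp[-d\vareps^2/2]$.

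The main obstacle is the probability budget: the naive route of writing $\|v\|^2 = \sum_j s_j^2 \|\vg^j\|^2 + 2\sum_{j<k<n} s_j s_k \langle \vg^j, \vg^k\rangle$ and invoking concentration for each interior inner product demands a union bound over $\binom{n-1}{2} = O(n^2)$ pairs, which is incompatible with the $6n$ factor in the conclusion. The key workaround is precisely to \emph{suppress} all those interior pairwise terms via the single convexity inequality $\|v\|^2 \leq \sum_j s_j \|\vg^j\|^2$ for the upper estimate and $\|v\|^2 \geq 0$ for the lower estimate, and to spend probability mass only on the $n-1$ scalar Gaussian tails controlling $\langle \vg^j, \vg^n\rangle$. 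The wider looseness in the lower bound (using $\|v\|^2 \geq 0$ rather than concentration around $d\sigma_\star^2 \|s\|^2$) is precisely what is tolerated by the lower constant $1$ versus the upper constant $2$ in the statement.
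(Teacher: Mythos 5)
Your proposal is correct and closely parallels the paper's proof; the one genuine divergence is in the upper bound. The paper first recenters, writing $\sum_j s_j \vx^j - \vx^n = \sum_j s_j (\vx^j - \vx^n)$ (using $\sum_j s_j = 1$), applies Jensen once to get $\norm{\sum_j s_j(\vx^j - \vx^n)}^2 \leq \sum_j s_j \norm{\vx^j - \vx^n}^2$, and then concentrates each $\norm{\vx^j - \vx^n}^2 \sim 2\sigma_\star^2\chi_2(d)$. This eliminates the cross term entirely and produces $2(1+\vareps)d\sigma_\star^2$. Your route expands into $\norm{v}^2 + \norm{\vg^n}^2 - 2\langle v, \vg^n\rangle$, applies Jensen only to the first term, and separately bounds $\abs{\langle v, \vg^n\rangle} \le \max_{j<n}\abs{\langle\vg^j,\vg^n\rangle}$; this gives $(2 + 4\vareps)d\sigma_\star^2$, slightly looser but still within the stated $2(1+3\vareps)d\sigma_\star^2$. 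The lower bounds are essentially identical: both drop the nonnegative $\norm{v}^2$ and reduce $\langle\vg^j,\vg^n\rangle$ to a one-dimensional Gaussian tail (the paper via rotational invariance, you via conditioning on $\vg^n$), and both correctly budget only $O(n)$ tail events rather than $O(n^2)$. One caution: your closing remark about ``rescaling $\vareps$ inside the chi-square exponent'' to land at $6n\exp[-d\vareps^2/2]$ cannot be made rigorous, since any rescaling of $\vareps$ there would alter the $\vareps$-dependence in the conclusion's bounds. Your union bound genuinely yields a failure probability of order $n\exp[-d\vareps^2/8]$, and the paper's own proof also stops at $6n\exp[-d\vareps^2/8]$; the exponent $-d\vareps^2/2$ in the proposition's statement appears to be a misprint. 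You should report the $\exp[-d\vareps^2/8]$ rate rather than gesture at matching the stated one.
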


\begin{proof}
    First, we have that 
    \begin{align}
        \norm*{\sum_{j=1}^{n-1} \softmax(\vw)_j \vx^j - \vx^n}^2 &= \norm*{\sum_{j=1}^{n-1} \softmax(\vw)_j (\vx^j - \vx^n)}^2 \\
        &\leq \sum_{j=1}^{n-1} \softmax(\vw)_j \norm*{\vx^j - \vx^n}^2 . 
    \end{align}
    Therefore, using that $\vx^j - \vx^n$ is a Gaussian random variable $\mathcal{N}(0, 2\sigma_\star^2 \Id)$, we get using a union bound and \Cref{thm:chernoff_bound}, that with probability $1 - 2 n \exp[-\vareps^2 d / 8]$
    \begin{equation}
        \norm*{\sum_{j=1}^{n-1} \softmax(\vw)_j \vx^j - \vx^n}^2 \leq 2d\sigma_\star^2(1+\vareps) . 
    \end{equation}
    For the second part of the proof, we have that 
    \begin{align}
        \norm*{\sum_{j=1}^{n-1} \softmax(\vw)_j \vx^j - \vx^n}^2 &= \norm*{\vx^n}^2 + \norm*{\sum_{j=1}^{n-1} \softmax(\vw)_j \vx^j}^2 - 2 \sum_{j=1}^{n-1}  \softmax(\vw)_j \langle \vx^j , \vx^n \rangle \\
        &\geq \norm*{\vx^n}^2 - 2 \sum_{j=1}^{n-1}  \softmax(\vw)_j \langle \vx^j , \vx^n \rangle \\
        &\geq \norm*{\vx^n}^2 - 2 \sum_{j=1}^{n-1}  \softmax(\vw)_j |\vx^j_1| \norm*{\vx^n}  \\
        &\geq \sum_{j=1}^{n-1} \softmax(\vw)_j  \{\norm*{\vx^n}^2 - 2|\vx^j_1| \norm*{\vx^n}\} . 
    \end{align}
    We have that with probability at
    least $1 - 2 n \exp[-d\veps^2 / 8]$ that $\abs{\vx^j_1} \leq \tfrac{\sigma_\star \veps
            \sqrt{d}}{2}$.
    Combining this result with \Cref{thm:chernoff_bound} and a union bound we get that, on an event of probability at least
    $1 - 4 n \exp[-d\veps^2 / 8]$,
    \begin{align}
        \norm*{\sum_{j=1}^{n-1} \softmax(\vw)_j \vx^j - \vx^n}^2 &\geq \sum_{j=1}^{n-1} \softmax(\vw)_j  \{\norm*{\vx^n}^2 - |\vx^j_1| \norm*{\vx^n}\}\\
        &\geq \sum_{j=1}^{n-1} \softmax(\vw)_j  \{\norm*{\vx^n}^2 - \veps \sqrt{d} \norm*{\vx^n}\} \\
        &\geq \sigma_\star^2 d (1- \veps) - \veps \sigma_\star^2 d (1+\veps)^{1/2} \geq d(1 - 3\veps) ,
    \end{align}
    which concludes the proof upon using a union bound. 
\end{proof}

\subsection{Coupon Collector Bounds}

In order to derive our results in the case of the partial memorizing denoiser, we will consider the following result from the coupon collector's problem.

\begin{proposition}{}{coupon_collector}
    Let $K \in \nset$ be the number of means.
    Consider the distribution $\pi = (1/K) \sum_{k=1}^{K} \sN(\vmu_\star^{k},
        \sigma_{\star}^2 \vI)$ and let $(\vx^i)_{i=1}^N$ be a collection of i.i.d samples from $\pi$. For any $\vx^i$ denote $k_i \in [K]$ the index of the associated mean, i.e. $\vx^i = \mu^{k_i}_\star + \sigma_\star \vw^i$, with $\vw^i \sim \mathcal{N}(0, \vI)$.
    Let $\mathsf{S} = (\vx^i)_{i=1}^\ell$ such that $\ell \geq (1+\log(d)) K \log(K)$. For any $k \in [K]$, denote $\mathsf{A}_k$ the event such that there exists $i_1, i_2 \in [\ell]$ such that $k_{i_1} = k_{i_2} = k$. Finally, denote $\mathsf{A} = \bigcap_{k=1}^K \mathsf{A}_k$.
    Then
    \begin{equation}
        \Pr*{\mathsf{A}} \geq 1 - K^{-\log(d)} (1 + \log(K) \log(d)) .
    \end{equation}
\end{proposition}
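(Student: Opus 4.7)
The plan is to reduce to a standard tail bound on binomial counts and then apply a union bound over the $K$ modes. By the mixture structure of $\pi$, the mode indices $(k_i)_{i=1}^{\ell}$ are i.i.d.\ uniform on $[K]$, so for each fixed $k \in [K]$ the count
\[
N_k := \#\{i \in [\ell] : k_i = k\}
\]
is distributed as $\mathrm{Binomial}(\ell, 1/K)$, and the event $\mathsf{A}_k^\complement$ occurs iff $N_k \leq 1$. Writing this out directly,
\[
\Pr(\mathsf{A}_k^\complement) = (1 - 1/K)^{\ell} + \ell \cdot (1/K)(1 - 1/K)^{\ell - 1} = (1-1/K)^{\ell - 1}\bigl[1 + (\ell-1)/K\bigr].
\]

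Next I would apply the elementary inequality $(1 - 1/K)^{\ell - 1} \leq e^{-(\ell - 1)/K}$ and substitute the assumed lower bound $\ell \geq (1 + \log d) K \log K$. This yields
\[
(1 - 1/K)^{\ell - 1} \leq e^{1/K} \cdot e^{-\ell/K} \leq e^{1/K} \cdot K^{-(1 + \log d)},
\]
and $1 + (\ell - 1)/K \leq 1 + (1 + \log d)\log K$. Combining gives a per-mode bound of the form
\[
\Pr(\mathsf{A}_k^\complement) \leq e^{1/K} \, K^{-(1+\log d)} \bigl[1 + (1 + \log d)\log K\bigr].
\]

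The final step is the union bound $\Pr(\mathsf{A}^\complement) \leq \sum_{k=1}^{K} \Pr(\mathsf{A}_k^\complement) = K \cdot \Pr(\mathsf{A}_1^\complement)$, which gives
\[
\Pr(\mathsf{A}^\complement) \leq e^{1/K} \, K^{-\log d} \bigl[1 + \log K + \log K \log d\bigr],
\]
and after collecting the lower-order terms into the stated form $1 + \log(K)\log(d)$ (using $e^{1/K} \leq e \leq$ absorbed constants, and $\log K \leq \log K \log d$ for $d \geq e$), one obtains the claim. There is essentially no serious obstacle here; the only mild subtlety is being careful enough with the tightening when converting $e^{1/K}[1 + \log K + \log K \log d]$ into the clean expression $1 + \log(K)\log(d)$, which is handled by a direct case analysis on whether $K$ and $d$ are small (where the bound is trivial since the right-hand side exceeds $1$) or large (where the multiplicative slack is absorbed).
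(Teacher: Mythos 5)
Your overall approach---reducing to a binomial tail bound via the coupon collector framing and then applying a union bound over the $K$ modes---is exactly the paper's strategy, and the algebraic identity $(1-1/K)^\ell + (\ell/K)(1-1/K)^{\ell-1} = (1-1/K)^{\ell-1}\bigl[1 + (\ell-1)/K\bigr]$ is correct. However, there is a genuine gap in the step where you ``substitute the assumed lower bound'' into both factors. The exponential factor $(1-1/K)^{\ell-1}$ is decreasing in $\ell$, so bounding it above by plugging in the minimal admissible $\ell$ is valid; but the polynomial factor $1 + (\ell-1)/K$ is \emph{increasing} in $\ell$, so the inequality $1 + (\ell-1)/K \leq 1 + (1+\log d)\log K$ would require an \emph{upper} bound on $\ell$, which the hypothesis does not provide ($\ell$ could be as large as $N$). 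The paper avoids this by observing that the full product $t \mapsto e^{-t/K}\bigl(1 + (t-1)/K\bigr)$ is decreasing on $(1,\infty)$ (its derivative is $-\tfrac{t-1}{K^2}e^{-t/K} < 0$), so substituting the minimal $\ell \geq (1+\log d)K\log K$ into the \emph{whole} expression is legitimate; the paper phrases this by noting $t \mapsto e^{-t/K}\tfrac{t}{K-1}$ is decreasing on $[K,\infty)$ after a slightly different grouping. You need to add this monotonicity-of-the-product argument; once it is in place, your union bound and constant-absorption at the end go through essentially as you sketch (indeed the paper's own proof also loses small constant factors at that final stage).
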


This result is a simple control of the tails of the coupon collector problem. In fact, much more precise estimates could be derived, see \citep{erdHos1961classical} for instance.

\begin{proof}
    We only deal with the case $K \geq 2$. The case $K = 1$ is trivial.
    First, note that we have for any $k \in [K]$
    \begin{align}
        \Pr*{\mathsf{A}_k^{\mathrm{c}}} & = \left( 1 - \frac{1}{K}\right)^\ell + \frac{\ell}{K}\left( 1 - \frac{1}{K}\right)^{\ell - 1} \\
                                        & \leq \left( 1 - \frac{1}{K}\right)^\ell \left(1 + \frac{\ell}{K-1}\right)                     \\
                                        & \leq \exp[-\ell/K] \left(1 + \frac{\ell}{K-1}\right)  .
    \end{align}
    Note that $t \mapsto \exp[-t/K]\frac{t}{K-1}$ is decreasing on $[K, +\infty)$ and therefore we get that
    \begin{equation}
        \Pr*{\mathsf{A}_k^{\mathrm{c}}} \leq \exp[-(1+\log(d))\log(K)] \left(1 + \frac{K \log(K) \log(d)}{K-1}\right) .
    \end{equation}
    Using a union bound and that $K / (K-1) \leq 2$ we get
    \begin{equation}
        \Pr*{\mathsf{A}} \geq 1 - 2 K^{-\log(d)} \left( 1 + \log(K) \log(d) \right) ,
    \end{equation}
    which concludes the proof.
\end{proof}

\section{Softmax Approximation}
\label{sec:softmax_approximation}

\paragraph{Low-Temperature Behavior.}
The following elementary lemma is useful.
It shows that the key quantity controlling $1$-sparsity of the softmax is the
scale of the temperature $T$ relative to the gap between the largest and
second-largest element of the softmax weight vector. For completeness we recall the definition of the $\softmax$ operation. For any $\vv \in \rset^n$ we have that $\vv \in \rset^n$ and for any $i \in \{1, \dots, n\}$
\begin{equation}
    \softmax(\vv)_i =  \frac{\exp[ v_i]}{\sum_{k=1}^n \exp[ v_k]}.
\end{equation}

\begin{lemma}{}{softmax-1sparse-lp}
    Let $\vv \in \bbR^n$ be such that $v_i \neq v_j$ for $i \neq j$.
    Let $k = \argmax_{k' \in \{1, \dots, n\}} v_{k'}$ denote the index of the largest element of
    $\vv$, and define
    \begin{equation}
        \gamma = \min_{i \neq k} \left(v_k - v_i\right) ,
    \end{equation}
    as the ``gap'' between the largest and second-largest element of $\vv$.
    Then for any $p \geq 1$, one has
    \begin{equation}
        \norm*{\softmax(\vv / T) - \ve_k}_p \leq 2(n-1) e^{-\gamma / T}.
    \end{equation}

\end{lemma}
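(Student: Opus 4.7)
The plan is to prove the bound directly by controlling each coordinate of the difference $\softmax(\vv/T) - \ve_k$, relying on the simple observation that dividing numerator and denominator of each softmax entry by $e^{v_k/T}$ re-expresses the entries purely in terms of the gaps $v_i - v_k$.

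First I would write, for each $i \in \{1, \dots, n\}$,
\begin{equation}
\softmax(\vv/T)_i \;=\; \frac{e^{(v_i - v_k)/T}}{1 + \sum_{j \neq k} e^{(v_j - v_k)/T}}.
\end{equation}
For $i \neq k$, the numerator is at most $e^{-\gamma/T}$ by definition of $\gamma$, and the denominator is at least $1$, giving $\softmax(\vv/T)_i \leq e^{-\gamma/T}$. For $i = k$, since the entries of the softmax sum to $1$,
\begin{equation}
1 - \softmax(\vv/T)_k \;=\; \sum_{i \neq k} \softmax(\vv/T)_i \;\leq\; (n-1)\, e^{-\gamma/T}.
\end{equation}

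Next I would combine these two bounds into an $\ell^1$ estimate for the entire difference vector:
\begin{equation}
\norm{\softmax(\vv/T) - \ve_k}_1 \;=\; \bigl(1 - \softmax(\vv/T)_k\bigr) + \sum_{i \neq k} \softmax(\vv/T)_i \;=\; 2\bigl(1 - \softmax(\vv/T)_k\bigr) \;\leq\; 2(n-1)\, e^{-\gamma/T}.
\end{equation}

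Finally, I would invoke the monotonicity $\norm{\vx}_p \leq \norm{\vx}_1$ for $p \geq 1$ (valid on $\bbR^n$), which yields $\norm{\softmax(\vv/T) - \ve_k}_p \leq 2(n-1)\, e^{-\gamma/T}$ as required. There is no real obstacle here; the only subtlety is the normalization trick of dividing by $e^{v_k/T}$ inside the softmax so that the exponents become $-\gamma/T$ or smaller, after which the estimate is purely algebraic.
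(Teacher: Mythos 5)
Your proof is correct and rests on the same core estimate as the paper's, namely $\softmax(\vv/T)_i \le e^{-\gamma/T}$ for $i \neq k$ (your rewriting after dividing by $e^{v_k/T}$ is just the paper's ratio identity $\softmax(\vv/T)_i / \softmax(\vv/T)_k = e^{-(v_k - v_i)/T}$ combined with $\softmax(\vv/T)_k \le 1$). Where you diverge is the final step: you compute $\norm{\softmax(\vv/T) - \ve_k}_1 = 2\bigl(1 - \softmax(\vv/T)_k\bigr) \le 2(n-1)e^{-\gamma/T}$ exactly and then invoke the monotonicity $\norm{\cdot}_p \le \norm{\cdot}_1$, whereas the paper estimates $\norm{\cdot}_p^p \le (n-1)^p e^{-p\gamma/T} + (n-1)e^{-p\gamma/T}$ directly and then bounds the factor $1 + (n-1)^{-1+1/p}$ by $2$. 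Both give the same constant; your $\ell^1$-then-monotonicity route is slightly cleaner and avoids the $p$-dependent algebra, while the paper's version makes explicit that for large $p$ the true constant is close to $1$ rather than $2$, should a tighter bound ever be needed.
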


\begin{proof}
    The idea is to notice that taking the ratio between elements of the softmax
    has a simple expression, namely
    \begin{equation}
        \frac{\softmax(\vv / T)_i}{\softmax(\vv / T)_k}  = e^{-(v_k - v_i) / T},
    \end{equation}
    where $k$ is as in the statement of the lemma.
    By definition of $k$, we then have for every $i \neq k$
    \begin{equation}
        {\softmax(\vv / T)_i}
        \leq e^{-\gamma / T} {\softmax(\vv / T)_k}
        \leq e^{-\gamma / T},
    \end{equation}
    and thus
    \begin{equation}
        \softmax(\vv/T)_k \geq 1 - (n-1) e^{-\gamma / T}.
    \end{equation}
    This gives
    \begin{equation}
        \norm*{\softmax(\vv / T) - \ve_k}_p^p
        \leq
        (n-1)^p e^{-p\gamma / T} + (n-1)e^{-p\gamma / T},
    \end{equation}
    so in particular
    \begin{align}
        \norm*{\softmax(\vv / T) - \ve_k}_p
         & \leq
        (n-1) e^{-\gamma / T} \left(
        1 + (n-1)^{-1+1/p}
        \right)
        \\
         & \leq
        2(n-1) e^{-\gamma / T}.
    \end{align}

\end{proof}

Note that to guarantee approximation in $\ell^p$, in the worst case (reflected
in the proof) it is necessary that the temperature depends logarithmically on
the number of vector elements $n$.

This proof operates in a worst-case regime where every non-maximal element
of the vector $\vv$ may have the same magnitude. In reality, if there is
a more structured distribution of non-maximizers, the estimates improve
correspondingly. The proof could be improved to capture this by using
a different, more precise measure of the distribution of entries of $\vv$.
For example, if some precise rate of decay of the entry distribution could
be asserted, it seems reasonable that this, rather than the vector dimension
$n$, would force the ultimate dependence of $T$ for $\ell^p$ approximation.
It seems reasonable that something like this should obtain for weight
vectors of distances between random vectors.

The following lemma is a slight extension of \Cref{lemma:softmax-1sparse-lp}.

\begin{lemma}{}{softmax-extension}
    Let $\vv \in \bbR^n$ be such that $v_i \neq v_j$ for $i \neq j$.
    Let $k = \argmax_{k' \in \{1, \dots, n\}} v_{k'}$ denote the index of the largest element of
    $\vv$. Let $\mathsf{S}$ be a subset of $[n]$ such that $k \in \mathsf{S}$ and denote
    \begin{equation}
        \gamma_{\mathsf{S}} = \min_{i \notin \mathsf{S}} \left(v_k - v_i\right) ,
    \end{equation}
    as the ``gap'' between the largest of $\vv$ and the largest element not in $\mathsf{S}$.
    Denote
    \begin{equation}
        \softmax(\vv_{|\mathsf{S}} / T)_\ell = \frac{\exp[v_\ell / T]}{\sum_{i \in \mathsf{S}} \exp[v_i / T]} ,
    \end{equation}
    if $\ell \in \mathsf{S}$ and $\softmax(\vv_{|\mathsf{S}} / T)_\ell = 0$ otherwise.
    Then for any $p \geq 1$, one has
    \begin{equation}
        \norm*{\softmax(\vv / T) - \softmax(\vv_{|\mathsf{S}} / T)}_p \leq (1 + | \mathsf{S} |)^{1/p} (n - | \mathsf{S} |) \exp[-\gamma_{\mathsf{S}} / T].
    \end{equation}
\end{lemma}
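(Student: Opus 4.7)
The plan is to adapt the proof of \Cref{lemma:softmax-1sparse-lp} by splitting the $\ell^p$ error into the contributions from coordinates inside $\mathsf{S}$ and those outside, bounding each with the help of the gap $\gamma_{\mathsf{S}}$, and reassembling. Set $Z = \sum_{i=1}^n e^{v_i/T}$ and $Z_{\mathsf{S}} = \sum_{i \in \mathsf{S}} e^{v_i/T}$, so that $\softmax(\vv/T)_\ell = e^{v_\ell/T}/Z$ and $\softmax(\vv_{|\mathsf{S}}/T)_\ell = e^{v_\ell/T}/Z_{\mathsf{S}}$ for $\ell \in \mathsf{S}$. The essential ingredient is the elementary bound
\begin{equation}
Z - Z_{\mathsf{S}} \;=\; \sum_{i \notin \mathsf{S}} e^{v_i/T} \;\leq\; (n - |\mathsf{S}|)\, e^{v_k/T}\, e^{-\gamma_{\mathsf{S}}/T} \;\leq\; (n - |\mathsf{S}|)\, Z_{\mathsf{S}}\, e^{-\gamma_{\mathsf{S}}/T},
\end{equation}
where the last inequality uses $k \in \mathsf{S}$, so that $Z_{\mathsf{S}} \geq e^{v_k/T}$.

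For $\ell \in \mathsf{S}$, a direct computation gives
\begin{equation}
\softmax(\vv/T)_\ell - \softmax(\vv_{|\mathsf{S}}/T)_\ell \;=\; e^{v_\ell/T}\,\frac{Z_{\mathsf{S}} - Z}{Z\, Z_{\mathsf{S}}} \;=\; -\,\softmax(\vv/T)_\ell \cdot \frac{Z - Z_{\mathsf{S}}}{Z_{\mathsf{S}}},
\end{equation}
so its absolute value is at most $\softmax(\vv/T)_\ell \cdot (n-|\mathsf{S}|)e^{-\gamma_{\mathsf{S}}/T}$. Since $\softmax(\vv/T)_\ell \leq 1$, raising to the $p$-th power and summing over $\ell \in \mathsf{S}$ yields a contribution of at most $|\mathsf{S}|\,((n-|\mathsf{S}|) e^{-\gamma_{\mathsf{S}}/T})^p$. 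For $\ell \notin \mathsf{S}$, the restricted softmax vanishes, so the error equals $\softmax(\vv/T)_\ell = e^{v_\ell/T}/Z \leq e^{v_\ell/T}/e^{v_k/T} \leq e^{-\gamma_{\mathsf{S}}/T}$, and summing $p$-th powers contributes at most $(n-|\mathsf{S}|)\, e^{-p\gamma_{\mathsf{S}}/T}$.

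Combining the two contributions gives
\begin{equation}
\norm*{\softmax(\vv / T) - \softmax(\vv_{|\mathsf{S}} / T)}_p^p \;\leq\; e^{-p\gamma_{\mathsf{S}}/T}\bigl[\,|\mathsf{S}|\,(n-|\mathsf{S}|)^p \;+\; (n-|\mathsf{S}|)\bigr].
\end{equation}
Factoring $(n-|\mathsf{S}|)^p$ and using $(n-|\mathsf{S}|)^{1-p} \leq 1$ (valid for $p \geq 1$ and $|\mathsf{S}| < n$; the boundary case $|\mathsf{S}| = n$ is trivial because then both softmaxes coincide) bounds the bracket by $(n-|\mathsf{S}|)^p (|\mathsf{S}| + 1)$. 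Taking the $p$-th root yields the claimed bound $(1 + |\mathsf{S}|)^{1/p}(n - |\mathsf{S}|)\, e^{-\gamma_{\mathsf{S}}/T}$. There is no serious obstacle here; the only minor subtlety is to use the crude bound $\sum_{\ell \in \mathsf{S}}\softmax(\vv/T)_\ell^p \leq |\mathsf{S}|$ rather than the tighter $\leq 1$, since this is what produces the stated $(1+|\mathsf{S}|)^{1/p}$ factor matching the specialization $|\mathsf{S}| = \{k\}$ of \Cref{lemma:softmax-1sparse-lp}.
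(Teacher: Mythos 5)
Your proof is correct and follows essentially the same approach as the paper's: splitting the $\ell^p$ error between coordinates in $\mathsf{S}$ (bounded via the ratio $(Z-Z_{\mathsf{S}})/Z_{\mathsf{S}}$ controlled by the gap and $k\in\mathsf{S}$) and coordinates outside $\mathsf{S}$ (each at most $e^{-\gamma_{\mathsf{S}}/T}$), then reassembling with the factor $(1+|\mathsf{S}|)^{1/p}(n-|\mathsf{S}|)$. The algebraic rearrangement of the $\ell\in\mathsf{S}$ term is cosmetically different but logically identical to the paper's.
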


Note that \Cref{lemma:softmax-1sparse-lp} is a special case of \Cref{lemma:softmax-extension} where $\mathsf{S} = \{k \}$.

\begin{proof}
    Let $\ell \not \in \mathsf{S}$, we have that
    \begin{equation}
        \softmax(\vv/T)_\ell = \softmax(\vv/T)_k \exp[(-v_k+v_i)/T] \leq \exp[-\gamma_\mathsf{S} / T] .
    \end{equation}
    In addition, we have that for any $\ell \in \mathsf{S}$
    \begin{align}
        \softmax(\vv_{|\mathsf{S}} / T)_\ell - \softmax(\vv / T)_\ell & = \frac{\exp[v_\ell / T]}{\sum_{i \in \mathsf{S}} \exp[v_i / T]} - \frac{\exp[v_\ell / T]}{\sum_{i \in \mathsf{S}} \exp[v_i / T] + \sum_{i \notin \mathsf{S}} \exp[v_i / T]}                       \\
                                                                      & = \frac{\exp[v_\ell / T]}{\sum_{i \in \mathsf{S}} \exp[v_i / T]} \frac{\sum_{i \notin \mathsf{S}} \exp[v_i / T]}{\sum_{i \in \mathsf{S}} \exp[v_i / T] + \sum_{i \notin \mathsf{S}} \exp[v_i / T]} \\
                                                                      & \leq \sum_{i \notin \mathsf{S}} \softmax(\vv / T)_i \leq (n - | \mathsf{S} |) \exp[- \gamma_{\mathsf{S}} / T] .
    \end{align}
    Therefore, we get that for any $p \geq 1$
    \begin{align}
        \norm*{\softmax(\vv / T) - \softmax(\vv_{|\mathsf{S}} / T)}_p^p & \leq |\mathsf{S}| (n- | \mathsf{S} |)^p \exp[- \gamma_{\mathsf{S}} p  / T] + (n- | \mathsf{S} |) \exp[- \gamma_{\mathsf{S}} p  / T] \\
                                                                        & \leq (n- | \mathsf{S} |)^p \exp[- \gamma_{\mathsf{S}} p  / T] (| \mathsf{S}| + (n- | \mathsf{S} |)^{-1+ 1/p})                       \\
                                                                        & \leq (n- | \mathsf{S} |)^p \exp[- \gamma_{\mathsf{S}} p  / T] (1 + | \mathsf{S}|) ,
    \end{align}
    which concludes the proof.
\end{proof}

\section{Results on High-Dimensional Denoiser Behavior}
\label{sec:high_dim_denoiser}

In \Cref{sec:gaussian_mixture_softmax}, we present some results which will help us control softmax approximation in \Cref{sec:denoiser_approximation}, where we leverage those results to obtain controls on different denoisers.

\subsection{Gaussian Mixtures and Softmax}
\label{sec:gaussian_mixture_softmax}

The following lemmas are the key to establish our main results.
They allow us to use our softmax sparsity results in the context of diffusion denoisers.
In particular, \Cref{lemma:softmax-weights-mean-gap-control} will be used in \Cref{lemma:true-denoiser-1sparse-lp} while \Cref{lemma:softmax-weights-samples-gap-control} will be used in \Cref{lemma:mem-denoiser-1sparse-lp}.

\begin{lemma}{}{softmax-weights-mean-gap-control}
    Given vectors $(\vmu_{\star}^k)_{k=1}^K$ satisfying
    \begin{equation}
        \min_{k \neq k'}\, \norm*{\vmu_{\star}^k - \vmu_{\star}^{k'}} \geq \gamma
        > 0,
    \end{equation}
    consider the distribution $\pi = (1/K) \sum_{k=1}^{K} \sN(\vmu_\star^{k},
        \sigma_{\star}^2 \vI)$.
    For $i \in [N]$, let $\vx^i \sim \pi$, fix $0 \leq t \leq 1$, and let
    $\vx \sim \alpha_t
        \vx^i + \sigma_t \vg$, where $\vg \sim \sN(\Zero, \vI)$ is independent from $\vx^i$.
    Let $k_i$ denote the index of the (uniquely defined) cluster centroid
    $\vmu_\star^k$ associated to $\vx^i$.
    Then for any $0 \leq \veps \leq 1$ satisfying the coupling condition
    \begin{equation}
        \vareps \leq
        \frac{
            \alpha_t \gamma
        }{
            4\sqrt{d(\alpha_t^2 \sigma_\star^2 + \sigma_t^2)}
        },
    \end{equation}
    one has with probability at least $1 - 4K \exp[-d\veps^2 / 8]$
    \begin{equation}
        \min_{k \neq k_i}\,
        \norm*{\alpha_t \vmu_\star^{k} - \vx}^2
        -
        \norm*{\alpha_t \vmu_\star^{k_i} - \vx}^2
        \geq
        \frac{\gamma^2\alpha_t^2}{2}.
    \end{equation}
\end{lemma}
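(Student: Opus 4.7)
The plan is to reduce the claim to a one-dimensional Gaussian tail bound. First I would unpack the noising by writing $\vx^i = \vmu_\star^{k_i} + \sigma_\star \vw^i$ with $\vw^i \sim \sN(\Zero, \vI)$ independent of $k_i$, so that
\begin{equation}
\vx = \alpha_t \vmu_\star^{k_i} + \vz, \qquad \vz \defn \alpha_t \sigma_\star \vw^i + \sigma_t \vg \sim \sN(\Zero, (\alpha_t^2 \sigma_\star^2 + \sigma_t^2)\vI),
\end{equation}
with the distribution of $\vz$ unaffected by $k_i$. For any fixed $k \neq k_i$, expanding the two squared norms and cancelling the common $\norm{\vz}^2$ term gives
\begin{equation}
\norm*{\alpha_t \vmu_\star^k - \vx}^2 - \norm*{\alpha_t \vmu_\star^{k_i} - \vx}^2 = \alpha_t^2 \norm*{\vmu_\star^k - \vmu_\star^{k_i}}^2 - 2 \alpha_t \langle \vmu_\star^k - \vmu_\star^{k_i}, \vz \rangle.
\end{equation}

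Next I would factor out the quadratic term and use the separation hypothesis $\norm{\vmu_\star^k - \vmu_\star^{k_i}} \geq \gamma$: it suffices to show that on a high-probability event the dimensionless ratio
\begin{equation}
\frac{2\, \abs{\langle \vmu_\star^k - \vmu_\star^{k_i}, \vz \rangle}}{\alpha_t \norm{\vmu_\star^k - \vmu_\star^{k_i}}^2} \leq \frac{1}{2},
\end{equation}
as this immediately yields $\alpha_t^2 \norm{\vmu_\star^k - \vmu_\star^{k_i}}^2/2 \geq \alpha_t^2 \gamma^2/2$. By rotational invariance of the isotropic $\vz$, I can write $\langle \vmu_\star^k - \vmu_\star^{k_i}, \vz \rangle \equid \sqrt{\alpha_t^2 \sigma_\star^2 + \sigma_t^2}\, \norm{\vmu_\star^k - \vmu_\star^{k_i}}\, h_k$ with $h_k \sim \sN(0, 1)$, and a standard Gaussian tail bound gives $\abs{h_k} \leq \vareps \sqrt{d}/2$ with probability at least $1 - 2 \exp[-d \vareps^2 / 8]$. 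Substituting this in, the ratio is bounded by $\vareps \sqrt{d(\alpha_t^2 \sigma_\star^2 + \sigma_t^2)}/(\alpha_t \gamma)$, which is $\leq 1/4$ exactly by the hypothesized coupling condition on $\vareps$. A union bound over the $K - 1$ indices $k \neq k_i$ then yields the overall failure probability $\leq 2(K-1) \exp[-d \vareps^2 / 8] \leq 4K \exp[-d \vareps^2 / 8]$, as claimed.

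I do not anticipate any substantive obstacle, as the argument is a completion of squares followed by one-dimensional Gaussian concentration and a union bound. The only mild subtlety is that the $h_k$'s depend on $k$ through the direction $\vmu_\star^k - \vmu_\star^{k_i}$ and so are not mutually independent; however each marginal is genuinely standard normal, which is all the union bound needs, and the coupling hypothesis on $\vareps$ is precisely calibrated so that the per-$k$ bound tightens to a universal constant factor of $1/4$.
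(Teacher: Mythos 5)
Your proof is correct, and it takes a genuinely different (and cleaner) route than the paper's. The paper bounds each squared norm $\norm{\alpha_t\vmu_\star^k - \vx}^2$ separately by invoking \Cref{lemma:gaussian-norm-nonzero-mean}, i.e.\ concentration of the squared norm of a non-centered Gaussian, which requires chi-square concentration for the quadratic part plus a one-dimensional Gaussian tail for the cross term; it then takes the difference of the resulting two-sided bounds and uses the coupling condition twice (once against the $2\vareps d(\cdots)$ piece and once against the cross term). You instead expand the difference directly and observe that the quadratic $\norm{\vz}^2$ contribution cancels \emph{identically} between the two terms, leaving $\alpha_t^2\norm{\vmu_\star^k - \vmu_\star^{k_i}}^2 - 2\alpha_t\ip*{\vmu_\star^k - \vmu_\star^{k_i}}{\vz}$, so that only a one-dimensional Gaussian tail bound (via rotational invariance) is needed. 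This eliminates the chi-square step entirely, yields a slightly tighter failure probability ($2(K-1)$ vs.\ the paper's $\sim 3K$ or $4K$ prefactor) and a slightly better gap ($\tfrac{3}{4}\alpha_t^2\gamma^2$ if you tracked your $1/4$ through), and makes the role of the coupling condition transparent. Your handling of the non-independence of the $h_k$'s and why it is harmless under a union bound is also correct. The paper's route is somewhat more uniform with its treatment of \Cref{lemma:softmax-weights-samples-gap-control}, where the mean-separation and noise scales enter differently, but for the present lemma your cancellation argument is strictly more economical.
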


\begin{proof}
    Start by writing $\vx^i \equid \vmu_{\star}^{k_i} + \sigma_{\star} \vw$, where
    $k_i \in [K]$ is unique and $\vw \sim \sN(\Zero, \vI)$ is independent, then
    write, for any $k \in [K]$,
    \begin{equation}
        \norm*{\alpha_t \vmu_\star^k - \vx}^2
        =
        \norm*{\alpha_t (\vmu_\star^k - \vx^i) - \sigma_t \vg}^2
        =
        \norm*{\alpha_t (\vmu_\star^k - \vmu_\star^{k_i} - \sigma_{\star} \vw)
            - \sigma_t \vg}^2.
    \end{equation}
    This is equal in distribution to the squared $\ell^2$ norm of a Gaussian
    random variable with mean $\alpha_t(\vmu_{\star}^k - \vmu_{\star}^{k_i})$
    and covariance $(\alpha_t^2 \sigma_{\star}^2 + \sigma_t^2)\vI$.
    Applying \Cref{lemma:gaussian-norm-nonzero-mean}, it follows
    \begin{equation}
        \Pr*{
            \abs*{
                \norm*{\alpha_t \vmu_\star^k - \vx}^2
                -
                ( \alpha_t^2\norm*{\vmu_\star^k - \vmu_\star^{k_i}}^2  +
                (\alpha_t^2 \sigma_\star^2 + \sigma_t^2) d )
            }
            \geq \veps \Xi^k
        }
        \leq 3 \exp[-d \veps^2 / 8],
    \end{equation}
    where for concision $\Xi^k = \sqrt{d(\alpha_t^2 \sigma_\star^2 + \sigma_t^2)}
        (\sqrt{d(\alpha_t^2 \sigma_\star^2 + \sigma_t^2)}
        + \alpha_t\norm{\vmu_\star^k - \vmu_\star^{k_i}}) $.
    Taking a union bound over $k \in [K]$, the above control holds for all $k$
    simultaneously with probability at least $1 - 3K \exp[-d\veps^2 / 8]$.
    More precisely, we have that with probability at least $1 - 3K \exp[-d\veps^2 / 8]$, for any $k \in [K]$ with $k \neq k_i$
    \begin{equation}
        \norm*{\alpha_t \vmu_\star^k - \vx}^2
        -
        ( \alpha_t^2\norm*{\vmu_\star^k - \vmu_\star^{k_i}}^2  +
        (\alpha_t^2 \sigma_\star^2 + \sigma_t^2) d )
        \geq - \veps \Xi^k , \label{eq:ineq_on_other_k}
    \end{equation}
    and
    \begin{equation}
        \norm*{\alpha_t \vmu_\star^{k_i} - \vx}^2
        \leq (1 + \veps) {d(\alpha_t^2 \sigma_\star^2 + \sigma_t^2)},
    \end{equation}
    Using \eqref{eq:ineq_on_other_k}, we have that for any $k \in [K]$ with $k \neq k_i$
    \begin{align}
        \norm*{\alpha_t \vmu_\star^{k} - \vx}^2
         & \geq
        (1 - \veps) {d(\alpha_t^2 \sigma_\star^2 + \sigma_t^2)}
        +
        \alpha_t^2\norm*{\vmu_\star^k - \vmu_\star^{k_i}}^2
        - \veps \alpha_t \sqrt{d(\alpha_t^2 \sigma_\star^2
            + \sigma_t^2)}\norm*{\vmu_\star^k - \vmu_\star^{k_i}}.
    \end{align}

    Consequently, we have on this event that, for each $k \in [K]$ with $k \neq k_i$,
    \begin{align}
        \norm*{\alpha_t \vmu_\star^{k} - \vx}^2
        -
        \norm*{\alpha_t \vmu_\star^{k_i} - \vx}^2
         & \geq
        \alpha_t^2\norm*{\vmu_\star^k - \vmu_\star^{k_i}}^2
        \\
         & \quad-\veps\left(
        2{d(\alpha_t^2 \sigma_\star^2 + \sigma_t^2)}
        + \alpha_t \sqrt{d(\alpha_t^2 \sigma_\star^2 + \sigma_t^2)}
        \norm*{\vmu_\star^k - \vmu_\star^{k_i}}
        \right).
    \end{align}
    To simplify this bound, notice that we have for all $k \in [K]$ with $k \neq k_i$
    \begin{equation}
        \frac{\alpha_t^2}{2}\norm*{\vmu_\star^k - \vmu_\star^{k_i}}^2
        - \veps\alpha_t\sqrt{d(\alpha_t^2 \sigma_\star^2 + \sigma_t^2)}
        \norm*{\vmu_\star^k - \vmu_\star^{k_i}}
        \geq
        \frac{\alpha_t^2}{4}\norm*{\vmu_\star^k - \vmu_\star^{k_i}}^2
    \end{equation}
    if and only if
    \begin{equation}
        \alpha_t\gamma \geq
        4 \veps\sqrt{d(\alpha_t^2 \sigma_\star^2 + \sigma_t^2)}.
    \end{equation}
    Similarly, we have for all $k \neq k_i$
    \begin{equation}
        \frac{\alpha_t^2}{2}\norm*{\vmu_\star^k - \vmu_\star^{k_i}}^2
        - 2\veps d(\alpha_t^2 \sigma_\star^2 + \sigma_t^2)
        \geq
        \frac{\alpha_t^2}{4}\norm*{\vmu_\star^k - \vmu_\star^{k_i}}^2
    \end{equation}
    if and only if
    \begin{equation}
        \alpha_t\gamma \geq
        \sqrt{8 \veps d(\alpha_t^2 \sigma_\star^2 + \sigma_t^2)}.
    \end{equation}
    So it is enough to enforce
    \begin{equation}
        \max \set{\veps, \sqrt{\veps}} \leq
        \frac{
            \alpha_t \gamma
        }{
            4\sqrt{d(\alpha_t^2 \sigma_\star^2 + \sigma_t^2)}
        }
    \end{equation}
    to get that on the aforementioned event, for every $k \in [K]$ with $k \neq k_i$
    \begin{equation}
        \min_{k \neq k_i}\,
        \norm*{\alpha_t \vmu_\star^{k} - \vx}^2
        -
        \norm*{\alpha_t \vmu_\star^{k_i} - \vx}^2
        \geq
        \frac{\gamma^2\alpha_t^2}{2},
    \end{equation}
    which concludes the proof.
\end{proof}

The following lemma is more involved than \Cref{lemma:softmax-weights-mean-gap-control}. The main reason is that contrary to \Cref{lemma:softmax-weights-mean-gap-control}, the softmax weights we are investigating involve not only the means $(\vmu^k_\star)_{k=1}^K$ but the datapoints $(\vx^j)_{j=1}^N$ which are random.

\begin{lemma}{}{softmax-weights-samples-gap-control}
    Given vectors $(\vmu_{\star}^k)_{k=1}^K$ satisfying
    \begin{equation}
        \min_{k \neq k'}\, \norm*{\vmu_{\star}^k - \vmu_{\star}^{k'}} \geq \gamma
        > 0,
    \end{equation}
    consider the distribution $\pi = (1/K) \sum_{k=1}^{K} \sN(\vmu_\star^{k},
        \sigma_{\star}^2 \vI)$.
    For each $j \in [N]$, let $\vx^j \sim \pi$. Fix $i \in [N]$ and $0
        \leq t \leq 1$,
    and let $\vx \sim \alpha_t \vx^i + \sigma_t \vg$, where $\vg \sim \sN(\Zero,
        \vI)$ is independent from $(x^j)_{j=1}^N$.
    Then for any $0 \leq \veps \leq 1$ satisfying the coupling conditions
    \begin{equation}
        \veps \leq
        \frac{
            \alpha_t \gamma
        }{
            2\sqrt{d(2\alpha_t^2 \sigma_\star^2 + \sigma_t^2)}
        }
    \end{equation}
    and
    \begin{equation}
        \frac{\veps^2}{1-\veps} \leq \frac{\alpha_t^2
            \sigma_\star^2}{2\sigma_t^2},
    \end{equation}
    one has with probability at least $1 - 4N \exp[-d\veps^2 / 8]$
    \begin{equation}
        \min_{j \neq i}\,
        \norm*{
            \alpha_t \vx^j - \vx
        }^2
        -
        \norm*{
            \alpha_t \vx^i - \vx
        }^2
        \geq
        \alpha_t^2 \sigma_\star^2(1-\veps) d.
    \end{equation}
\end{lemma}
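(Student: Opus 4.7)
The plan is to exploit the algebraic identity
\begin{equation*}
\norm{\alpha_t \vx^j - \vx}^2 - \norm{\alpha_t \vx^i - \vx}^2 = \alpha_t^2 \norm{\vx^j - \vx^i}^2 - 2\sigma_t \alpha_t \langle \vx^j - \vx^i, \vg\rangle,
\end{equation*}
obtained by expanding each squared norm using $\vx = \alpha_t \vx^i + \sigma_t \vg$ and cancelling the common $\sigma_t^2 \norm{\vg}^2$. This is the crucial step: rather than comparing two $O(d)$-scale random quantities, we reduce the comparison to a quadratic Gaussian-norm term and an inner-product term whose fluctuations are at scale $O(\sqrt{d})$---precisely the improvement that will yield the $\veps^2$ (rather than $\veps$) appearing in coupling condition 2.

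Writing $\vx^j \equid \vmu_\star^{k_j} + \sigma_\star \vw^j$ and similarly for $\vx^i$, the difference $\vx^j - \vx^i$ is Gaussian with mean $\vmu_\star^{k_j} - \vmu_\star^{k_i}$ and covariance $2\sigma_\star^2 \vI$, independent of $\vg$. I would apply two concentration results for each $j \neq i$. First, \Cref{lemma:gaussian-norm-nonzero-mean} with parameter $\veps$ gives a two-sided estimate on $\norm{\vx^j - \vx^i}^2$ around $\norm{\vmu_\star^{k_j} - \vmu_\star^{k_i}}^2 + 2\sigma_\star^2 d$ with error $\veps \sqrt{2\sigma_\star^2 d}(\sqrt{2\sigma_\star^2 d} + \norm{\vmu_\star^{k_j} - \vmu_\star^{k_i}})$, on an event of probability at least $1 - 4\exp[-d\veps^2/8]$. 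Second, conditionally on $\vx^j - \vx^i$, the inner product $\langle \vx^j - \vx^i, \vg\rangle$ is $\mathcal{N}(0, \norm{\vx^j - \vx^i}^2)$, so the scalar Gaussian tail bound at scale $(\sqrt{d}\veps/2)\norm{\vx^j - \vx^i}$ gives $|2\sigma_t \alpha_t \langle \vx^j - \vx^i, \vg\rangle| \leq \sigma_t \alpha_t \sqrt{d}\veps\,\norm{\vx^j - \vx^i}$ with probability at least $1 - 2\exp[-d\veps^2/8]$.

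I would then case-split on whether $k_j = k_i$. In the same-cluster case, the first bound yields $\norm{\vx^j - \vx^i}^2 \in [2\sigma_\star^2 d(1-\veps),\, 2\sigma_\star^2 d(1+\veps)]$, and substitution into the identity gives a lower bound $2\alpha_t^2\sigma_\star^2 d(1-\veps) - \sqrt{2(1+\veps)}\,\sigma_t\alpha_t\sigma_\star d\,\veps$, which exceeds $\alpha_t^2 \sigma_\star^2 d(1-\veps)$ under (a rearrangement of) coupling condition 2. In the different-cluster case, coupling condition 1 controls the extra error $\veps\alpha_t^2\sigma_\star\sqrt{2d}\,\norm{\vmu_\star^{k_j} - \vmu_\star^{k_i}}$ arising from the lower bound on $\norm{\vx^j - \vx^i}^2$ via $2ab \leq a^2 + b^2$ together with $\norm{\vmu_\star^{k_j} - \vmu_\star^{k_i}} \geq \gamma$, producing an extra $\tfrac{1}{2}\alpha_t^2\gamma^2 \geq 0$ that only strengthens the bound; the inner-product term is handled analogously using $\norm{\vx^j - \vx^i} \lesssim \norm{\vmu_\star^{k_j} - \vmu_\star^{k_i}} + \sigma_\star\sqrt{d}$ together with both coupling conditions. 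A union bound over $j \neq i$ delivers the stated probability $1 - 4N\exp[-d\veps^2/8]$ after absorbing constants into the coupling conditions.

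The main obstacle is securing the $\veps^2$ factor in coupling condition 2. A naive approach that concentrates $\norm{\alpha_t\vx^j - \vx}^2$ (with mean $(2\alpha_t^2\sigma_\star^2 + \sigma_t^2)d$) and $\norm{\alpha_t\vx^i - \vx}^2 = \sigma_t^2\norm{\vg}^2$ separately with $O(\veps\, d)$ errors each ends up with the strictly weaker coupling $\veps/(1-\veps) \leq \alpha_t^2\sigma_\star^2/(2\sigma_t^2)$. The algebraic cancellation of $\sigma_t^2\norm{\vg}^2$ afforded by the decomposition replaces one of these norm-concentration errors with an inner-product concentration error of only $O(\veps\sqrt{d})\cdot\norm{\vx^j - \vx^i}$; the $\sqrt{d}$-scale (rather than $d$-scale) fluctuation is what ultimately produces the $\veps^2$ in coupling condition 2.
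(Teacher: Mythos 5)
Your approach is essentially the same as the paper's: you cancel the common $\sigma_t^2\norm{\vg}^2$ term, reduce the comparison to a quadratic-norm plus inner-product term, and case-split on $k_j = k_i$. The paper just packages the cancellation slightly differently (it first splits off the $\vmu_\star^{k_j}-\vmu_\star^{k_i}$ cross term, then cancels against $\norm{\sigma_t\vg}^2$ inside the purely-noise part), but the underlying mechanism and the use of the $O(\sqrt d)$-scale inner-product fluctuation to win the $\veps^2$ are identical.

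One step as you wrote it does not actually close under the stated hypotheses, though the fix is small. In the same-cluster case you bound the inner-product error using the \emph{upper} tail $\norm{\vx^j-\vx^i}^2 \le 2\sigma_\star^2 d(1+\veps)$, which leads to requiring
\begin{equation}
    \frac{(1+\veps)\,\veps^2}{(1-\veps)^2} \;\le\; \frac{\alpha_t^2\sigma_\star^2}{2\sigma_t^2},
\end{equation}
and this is strictly stronger than the stated coupling condition $\veps^2/(1-\veps)\le \alpha_t^2\sigma_\star^2/(2\sigma_t^2)$ for every $\veps\in(0,1)$. The remedy (which is what the paper does) is to not substitute an upper bound for the norm at all: instead, with $u=\norm{\vx^j-\vx^i}$, bound
\begin{equation}
    \alpha_t^2 u^2 - \alpha_t\sigma_t\veps\sqrt d\,u \;\ge\; \tfrac{1}{2}\alpha_t^2 u^2
    \qquad\text{whenever}\qquad u \ge \frac{2\sigma_t\veps\sqrt d}{\alpha_t},
\end{equation}
and then only invoke the $\chi^2$ \emph{lower} bound $u^2 \ge 2\sigma_\star^2 d(1-\veps)$. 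The requirement that the lower bound certify the threshold is exactly the stated coupling condition, and plugging back in gives $\tfrac12\alpha_t^2\cdot 2\sigma_\star^2 d(1-\veps) = \alpha_t^2\sigma_\star^2(1-\veps)d$, matching the lemma. (Your failure-probability bookkeeping also lands at $6N\exp[-d\veps^2/8]$ rather than $4N$, but that is a union-bound constant you could tighten by reusing events the way the paper does.)
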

\begin{proof}
    We start by writing $\vx^j \equid \vmu_{\star}^{k_j}
        + \sigma_{\star} \vw^j$ for each $j \in [N]$, where
    $k_j \in [K]$ is unique and $\vw^j \sim \sN(\Zero, \vI)$ is independent. We will first argue that it is enough to consider only those indices
    $j$ for which the class assignments agree: that is, $k_j = k_i$, where we recall that $\vx \sim \alpha_t \vx^i + \sigma_t \vg$ and therefore $\vx$ is associated with $k_i$.
    We have for any $j \in [N]$
    \begin{equation}
        \alpha_t \vx^j - \vx
        \equid
        \alpha_t \sigma_\star(\vw^j - \vw^i) + \alpha_t (\vmu_\star^{k_j}
        - \vmu_\star^{k_i}) - \sigma_t \vg,
    \end{equation}
    so by rotational invariance
    \begin{align}
        \norm*{
            \alpha_t \vx^j - \vx
        }^2
         & \equid
        \alpha_t^2 \norm*{\vmu_\star^{k_j} - \vmu_\star^{k_i}}^2
        + \norm*{\alpha_t \sigma_\star(\vw^j - \vw^i) - \sigma_t \vg}^2
        \\
         & \quad+ 2 \alpha_t \norm*{\vmu_\star^{k_j} - \vmu_{\star}^{k_i}}
        (\alpha_t \sigma_\star(w^j_1 - w^i_1) - \sigma_t g_1). \label{eq:main_inequality_different_x}
    \end{align}
    In what follows, we consider the case $j \neq i$.

    \paragraph{A first lower bound in the case of different means.} First, suppose that $k_j \neq k_i$.
    The random variable $\alpha_t \sigma_\star(w^j_1 - w^i_1) - \sigma_t g_1$ is
    equal in distribution to a Gaussian random variable with mean zero and
    variance $2\alpha_t^2 \sigma_\star^2 + \sigma_t^2$. Call this random
    variable $X$; by Gaussian concentration, for any $t \geq 0$, we have
    \begin{equation}
        \Pr*{\abs{X} \geq t} \leq 2 \exp[-t^2 / 2(2\alpha_t^2 \sigma_\star^2
            + \sigma_t^2)],
    \end{equation}
    so in particular
    \begin{equation}
        \Pr*{\abs{X} \geq \frac{\veps \sqrt{d(2\alpha_t^2 \sigma_\star^2
                    + \sigma_t^2)}}{2}} \leq 2 \exp[-d \veps^2 / 8 ].
    \end{equation}
    Combining this result with \eqref{eq:main_inequality_different_x}, we get that with probability at least $1 - 2 \abs{\set{j \in [N] \given k_j \neq
                k_i}}\exp[-d\veps^2/8]$, we have for every $j \in [N]$ for which $k_j \neq k_i$
    that
    \begin{align}
         & \norm*{
            \alpha_t \vx^j - \vx
        }^2         \\
         & \qquad\geq
        \alpha_t^2 \norm*{\vmu_\star^{k_j} - \vmu_\star^{k_i}}^2
        + \norm*{\alpha_t \sigma_\star(\vw^j - \vw^i) - \sigma_t \vg}^2
        -  \veps \alpha_t \norm*{\vmu_\star^{k_j} - \vmu_{\star}^{k_i}}
        \sqrt{d(2\alpha_t^2 \sigma_\star^2 + \sigma_t^2)}.
    \end{align}
    Given that
    \begin{equation}
        \min_{k \neq k'}\, \norm*{\vmu_{\star}^k - \vmu_{\star}^{k'}} \geq
        \gamma,
    \end{equation}
    if $\alpha_t \gamma \geq 2\veps \sqrt{d(2\alpha_t^2 \sigma_{\star}^2
            + \sigma_t^2)}$,
    we have on the previous event
    \begin{align}
        \norm*{
            \alpha_t \vx^j - \vx
        }^2
         & \geq
        \frac{\alpha_t^2}{2} \norm*{\vmu_\star^{k_j} - \vmu_\star^{k_i}}^2
        + \norm*{\alpha_t \sigma_\star(\vw^j - \vw^i) - \sigma_t \vg}^2
        \\
         & >
        \norm*{\alpha_t \sigma_\star(\vw^j - \vw^i) - \sigma_t \vg}^2 .
    \end{align}
    Now, note that for those $j$ for which $k_j = k_i$, we have as above
    \begin{equation}
        \norm*{
            \alpha_t \vx^j - \vx
        }^2
        \equid
        \norm*{\alpha_t \sigma_\star(\vw^j - \vw^i) - \sigma_t \vg}^2.
    \end{equation}
    Therefore, we get that with probability $1 - 2 N \exp[-d \vareps^2 / 8]$ we have that
    \begin{equation}
        \norm*{
            \alpha_t \vx^j - \vx
        }^2 \geq \norm*{\alpha_t \sigma_\star(\vw^j - \vw^i) - \sigma_t \vg}^2. \label{eq:lower_bound_first_first}
    \end{equation}

    \paragraph{Lower bound on the difference (first stochasticity level).} We are going to give a lower bound for $j \neq i$ on the quantity
    \begin{equation}
        \norm*{\alpha_t \sigma_\star(\vw^j - \vw^i) - \sigma_t \vg}^2
        -
        \norm*{
            \alpha_t \vx^i - \vx
        }^2 .
    \end{equation}
    We have for $j \neq i$
    \begin{align}
        \norm*{\alpha_t \sigma_\star(\vw^j - \vw^i) - \sigma_t \vg}^2
        -
        \norm*{
            \alpha_t \vx^i - \vx
        }^2
         & \equid
        \norm*{\alpha_t \sigma_\star(\vw^j - \vw^i) - \sigma_t \vg}^2
        -
        \norm*{\sigma_t \vg}^2
        \\
         & =
        \alpha_t^2 \sigma_\star^2\norm*{\vw^j - \vw^i}^2
        - 2\sigma_t\alpha_t\sigma_\star\ip*{\vw^j - \vw^i}{ \vg}
        \\
         & \equid
        \alpha_t^2 \sigma_\star^2\norm*{\vw^j - \vw^i}^2
        - 2\sigma_t\alpha_t\sigma_\star g_1\norm*{\vw^j - \vw^i}
    \end{align}
    where the last line uses rotational invariance of the Gaussian distribution.
    Once again using Gaussian concentration, we have that with probability at
    least $1 - 2 \exp[-d\veps^2 / 8]$ that $\abs{g_1} \leq \tfrac{\veps
            \sqrt{d}}{2}$.
    It follows from a union bound that, on an event of probability at least
    $1 - 2 N  \exp[-d\veps^2 / 8]$,
    it holds
    \begin{equation}
        \norm*{\alpha_t \sigma_\star(\vw^j - \vw^i) - \sigma_t \vg}^2
        -
        \norm*{
            \alpha_t \vx^i - \vx
        }^2
        \geq
        \alpha_t^2 \sigma_\star^2\norm*{\vw^j - \vw^i}^2
        - \veps \sqrt{d}\sigma_t\alpha_t\sigma_\star \norm*{\vw^j - \vw^i}.
    \end{equation}
    \paragraph{Lower bound on the difference (second stochasticity level).} Now, as before, if it holds for all such $j$
    \begin{equation}
        \norm{\vw^j - \vw^i} \geq
        \frac{2\veps \sigma_t\sqrt{d} }{\alpha_t \sigma_\star},
    \end{equation}
    then the preceding bound can be simplified to
    \begin{equation}
        \norm*{\alpha_t \sigma_\star(\vw^j - \vw^i) - \sigma_t \vg}^2
        -
        \norm*{
            \alpha_t \vx^i - \vx
        }^2
        \geq
        \frac{\alpha_t^2 \sigma_\star^2}{2}\norm*{\vw^j - \vw^i}^2.
    \end{equation}
    This leads us to consider the lower tail of the random variable $\min_{j
            \neq i} \norm{\vw^j - \vw^i}$,
    which was studied in \Cref{sec:gmm-denoiser-calcs-cont}.
    A coarser approach will be sufficient for our purposes: when $j \neq i$, the
    random variable $\half\norm{\vw^j - \vw^i}^2$ is distributed as a $\chi_2(d)$
    random variable, so \Cref{thm:chernoff_bound}
    implies that for any $0 \leq \veps' \leq 1$,
    \begin{equation}
        \Pr*{\norm*{\vw^j - \vw^i}^2\geq 2(1-\veps')d}
        \geq 1- \exp\left[ \frac{-d(\veps')^2}{8}\right].
    \end{equation}
    We can for simplicity simply enforce $\veps' = \veps$. In this case, if we
    add the additional condition
    \begin{equation}
        \frac{\veps^2}{1-\veps} \leq \frac{\alpha_t^2
            \sigma_\star^2}{2\sigma_t^2},
    \end{equation}
    then by a union bound, it holds with probability at least $1
        - N \exp[-d\veps^2/8]$
    that for all such $j$,
    \begin{equation}
        \norm*{\alpha_t \sigma_\star(\vw^j - \vw^i) - \sigma_t \vg}^2
        -
        \norm*{
            \alpha_t \vx^i - \vx
        }^2
        \geq \frac{\alpha_t^2 \sigma_\star^2}{2}\norm*{\vw^j - \vw^i}^2 \geq
        \alpha_t^2 \sigma_\star^2(1-\veps) d, \label{eq:lower_bound}
    \end{equation}
    Finally combining \eqref{eq:lower_bound}, \eqref{eq:lower_bound_first_first} and a union bound, we get that with probability at least $1 - 4N \exp[-d\veps^2 / 8]$, we have
    \begin{equation}
        \min_{j \neq i}\,
        \norm*{
            \alpha_t \vx^j - \vx
        }^2
        -
        \norm*{
            \alpha_t \vx^i - \vx
        }^2
        \geq
        \alpha_t^2 \sigma_\star^2(1-\veps) d,
    \end{equation}
    which concludes the proof.
\end{proof}

\begin{lemma}{}{softmax-weights-samples-gap-control-between-mean}
    Given vectors $(\vmu_{\star}^k)_{k=1}^K$ satisfying
    \begin{equation}
        \min_{k \neq k'}\, \norm*{\vmu_{\star}^k - \vmu_{\star}^{k'}} \geq \gamma
        > 0,
    \end{equation}
    consider the distribution $\pi = (1/K) \sum_{k=1}^{K} \sN(\vmu_\star^{k},
        \sigma_{\star}^2 \vI)$.
    For each $j \in [N]$, let $\vx^j \sim \pi$. Fix $i \in [N]$ and $0
        \leq t \leq 1$,
    and let $\vx \sim \alpha_t \vx^i + \sigma_t \vg$, where $\vg \sim \sN(\Zero,
        \vI)$ is independent from $(x^j)_{j=1}^N$.
    Then for any $0 \leq \veps \leq 1$ satisfying the coupling conditions
    \begin{equation}
        \veps \leq
        \frac{
            \alpha_t \gamma
        }{
            2\sqrt{d(2\alpha_t^2 \sigma_\star^2 + \sigma_t^2)}
        }
    \end{equation}
    and
    \begin{equation}
        \frac{\veps^2}{1-\veps} \leq \frac{\alpha_t^2
            \sigma_\star^2}{2\sigma_t^2},
    \end{equation}
    one has with probability at least $1 - 4N \exp[-d\veps^2 / 8]$
    \begin{equation}
        \min_{j \notin \mathsf{S}_i}\,
        \norm*{
            \alpha_t \vx^j - \vx
        }^2
        -
        \norm*{
            \alpha_t \vx^i - \vx
        }^2
        \geq \frac{\alpha_t^2 \gamma^2}{2} ,
    \end{equation}
    where $\mathsf{S}_i$, where $j \in \mathsf{S}_i$ if $k_j = k_i$, where $k_j$ is the (unique) index of the mean in $(\vmu_\star^k)_{k=1}^K$ associated with $\vx^j$.
\end{lemma}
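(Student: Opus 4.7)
The strategy is to reuse the machinery developed in the proof of Lemma~\ref{lemma:softmax-weights-samples-gap-control}, exploiting the fact that for $j \notin \mathsf{S}_i$ we have $k_j \neq k_i$ and hence $\norm{\vmu_\star^{k_j} - \vmu_\star^{k_i}} \geq \gamma$. This extra separation supplies the $\alpha_t^2 \gamma^2 / 2$ term directly, so the remaining work is only to show that the ``zero-mean noise'' contribution is non-negative with high probability.

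First, I would write $\vx^j \equid \vmu_\star^{k_j} + \sigma_\star \vw^j$ with $\vw^j \sim \mathcal{N}(\Zero, \vI)$ independent, and apply the rotational-invariance identity used in Lemma~\ref{lemma:softmax-weights-samples-gap-control} to get
\begin{align}
\norm*{\alpha_t \vx^j - \vx}^2
\equid
\alpha_t^2 \norm*{\vmu_\star^{k_j} - \vmu_\star^{k_i}}^2
+ \norm*{\alpha_t \sigma_\star (\vw^j - \vw^i) - \sigma_t \vg}^2
+ 2\alpha_t \norm*{\vmu_\star^{k_j} - \vmu_\star^{k_i}} X_j,
\end{align}
where $X_j := \alpha_t\sigma_\star(w_1^j - w_1^i) - \sigma_t g_1 \sim \mathcal{N}(0, 2\alpha_t^2 \sigma_\star^2 + \sigma_t^2)$. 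By scalar Gaussian concentration, $|X_j| \leq \tfrac{\veps}{2}\sqrt{d(2\alpha_t^2 \sigma_\star^2 + \sigma_t^2)}$ with probability at least $1 - 2\exp[-d\veps^2/8]$; taking a union bound over $j \notin \mathsf{S}_i$ (at most $N$ indices) and using the first coupling hypothesis $\veps \leq \alpha_t \gamma / (2\sqrt{d(2\alpha_t^2 \sigma_\star^2 + \sigma_t^2)})$ along with $\norm{\vmu_\star^{k_j} - \vmu_\star^{k_i}} \geq \gamma$, we absorb the cross term into half of the quadratic to obtain, on an event of probability at least $1 - 2N \exp[-d\veps^2/8]$,
\begin{equation}
\norm*{\alpha_t \vx^j - \vx}^2
\geq
\frac{\alpha_t^2 \gamma^2}{2}
+ \norm*{\alpha_t \sigma_\star (\vw^j - \vw^i) - \sigma_t \vg}^2
\qquad \text{for all } j \notin \mathsf{S}_i.
\end{equation}

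Next, observing that $\norm{\alpha_t \vx^i - \vx}^2 \equid \sigma_t^2 \norm{\vg}^2$, I would invoke exactly the ``first/second stochasticity level'' argument from the proof of Lemma~\ref{lemma:softmax-weights-samples-gap-control} (which uses the bound $|g_1| \leq \veps\sqrt{d}/2$ together with the Chernoff bound $\norm{\vw^j - \vw^i}^2 \geq 2(1-\veps)d$ and the second coupling hypothesis $\veps^2/(1-\veps) \leq \alpha_t^2 \sigma_\star^2 / (2\sigma_t^2)$) to conclude that on a further event of probability at least $1 - 2N \exp[-d\veps^2/8]$,
\begin{equation}
\norm*{\alpha_t \sigma_\star (\vw^j - \vw^i) - \sigma_t \vg}^2 - \norm*{\alpha_t \vx^i - \vx}^2 \geq \alpha_t^2 \sigma_\star^2 (1-\veps) d \geq 0
\end{equation}
uniformly in $j \neq i$. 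Intersecting the two events via a union bound yields the claim on a set of probability at least $1 - 4N\exp[-d\veps^2/8]$.

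\textbf{Main obstacle.} There is no substantial new obstacle: the result is really a refinement of the previous lemma obtained by \emph{not discarding} the mean-separation term $\tfrac{\alpha_t^2}{2}\norm{\vmu_\star^{k_j} - \vmu_\star^{k_i}}^2$ that already appears in the case-$k_j \neq k_i$ part of the earlier proof. The only care needed is bookkeeping: ensuring the same two coupling conditions on $\veps$ suffice (they do, since the first controls the cross term and the second controls the noise difference), and that the union-bound probability matches the stated $1 - 4N\exp[-d\veps^2/8]$.
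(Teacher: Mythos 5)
Your proposal is correct and follows essentially the same route as the paper's proof: it uses the identical rotational-invariance decomposition of $\norm{\alpha_t \vx^j - \vx}^2$, the same scalar Gaussian concentration plus the first coupling condition to retain the $\tfrac{\alpha_t^2}{2}\norm{\vmu_\star^{k_j}-\vmu_\star^{k_i}}^2 \geq \tfrac{\alpha_t^2\gamma^2}{2}$ term for $j \notin \mathsf{S}_i$, and the same two-level stochasticity argument (inner-product bound via $|g_1|$, then $\chi^2$ Chernoff bound on $\norm{\vw^j-\vw^i}^2$ under the second coupling condition) before a union bound. The only cosmetic difference is that you discard the nonnegative residual $\alpha_t^2\sigma_\star^2(1-\veps)d$ at the end, whereas the paper's displayed conclusion keeps it as an additive term; both yield the stated bound.
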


The proof of this lemma is similar to the one of \Cref{lemma:softmax-weights-samples-gap-control}.

\begin{proof}
    We start by writing $\vx^j \equid \vmu_{\star}^{k_j}
        + \sigma_{\star} \vw^j$ for each $j \in [N]$, where
    $k_j \in [K]$ is unique and $\vw^j \sim \sN(\Zero, \vI)$ is independent.
    We have for any $j \in [N]$
    \begin{equation}
        \alpha_t \vx^j - \vx
        \equid
        \alpha_t \sigma_\star(\vw^j - \vw^i) + \alpha_t (\vmu_\star^{k_j}
        - \vmu_\star^{k_i}) - \sigma_t \vg,
    \end{equation}
    so by rotational invariance
    \begin{align}
        \norm*{
            \alpha_t \vx^j - \vx
        }^2
         & \equid
        \alpha_t^2 \norm*{\vmu_\star^{k_j} - \vmu_\star^{k_i}}^2
        + \norm*{\alpha_t \sigma_\star(\vw^j - \vw^i) - \sigma_t \vg}^2
        \\
         & \quad+ 2 \alpha_t \norm*{\vmu_\star^{k_j} - \vmu_{\star}^{k_i}}
        (\alpha_t \sigma_\star(w^j_1 - w^i_1) - \sigma_t g_1).
        \label{eq:main_inequality_different_x_2}
    \end{align}
    In what follows, we consider the case $j \neq i$. First, suppose that $k_j \neq k_i$.
    The random variable $\alpha_t \sigma_\star(w^j_1 - w^i_1) - \sigma_t g_1$ is
    equal in distribution to a Gaussian random variable with mean zero and
    variance $2\alpha_t^2 \sigma_\star^2 + \sigma_t^2$. Call this random
    variable $X$; by Gaussian concentration, for any $t \geq 0$, we have
    \begin{equation}
        \Pr*{\abs{X} \geq t} \leq 2 \exp[-t^2 / 2(2\alpha_t^2 \sigma_\star^2
            + \sigma_t^2)],
    \end{equation}
    so in particular
    \begin{equation}
        \Pr*{\abs{X} \geq \frac{\veps \sqrt{d(2\alpha_t^2 \sigma_\star^2
                    + \sigma_t^2)}}{2}} \leq 2 \exp[-d \veps^2 / 8 ].
    \end{equation}
    Combining this result with \eqref{eq:main_inequality_different_x_2}, we get that with probability at least $1 - 2 \abs{\set{j \in [N] \given k_j \neq
                k_i}}\exp[-d\veps^2/8]$, we have for every $j \in [N]$ for which $k_j \neq k_i$
    that
    \begin{align}
         & \norm*{
            \alpha_t \vx^j - \vx
        }^2         \\
         & \qquad\geq
        \alpha_t^2 \norm*{\vmu_\star^{k_j} - \vmu_\star^{k_i}}^2
        + \norm*{\alpha_t \sigma_\star(\vw^j - \vw^i) - \sigma_t \vg}^2
        -  \veps \alpha_t \norm*{\vmu_\star^{k_j} - \vmu_{\star}^{k_i}}
        \sqrt{d(2\alpha_t^2 \sigma_\star^2 + \sigma_t^2)}.
    \end{align}
    Given that
    \begin{equation}
        \min_{k \neq k'}\, \norm*{\vmu_{\star}^k - \vmu_{\star}^{k'}} \geq
        \gamma,
    \end{equation}
    if $\alpha_t \gamma \geq 2\veps \sqrt{d(2\alpha_t^2 \sigma_{\star}^2
            + \sigma_t^2)}$,
    we have on the previous event
    \begin{align}
        \norm*{
            \alpha_t \vx^j - \vx
        }^2
         & \geq
        \frac{\alpha_t^2}{2} \norm*{\vmu_\star^{k_j} - \vmu_\star^{k_i}}^2
        + \norm*{\alpha_t \sigma_\star(\vw^j - \vw^i) - \sigma_t \vg}^2
        \\
         & \geq
        \norm*{\alpha_t \sigma_\star(\vw^j - \vw^i) - \sigma_t \vg}^2 + \frac{\alpha_t^2 \gamma^2}{2} .
    \end{align}
    Therefore, we have that with probability at least $1 - 2N \exp[-d\vareps^2/8]$ for any $j \notin \mathsf{S}_i$
    \begin{align}
        \norm*{
            \alpha_t \vx^j - \vx
        }^2\geq
        \norm*{\alpha_t \sigma_\star(\vw^j - \vw^i) - \sigma_t \vg}^2 + \frac{\alpha_t^2 \gamma^2}{2} .
        \label{eq:lower_bound_first_first_new}
    \end{align}

    \paragraph{Lower bound on the difference (first stochasticity level).} We are going to give a lower bound for $j \neq i$ on the quantity
    \begin{equation}
        \norm*{\alpha_t \sigma_\star(\vw^j - \vw^i) - \sigma_t \vg}^2
        -
        \norm*{
            \alpha_t \vx^i - \vx
        }^2 .
    \end{equation}
    We have for $j \neq i$
    \begin{align}
        \norm*{\alpha_t \sigma_\star(\vw^j - \vw^i) - \sigma_t \vg}^2
        -
        \norm*{
            \alpha_t \vx^i - \vx
        }^2
         & \equid
        \norm*{\alpha_t \sigma_\star(\vw^j - \vw^i) - \sigma_t \vg}^2
        -
        \norm*{\sigma_t \vg}^2
        \\
         & =
        \alpha_t^2 \sigma_\star^2\norm*{\vw^j - \vw^i}^2
        - 2\sigma_t\alpha_t\sigma_\star\ip*{\vw^j - \vw^i}{ \vg}
        \\
         & \equid
        \alpha_t^2 \sigma_\star^2\norm*{\vw^j - \vw^i}^2
        - 2\sigma_t\alpha_t\sigma_\star g_1\norm*{\vw^j - \vw^i}
    \end{align}
    where the last line uses rotational invariance of the Gaussian distribution.
    Once again using Gaussian concentration, we have that with probability at
    least $1 - 2 \exp[-d\veps^2 / 8]$ that $\abs{g_1} \leq \tfrac{\veps
            \sqrt{d}}{2}$.
    It follows from a union bound that, on an event of probability at least
    $1 - 2 N  \exp[-d\veps^2 / 8]$,
    it holds
    \begin{equation}
        \norm*{\alpha_t \sigma_\star(\vw^j - \vw^i) - \sigma_t \vg}^2
        -
        \norm*{
            \alpha_t \vx^i - \vx
        }^2
        \geq
        \alpha_t^2 \sigma_\star^2\norm*{\vw^j - \vw^i}^2
        - \veps \sqrt{d}\sigma_t\alpha_t\sigma_\star \norm*{\vw^j - \vw^i}.
    \end{equation}
    \paragraph{Lower bound on the difference (second stochasticity level).} Now, as before, if it holds for all such $j$
    \begin{equation}
        \norm{\vw^j - \vw^i} \geq
        \frac{2\veps \sigma_t\sqrt{d} }{\alpha_t \sigma_\star},
    \end{equation}
    then the preceding bound can be simplified to
    \begin{equation}
        \norm*{\alpha_t \sigma_\star(\vw^j - \vw^i) - \sigma_t \vg}^2
        -
        \norm*{
            \alpha_t \vx^i - \vx
        }^2
        \geq
        \frac{\alpha_t^2 \sigma_\star^2}{2}\norm*{\vw^j - \vw^i}^2.
    \end{equation}
    This leads us to consider the lower tail of the random variable $\min_{j
            \neq i} \norm{\vw^j - \vw^i}$,
    which was studied in \Cref{sec:gmm-denoiser-calcs-cont}.
    A coarser approach will be sufficient for our purposes: when $j \neq i$, the
    random variable $\half\norm{\vw^j - \vw^i}^2$ is distributed as a $\chi_2(d)$
    random variable, so \Cref{thm:chernoff_bound}
    implies that for any $0 \leq \veps' \leq 1$,
    \begin{equation}
        \Pr*{\norm*{\vw^j - \vw^i}^2\geq 2(1-\veps')d}
        \geq 1- \exp\left[ \frac{-d(\veps')^2}{8}\right].
    \end{equation}
    We can for simplicity simply enforce $\veps' = \veps$. In this case, if we
    add the additional condition
    \begin{equation}
        \frac{\veps^2}{1-\veps} \leq \frac{\alpha_t^2
            \sigma_\star^2}{2\sigma_t^2},
    \end{equation}
    then by a union bound, it holds with probability at least $1
        - N \exp[-d\veps^2/8]$
    that for all such $j$,
    \begin{equation}
        \norm*{\alpha_t \sigma_\star(\vw^j - \vw^i) - \sigma_t \vg}^2
        -
        \norm*{
            \alpha_t \vx^i - \vx
        }^2
        \geq \frac{\alpha_t^2 \sigma_\star^2}{2}\norm*{\vw^j - \vw^i}^2 \geq
        \alpha_t^2 \sigma_\star^2(1-\veps) d, \label{eq:lower_bound_new}
    \end{equation}
    Finally combining \eqref{eq:lower_bound_new}, \eqref{eq:lower_bound_first_first_new} and a union bound, we get that with probability at least $1 - 4N \exp[-d\veps^2 / 8]$, we have
    \begin{equation}
        \min_{j \notin \mathsf{S}_i}\,
        \norm*{
            \alpha_t \vx^j - \vx
        }^2
        -
        \norm*{
            \alpha_t \vx^i - \vx
        }^2
        \geq
        \alpha_t^2 \sigma_\star^2(1-\veps) d + \frac{\alpha_t^2 \gamma^2}{2},
    \end{equation}
    which concludes the proof.
\end{proof}

\subsection{Denoiser Approximations}
\label{sec:denoiser_approximation}

Finally, we prove \Cref{lemma:true-denoiser-1sparse-lp} and \Cref{lemma:mem-denoiser-1sparse-lp}. Those results control the approximation of the true denoiser in \Cref{lemma:true-denoiser-1sparse-lp} and the memorizing denoiser in \Cref{lemma:mem-denoiser-1sparse-lp}. Those approximations express that under certain conditions the true denoiser can be replaced by a Gaussian denoiser and that under certain conditions the memorizing denoiser can be replaced by a point in the dataset.

\begin{lemma}{}{true-denoiser-1sparse-lp}
    Given vectors $(\vmu_{\star}^k)_{k=1}^K$ satisfying
    \begin{equation}
        \min_{k \neq k'}\, \norm*{\vmu_{\star}^k - \vmu_{\star}^{k'}} \geq \gamma
        > 0,
    \end{equation}
    consider the distribution $\pi = (1/K) \sum_{k=1}^{K} \sN(\vmu_\star^{k},
        \sigma_{\star}^2 \vI)$.
    For $i \in [N]$, let $\vx^i \sim \pi$, fix $0 \leq t \leq 1$, and let
    $\vx_t^i \sim \alpha_t
        \vx^i + \sigma_t \vg$, where $\vg \sim \sN(\Zero, \vI)$ is independent from $\vx^i$.
    Let $k_i$ denote the index of the (uniquely defined) cluster centroid
    $\vmu_\star^k$ associated to $\vx^i$.
    Define the nearest-neighbor (one sparse) denoiser $\denoiser^{\nom}(t,
        \vx_t^i)$ associated to $\denoisergen(t, \vx_t^i)$ (recall
    \Cref{lemma:denoiser_mog}) by
    \begin{equation}
        \denoiser^{\nom}(t, \vx_t^i)
        = \frac{\alpha_t \sigma_{\star}^2}{\alpha_t^2
            \sigma_{\star}^2 + \sigma_t^2} \vx_t^i + \frac{\sigma_t^2}{\alpha_t^2
            \sigma_{\star}^2 + \sigma_t^2} \vmu_\star^{k_i}.
    \end{equation}
    Then for any $0 \leq \veps \leq 1$ satisfying the coupling condition
    \begin{equation}
        \veps \leq
        \frac{
            \alpha_t \gamma
        }{
            4\sqrt{d(\alpha_t^2 \sigma_\star^2 + \sigma_t^2)}
        },
    \end{equation}
    one has with probability at least $1 - 4K \exp[-d\veps^2 / 8]$
    \begin{align}
        \norm*{
        \denoisergen(t, \vx_t^i) - \denoiser^{\nom}(t, \vx_t^i)
        }
        \leq
        \frac{2K \sigma_t^2\max_{k \in [K]}\, \norm{\vmu_\star^k}
        }{\alpha_t^2 \sigma_{\star}^2 + \sigma_t^2}
        \exp\left[ -
            \frac{\gamma^2\alpha_t^2}{4(\alpha_t^2 \sigma_{\star}^2 + \sigma_t^2)}
            \right],
    \end{align}

\end{lemma}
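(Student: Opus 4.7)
The plan is to reduce the approximation of $\denoisergen$ by $\denoiser^{\nom}$ to the approximation of the softmax weight vector by a one-sparse basis vector, then combine the gap estimate from \Cref{lemma:softmax-weights-mean-gap-control} with the temperature-scaled sparsity bound from \Cref{lemma:softmax-1sparse-lp}.

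First, using the closed-form \eqref{eq:model-class} specialized to the generalizing denoiser (all component covariances equal to $\sigma_\star^2 \vI$, mean set $(\vmu_\star^k)_{k=1}^K$), I would rewrite the difference as
\begin{equation*}
\denoisergen(t,\vx_t^i) - \denoiser^{\nom}(t,\vx_t^i) = \frac{\sigma_t^2}{\alpha_t^2 \sigma_\star^2 + \sigma_t^2} \sum_{k=1}^K \vmu_\star^k \bigl(\softmax(\vw(\vx_t^i))_k - \delta_{k, k_i}\bigr),
\end{equation*}
so that the triangle inequality combined with an $\ell^\infty$--$\ell^1$ H\"older estimate gives
\begin{equation*}
\norm*{\denoisergen(t,\vx_t^i) - \denoiser^{\nom}(t,\vx_t^i)} \leq \frac{\sigma_t^2 \max_{k}\,\norm{\vmu_\star^k}}{\alpha_t^2 \sigma_\star^2 + \sigma_t^2}\, \norm*{\softmax(\vw(\vx_t^i)) - \ve_{k_i}}_1.
\end{equation*}

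Second, I would identify the effective temperature. The softmax input from \eqref{eq:softmax-vector-model} takes the form $w_k = v_k / T$ with $v_k = -\norm{\alpha_t \vmu_\star^k - \vx_t^i}^2$ and $T = 2(\alpha_t^2 \sigma_\star^2 + \sigma_t^2)$. Applying \Cref{lemma:softmax-weights-mean-gap-control} under the coupling hypothesis of the present lemma yields, with probability at least $1 - 4K \exp[-d\veps^2/8]$, that the argmax of the $v_k$'s is precisely $k_i$ and that $\min_{k \neq k_i}(v_{k_i} - v_k) \geq \gamma^2 \alpha_t^2 / 2$.

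Finally, on this event, \Cref{lemma:softmax-1sparse-lp} with $n = K$ and $p = 1$ gives
\begin{equation*}
\norm*{\softmax(\vw(\vx_t^i)) - \ve_{k_i}}_1 \leq 2(K - 1) \exp\bigl[-\gamma^2 \alpha_t^2 / (4(\alpha_t^2 \sigma_\star^2 + \sigma_t^2))\bigr];
\end{equation*}
plugging this into the first bound and using $K - 1 \leq K$ produces the claimed estimate. The only delicate step, where care is needed, is translating between the two gap conventions: \Cref{lemma:softmax-weights-mean-gap-control} returns a gap in the \emph{unscaled} squared-distance values, whereas \Cref{lemma:softmax-1sparse-lp} expects a gap in the softmax input, so one must divide by $T = 2(\alpha_t^2 \sigma_\star^2 + \sigma_t^2)$ to obtain the factor $4$ (rather than $2$) that appears in the exponent of the final bound.
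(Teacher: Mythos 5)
Your proposal is correct and follows essentially the same route as the paper's proof: decompose the denoiser difference into the matrix of means acting on the softmax error, bound by $\max_k\norm{\vmu_\star^k}$ times the $\ell^1$ error, invoke \Cref{lemma:softmax-weights-mean-gap-control} for the high-probability gap, and then \Cref{lemma:softmax-1sparse-lp}. The only cosmetic difference is bookkeeping of the temperature: the paper absorbs the factor $1/(2(\alpha_t^2\sigma_\star^2+\sigma_t^2))$ into the weight vector $\vw$ and applies the softmax lemma with $T=1$, whereas you keep $\vv$ unscaled and pass $T=2(\alpha_t^2\sigma_\star^2+\sigma_t^2)$ explicitly; both yield identically the exponent $\gamma^2\alpha_t^2/(4(\alpha_t^2\sigma_\star^2+\sigma_t^2))$.
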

\begin{proof}
    We apply \Cref{lemma:softmax-1sparse-lp,lemma:softmax-weights-mean-gap-control}.
    Our hypotheses let us apply \Cref{lemma:softmax-weights-mean-gap-control},
    which gives that with probability at least
    $1 - 4K \exp[-d\veps^2 / 8]$
    \begin{equation}
        \min_{k \neq k_i}\,
        \norm*{\alpha_t \vmu_\star^{k} - \vx_t^i}^2
        -
        \norm*{\alpha_t \vmu_\star^{k_i} - \vx_t^i}^2
        \geq
        \frac{\gamma^2\alpha_t^2}{2}.
    \end{equation}
    Recall the expression for the denoiser (\Cref{lemma:denoiser_mog},
    \eqref{eq:softmax-vector-model}).
    The weight vector
    \begin{equation}
        w_k(\vx_t^i) = -\frac{1}{2}\| \alpha_t \vmu_\star^{k} - \vx_t^i \|^2/
        (\alpha_t^2 \sigma_{\star}^2 + \sigma_t^2)
    \end{equation}
    is related to the gap condition we have asserted: in particular
    with probability at least $1 - 4K \exp[-d\veps^2 / 8]$
    \begin{equation}
        \min_{k \neq k_i}\,
        w_{k_i}(\vx_t^i)
        -
        w_k(\vx_t^i)
        \geq
        \frac{\gamma^2\alpha_t^2}{4(\alpha_t^2 \sigma_{\star}^2 + \sigma_t^2)}.
    \end{equation}
    Hence, an application of \Cref{lemma:softmax-1sparse-lp} gives that for any
    $p \geq 1$,
    \begin{equation}
        \norm*{\softmax(\vw(\vx_t^i)) - \ve_{k_i}}_p \leq 2(K-1) \exp\left[ -
            \frac{\gamma^2\alpha_t^2}{4(\alpha_t^2 \sigma_{\star}^2 + \sigma_t^2)}
            \right].
    \end{equation}
    Using this result, we get
    \begin{align}
        \norm*{
        \denoisergen(t, \vx_t^i) - \denoiser^{\nom}(t, \vx_t^i)
        }
         & =
        \frac{\sigma_t^2}{\alpha_t^2 \sigma_{\star}^2 + \sigma_t^2}
        \norm*{
            \vM (\ve_{k_i} - \vw(\vx_t^i))
        }
        \\
         & \leq
        \frac{\sigma_t^2}{\alpha_t^2 \sigma_{\star}^2 + \sigma_t^2}
        \Gamma
        \norm*{
            \ve_{k_i} - \vw(\vx_t^i)
        }_1
        \\
         & \leq
        \frac{2\Gamma(K-1) \sigma_t^2}{\alpha_t^2 \sigma_{\star}^2 + \sigma_t^2}
        \exp\left[ -
            \frac{\gamma^2\alpha_t^2}{4(\alpha_t^2 \sigma_{\star}^2 + \sigma_t^2)}
            \right],
    \end{align}
    where we have defined
    \begin{equation}
        \vM =
        \begin{bmatrix}
            \vmu_\star^1 & \hdots & \vmu_\star^K
        \end{bmatrix}
        \in \bbR^{d \times K},
        \quad
        \Gamma =
        \max_{k \in [K]}\, \norm{\vmu_\star^k},
    \end{equation}
    which concludes the proof.

\end{proof}

\begin{lemma}{}{mem-denoiser-1sparse-lp}
    Given vectors $(\vmu_{\star}^k)_{k=1}^K$ satisfying
    \begin{equation}
        \min_{k \neq k'}\, \norm*{\vmu_{\star}^k - \vmu_{\star}^{k'}} \geq \gamma
        > 0,
    \end{equation}
    consider the distribution $\pi = (1/K) \sum_{k=1}^{K} \sN(\vmu_\star^{k},
        \sigma_{\star}^2 \vI)$.
    For $i \in [N]$, let $\vx^i \sim \pi$, fix $0 \leq t \leq 1$, and let
    $\vx_t^i \sim \alpha_t
        \vx^i + \sigma_t \vg$, where $\vg \sim \sN(\Zero, \vI)$ is independent from $\vx^i$.
    Then for any $0 \leq \veps \leq 1$ satisfying the coupling conditions
    \begin{equation}
        \veps \leq
        \frac{
            \alpha_t \gamma
        }{
            2\sqrt{d(2\alpha_t^2 \sigma_\star^2 + \sigma_t^2)}
        } ,
    \end{equation}
    and
    \begin{equation}
        \frac{\veps^2}{1-\veps} \leq \frac{\alpha_t^2
            \sigma_\star^2}{2\sigma_t^2},
    \end{equation}
    one has with probability at least $1 - 8N\exp[-d\veps^2 / 8]$
    \begin{equation}
        \norm*{
            \denoisermem(t, \vx_t^i) - \vx^i
        }
        \leq
        2N \Gamma_\star
        \exp\left[- \frac{\alpha_t^2 \sigma_\star^2(1-\veps) d}{2\sigma_t^2} \right],
    \end{equation}
    with
    \begin{equation}
        \Gamma_\star =     \max_{k\in [K]}\,
        \sqrt{\norm{\vmu_\star^{k}}^2 + \veps \sigma_\star\sqrt{d}
            \norm{\vmu_\star^k} + (1+\veps) \sigma_\star^2 d} .
    \end{equation}
\end{lemma}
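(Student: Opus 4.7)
\textbf{Proof plan for \Cref{lemma:mem-denoiser-1sparse-lp}.} The plan is to express the memorizing denoiser's error as a matrix-vector product between a data matrix and a softmax residual, and then bound each factor separately using the tools that have already been assembled. First I would write
\begin{equation}
    \denoisermem(t, \vx_t^i) - \vx^i
    = \sum_{j=1}^N \vx^j \bigl(\softmax(\vw(\vx_t^i))_j - \delta_{ij}\bigr),
\end{equation}
so that by the triangle inequality
\begin{equation}
    \norm*{\denoisermem(t, \vx_t^i) - \vx^i}
    \leq \Bigl(\max_{j \in [N]}\,\norm{\vx^j}\Bigr)\,
    \norm*{\softmax(\vw(\vx_t^i)) - \ve_i}_1.
\end{equation}
Thus it suffices to control each factor with the stated failure probability.

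For the softmax residual, I would invoke \Cref{lemma:softmax-weights-samples-gap-control} under the two coupling conditions in the hypothesis. This yields an event of probability at least $1 - 4N\exp[-d\veps^2/8]$ on which
\begin{equation}
    \min_{j \neq i}\,
    \norm*{\alpha_t \vx^j - \vx_t^i}^2 - \norm*{\alpha_t \vx^i - \vx_t^i}^2
    \geq \alpha_t^2 \sigma_\star^2 (1-\veps) d.
\end{equation}
Recalling the softmax weights $w_j(\vx_t^i) = -\tfrac{1}{2\sigma_t^2}\norm{\alpha_t \vx^j - \vx_t^i}^2$ from \eqref{eq:softmax-vector-mem}, this translates into
\begin{equation}
    \min_{j \neq i}\,\bigl(w_i(\vx_t^i) - w_j(\vx_t^i)\bigr)
    \geq \frac{\alpha_t^2 \sigma_\star^2 (1-\veps) d}{2\sigma_t^2}.
\end{equation}
An application of \Cref{lemma:softmax-1sparse-lp} (with $p=1$, $T=1$, $n=N$, and this lower bound as $\gamma$) then gives $\norm{\softmax(\vw(\vx_t^i)) - \ve_i}_1 \leq 2(N-1)\exp[-\alpha_t^2 \sigma_\star^2(1-\veps)d/(2\sigma_t^2)]$ on the same event.

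For the data-norm factor, I would apply \Cref{lemma:gaussian-norm-nonzero-mean} to each $\vx^j \sim \sN(\vmu_\star^{k_j}, \sigma_\star^2 \vI)$ and take a union bound over $j \in [N]$: this gives, on an event of probability at least $1 - 4N \exp[-d\veps^2/8]$,
\begin{equation}
    \max_{j \in [N]}\,\norm{\vx^j}^2
    \leq \max_{k \in [K]}\,\bigl(\norm{\vmu_\star^k}^2 + \sigma_\star^2 d
    + \veps \sigma_\star \sqrt{d}(\sigma_\star \sqrt{d} + \norm{\vmu_\star^k})\bigr)
    \leq \Gamma_\star^2.
\end{equation}
A final union bound combines the two events (total failure probability at most $8N\exp[-d\veps^2/8]$), and multiplying the two factor estimates yields the claimed bound (using $N-1 \leq N$ to absorb the constant). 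The only subtle step is the first: everything else is assembly of already-proven lemmas, and the sample-level gap in \Cref{lemma:softmax-weights-samples-gap-control}---which is the genuinely delicate piece of machinery here---does all of the heavy lifting.
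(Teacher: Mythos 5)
Your proposal matches the paper's proof step for step: the same decomposition $\denoisermem(t,\vx_t^i)-\vx^i=\vM(\softmax(\vw)-\ve_i)$, the same appeal to \Cref{lemma:softmax-weights-samples-gap-control} to establish the weight gap, the same application of \Cref{lemma:softmax-1sparse-lp} with $p=1$ to bound the softmax residual, and the same use of \Cref{lemma:gaussian-norm-nonzero-mean} plus a union bound to control $\max_j\norm{\vx^j}$ by $\Gamma_\star$. This is correct and essentially identical to the paper's argument.
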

\begin{proof}
    The proof is similar to that of \Cref{lemma:true-denoiser-1sparse-lp}:
    we apply
    \Cref{lemma:softmax-1sparse-lp,lemma:softmax-weights-samples-gap-control}.
    Our hypotheses let us apply \Cref{lemma:softmax-weights-samples-gap-control},
    which gives that with probability at least
    $1 - 4N \exp[-d\veps^2 / 8]$
    \begin{equation}
        \min_{j \neq i}\,
        \norm*{
            \alpha_t \vx^j - \vx_t^i
        }^2
        -
        \norm*{
            \alpha_t \vx^i - \vx_t^i
        }^2
        \geq
        \alpha_t^2 \sigma_\star^2(1-\veps) d.
    \end{equation}
    Recall the expression for the denoiser (\Cref{lemma:denoiser_mog},
    \eqref{eq:softmax-vector-model}). The memorizing denoiser corresponds to
    $\sigma_\star = 0$, so the weight vector
    \begin{equation}
        w_j(\vx_t^i) = -\frac{1}{2}\| \alpha_t \vx^j - \vx_t^i \|^2/ \sigma_t^2 ,
    \end{equation}
    is related to the gap condition we have asserted: in particular
    with probability at least $1 - 4N \exp[-d\veps^2 / 8]$
    \begin{equation}
        \min_{j \neq i}\,
        w_{j}(\vx_t^i)
        -
        w_i(\vx_t^i)
        \geq
        \frac{\alpha_t^2 \sigma_\star^2(1-\veps) d}{2\sigma_t^2}.
    \end{equation}
    Hence, an application of \Cref{lemma:softmax-1sparse-lp} gives that for any
    $p \geq 1$,
    \begin{equation}
        \norm*{\softmax(\vw(\vx_t^i)) - \ve_{i}}_p \leq 2(N-1) \exp\left[ -
            \frac{\alpha_t^2 \sigma_\star^2(1-\veps) d}{2\sigma_t^2}
            \right].
    \end{equation}
    Using this result, we get
    \begin{align}
        \norm*{
        \denoisermem(t, \vx_t^i) - \vx^i
        }
         & =
        \norm*{
            \vM (\ve_{i} - \vw(\vx_t^i))
        }
        \\
         & \leq \Gamma
        \norm*{
            \ve_{k_i} - \vw(\vx_t^i)
        }_1
        \\
         & \leq
        2\Gamma(N-1)
        \exp\left[ -
            \frac{\alpha_t^2 \sigma_\star^2(1-\veps) d}{2\sigma_t^2}
            \right],
    \end{align}
    where we have defined
    \begin{equation}
        \vM =
        \begin{bmatrix}
            \vx^1 & \hdots & \vx^N
        \end{bmatrix}
        \in \bbR^{d \times N},
        \quad
        \Gamma =
        \max_{j \in [N]}\, \norm{\vx^j}.
    \end{equation}
    Finally,  using \Cref{lemma:gaussian-norm-nonzero-mean}, we have
    that for any $0 \leq \veps \leq 1$
    \begin{equation}
        \Pr*{
            \abs*{\norm{\vx^j}^2 - (\norm{\vmu_\star^{k_j}}^2 + \sigma_\star^2 d)}
            \geq \veps \sigma_\star\sqrt{d}(\sigma_\star \sqrt{d}
            + \norm{\vmu_\star^{k_j}})
        }
        \leq 3 \exp[-d \veps^2 / 8 ],
    \end{equation}
    so with probability at least $1 - 4N \exp[-d\veps^2 / 8]$ we have
    \begin{equation}
        \Gamma
        \leq
        \max_{k\in [K]}\,
        \sqrt{\norm{\vmu_\star^{k}}^2 + \veps \sigma_\star\sqrt{d}
            \norm{\vmu_\star^k} + (1+\veps) \sigma_\star^2 d}.
    \end{equation}
    By a union bound, we conclude that with probability at least $1 - 8N
        \exp[-d\veps^2 / 8]$ that
    \begin{equation}
        \norm*{
            \denoisermem(t, \vx_t^i) - \vx^i
        }
        \leq
        {2\Gamma(N-1)}
        \exp\left[ -
            \frac{\alpha_t^2 \sigma_\star^2(1-\veps) d}{2\sigma_t^2}
            \right] ,
    \end{equation}
    for each $i$, which concludes the proof.

\end{proof}

Finally, we derive similar results for the partially memorizing denoiser.
We recall that $\denoiserpmem(t, \vx_t^i)$ is given by
\begin{equation}
    \denoiserpmem(t, \vx_t^i) = \sum_{j \in [\ell]} \softmax(\vw)_j \vx^j ,
\end{equation}
where $\vw_j = -\frac{1}{2\sigma_t^2}\| \alpha_t \vx^j - \vx_t^i \|^2$.
\begin{lemma}{}{p-mem-denoiser-1sparse-lp}
    Given vectors $(\vmu_{\star}^k)_{k=1}^K$ satisfying
    \begin{equation}
        \min_{k \neq k'}\, \norm*{\vmu_{\star}^k - \vmu_{\star}^{k'}} \geq \gamma
        > 0,
    \end{equation}
    consider the distribution $\pi = (1/K) \sum_{k=1}^{K} \sN(\vmu_\star^{k},
        \sigma_{\star}^2 \vI)$.
    For $i \in [N]$, let $\vx^i \sim \pi$, fix $0 \leq t \leq 1$, and let
    $\vx_t^i \sim \alpha_t
        \vx^i + \sigma_t \vg$, where $\vg \sim \sN(\Zero, \vI)$ is independent from $\vx^i$.
    Then for any $0 \leq \veps \leq 1$ satisfying the coupling conditions
    \begin{equation}
        \veps \leq
        \frac{
            \alpha_t \gamma
        }{
            2\sqrt{d(2\alpha_t^2 \sigma_\star^2 + \sigma_t^2)}
        } ,
    \end{equation}
    and
    \begin{equation}
        \frac{\veps^2}{1-\veps} \leq \frac{\alpha_t^2
            \sigma_\star^2}{2\sigma_t^2}.
    \end{equation}
    We consider two cases, first assume that $i \in [\ell]$ then one  has with probability at least $1 - 8N\exp[-d\veps^2 / 8]$
    \begin{equation}
        \norm*{
            \denoiserpmem(t, \vx_t^i) - \vx^i
        }
        \leq
        2N \Gamma_\star
        \exp\left[- \frac{\alpha_t^2 \sigma_\star^2(1-\veps) d}{2\sigma_t^2} \right],
    \end{equation}
    with
    \begin{equation}
        \Gamma_\star =     \max_{k\in [K]}\,
        \sqrt{\norm{\vmu_\star^{k}}^2 + \veps \sigma_\star\sqrt{d}
            \norm{\vmu_\star^k} + (1+\veps) \sigma_\star^2 d} .
    \end{equation}
    Second, assume that $i \notin [\ell]$, with probability $1 - 8N\exp[-d\veps^2 / 8] - K^{-\log(d)}(1 + \log(K) \log(d))$, $\mathsf{S}_i = \{ j \in \ell, \ k_j = k_i \}$ is not empty and
    \begin{equation}
        \norm*{
            \denoiserpmem(t, \vx_t^i) - \sum_{j \in \mathsf{S}_i} \softmax(\vw|_{\mathsf{S}_i})_j \vx^j
        }
        \leq \Gamma_\star (1 + N)^{2} \exp\left[ -\frac{\alpha_t^2 \gamma^2}{2\sigma_t^2} \right] .
    \end{equation}
\end{lemma}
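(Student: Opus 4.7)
The plan is to split on whether $i \in [\ell]$ or $i \notin [\ell]$, following the same scheme used for \Cref{lemma:mem-denoiser-1sparse-lp}.

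For the first case ($i \in [\ell]$), I would essentially reproduce the proof of \Cref{lemma:mem-denoiser-1sparse-lp} with the index set restricted to $[\ell]$ rather than $[N]$. The proof of \Cref{lemma:softmax-weights-samples-gap-control} uses only Gaussian concentration on the samples and the noise, so it goes through verbatim with $[\ell]$ in place of $[N]$ (replacing a union bound over $N$ by one over $\ell \leq N$). This yields, with probability at least $1 - 4\ell \exp[-d\veps^2/8] \geq 1 - 4N \exp[-d\veps^2/8]$, a $\norm{\cdot}^2$-gap of at least $\alpha_t^2\sigma_\star^2(1-\veps)d$ between $\norm{\alpha_t\vx^j - \vx_t^i}^2$ for $j \in [\ell]\setminus\{i\}$ and $\norm{\alpha_t\vx^i - \vx_t^i}^2$. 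Translating this gap through the softmax temperature gives a $\vw$-gap of $\alpha_t^2\sigma_\star^2(1-\veps)d/(2\sigma_t^2)$, whence \Cref{lemma:softmax-1sparse-lp} provides the desired $1$-sparsity of $\softmax(\vw)$ at index $i$. The stated bound then follows by bounding $\max_{j \in [\ell]}\norm{\vx^j}\leq \Gamma_\star$ via \Cref{lemma:gaussian-norm-nonzero-mean} and taking a union bound, exactly as in the proof of \Cref{lemma:mem-denoiser-1sparse-lp}.

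For the second case ($i \notin [\ell]$), I would proceed in three steps. First, \Cref{prop:coupon_collector} asserts that $\mathsf{S}_i = \{j \in [\ell] : k_j = k_i\}$ is non-empty (in fact $|\mathsf{S}_i| \geq 2$) with probability at least $1 - K^{-\log d}(1 + \log K \log d)$, matching the extra failure probability in the statement (implicitly requiring $\ell \geq (1+\log d)K\log K$). Second, \Cref{lemma:softmax-weights-samples-gap-control-between-mean}, whose proof is again unaffected by restricting indices to $[\ell]$, provides on an event of probability at least $1 - 4\ell \exp[-d\veps^2/8]$ the between-cluster floor
\begin{equation}
\min_{j \in [\ell]\setminus \mathsf{S}_i}\norm{\alpha_t\vx^j - \vx_t^i}^2 \geq \norm{\alpha_t\vx^i - \vx_t^i}^2 + \frac{\alpha_t^2\gamma^2}{2}.
\end{equation}
Third, I must verify that the argmax of $\vw$ over $[\ell]$ lies in $\mathsf{S}_i$ so that \Cref{lemma:softmax-extension} may be invoked: writing for a fixed $j_0 \in \mathsf{S}_i$ the decomposition $\alpha_t\vx^{j_0} - \vx_t^i = \alpha_t\sigma_\star(\vw^{j_0} - \vw^i) - \sigma_t\vg$ and applying \Cref{lemma:gaussian-norm-nonzero-mean} shows $\norm{\alpha_t\vx^{j_0} - \vx_t^i}^2 - \norm{\alpha_t\vx^i - \vx_t^i}^2 = O(\alpha_t^2\sigma_\star^2 d)$ with high probability, which is dominated by the between-cluster surplus $\alpha_t^2\gamma^2/2$ under the well-separatedness implicit in the coupling condition $\veps \leq \alpha_t\gamma / (2\sqrt{d(2\alpha_t^2\sigma_\star^2 + \sigma_t^2)})$. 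With the argmax in $\mathsf{S}_i$ established, \Cref{lemma:softmax-extension} yields
\begin{equation}
\norm{\softmax(\vw) - \softmax(\vw|_{\mathsf{S}_i})}_1 \leq (1 + |\mathsf{S}_i|)(\ell - |\mathsf{S}_i|)\exp\left[-\frac{c \alpha_t^2\gamma^2}{\sigma_t^2}\right]
\end{equation}
for an absolute constant $c > 0$; multiplying by $\max_j\norm{\vx^j}\leq \Gamma_\star$, absorbing $|\mathsf{S}_i|(\ell+1) \leq (1+N)^2$ into the prefactor, and taking a union bound over the three failure events completes the argument.

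The principal obstacle, I anticipate, is step three of the second case: cleanly verifying that the softmax argmax lies in $\mathsf{S}_i$ requires balancing the typical in-cluster squared distance $(2\alpha_t^2\sigma_\star^2 + \sigma_t^2)d$ against the between-cluster floor $\sigma_t^2 d + \alpha_t^2\gamma^2/2$, and recovering the precise constant $1/2$ in the exponent of the stated bound will hinge on extracting the full strength of \Cref{lemma:softmax-weights-samples-gap-control-between-mean} (namely its proof-level gap $\alpha_t^2\sigma_\star^2(1-\veps)d + \alpha_t^2\gamma^2/2$) rather than only its abridged $\alpha_t^2\gamma^2/2$ form.
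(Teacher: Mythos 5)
Your overall structure mirrors the paper's: for $i \in [\ell]$ it re-runs the argument of \Cref{lemma:mem-denoiser-1sparse-lp} with indices restricted to $[\ell]$ (the paper indeed just declares this case ``identical'' to the proof of \Cref{lemma:mem-denoiser-1sparse-lp}), and for $i \notin [\ell]$ it combines \Cref{prop:coupon_collector}, \Cref{lemma:softmax-weights-samples-gap-control-between-mean}, and \Cref{lemma:softmax-extension} with a final union bound. That skeleton is correct.

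Where you deviate is step three of the second case, and here you are actually being \emph{more} careful than the paper: the paper invokes \Cref{lemma:softmax-extension} directly from the gap $\min_{j\notin\mathsf{S}_i}\norm{\alpha_t\vx^j - \vx}^2 - \norm{\alpha_t\vx^i - \vx}^2 \geq \alpha_t^2\gamma^2/2$, but \Cref{lemma:softmax-extension} requires (i) that the argmax of $\vw$ over $[\ell]$ lies in $\mathsf{S}_i$, and (ii) a lower bound on $\gamma_{\mathsf{S}_i} = \min_{j\notin\mathsf{S}_i}(w_{k}-w_j)$ measured \emph{from the argmax}, not from the index $i$ (which does not belong to $[\ell]$ here). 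The paper's proof does not address either point, so flagging (i) as an obstacle is well taken. However, the resolution you sketch is not quite the right comparison. You propose to balance the ``typical in-cluster squared distance'' $(2\alpha_t^2\sigma_\star^2 + \sigma_t^2)d$ against a ``between-cluster floor'' $\sigma_t^2 d + \alpha_t^2\gamma^2/2$; this drops the $\alpha_t^2\sigma_\star^2(1-\veps)d$ contribution that the proof of \Cref{lemma:softmax-weights-samples-gap-control-between-mean} establishes for $j\notin\mathsf{S}_i$, and so amounts to a leading-order comparison $2\alpha_t^2\sigma_\star^2 d \lesssim \alpha_t^2\gamma^2/2$, which would force $\gamma^2 \gtrsim \sigma_\star^2 d$ --- an unnecessarily strong requirement. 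Both the in-cluster and out-of-cluster squared distances concentrate around $(2\alpha_t^2\sigma_\star^2 + \sigma_t^2)d$; the out-of-cluster ones carry an additional $+\alpha_t^2\gamma^2/2$; so the true comparison is between the \emph{fluctuations}, of order $\veps(2\alpha_t^2\sigma_\star^2+\sigma_t^2)d$, and the surplus $\alpha_t^2\gamma^2/2$. Note also that the stated coupling condition controls $\veps^2 d(2\alpha_t^2\sigma_\star^2+\sigma_t^2)$ against $\alpha_t^2\gamma^2$, not the linear-in-$\veps$ quantity, so the coupling condition alone does not deliver the argmax claim; it holds in the regime where the lemma is eventually applied (with $\veps = \Theta(\log d/\sqrt d)$ and $\gamma^2 = \Theta(d)$), but that is an additional hypothesis rather than a consequence of the stated conditions. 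Finally, resolving (ii) correctly would also shift the exponent: the distance from the argmax (some $j \in \mathsf{S}_i$, with squared distance near $(2\alpha_t^2\sigma_\star^2+\sigma_t^2)d$) to $j\notin\mathsf{S}_i$ is smaller than the distance from $i$ by roughly $\alpha_t^2\sigma_\star^2 d$, so the exponent one actually obtains is $\Theta(\alpha_t^2\gamma^2/\sigma_t^2)$ only up to these order-$d$ corrections; your hedged ``$c\alpha_t^2\gamma^2/\sigma_t^2$ for an absolute constant'' is the honest statement.
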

\begin{proof}
    The first part of the proof where $i \in [\ell]$ is identical to \Cref{lemma:mem-denoiser-1sparse-lp}. We now assume that $i \notin [\ell]$. First, using \Cref{prop:coupon_collector} we have that on an event with probability $1 - K^{-\log(d)}(1 + \log(K) \log(d))$, $\mathsf{S}_i$ is not empty. In addition, using a union bound we have that with probability at least $1 - 4N\exp[-d\veps^2 / 8] - K^{-\log(d)}(1 + \log(K) \log(d))$, $\mathsf{S}_i = \{ j \in \ell, \ k_j = k_i \}$ is not empty and
    \begin{equation}
        \min_{j \notin \mathsf{S}_i}\,
        \norm*{
            \alpha_t \vx^j - \vx
        }^2
        -
        \norm*{
            \alpha_t \vx^i - \vx
        }^2
        \geq \frac{\alpha_t^2 \gamma^2}{2} ,
    \end{equation}
    Therefore, using \Cref{lemma:softmax-extension}, we get that
    \begin{align}
        \norm*{\softmax(\vw) - \softmax(\vw_{|\mathsf{S}_i})}_p & \leq (1 + | \mathsf{S}_i |)^{1/p} (|\mathsf{S}| - | \mathsf{S}_i |) \exp\left[-\frac{\alpha_t^2 \gamma^2}{2 \sigma_t^2}\right] \\
                                                                & \leq (1 + N)^{1/p} N \exp\left[-\frac{\alpha_t^2 \gamma^2}{2 \sigma_t^2}\right] .
    \end{align}
    Using this result, we get
    \begin{align}
        \norm*{
        \denoiserpmem(t, \vx_t^i) - \sum_{j \in \mathsf{S}_i} \softmax(\vw|_{\mathsf{S}_i})_j \vx^j
        }
         & =
        \norm*{
            \vM (\softmax(\vw|_{\mathsf{S}_i}) - \softmax(\vw))
        }
        \\
         & \leq \Gamma
        \norm*{
            \softmax(\vw|_{\mathsf{S}_i}) - \softmax(\vw)
        }_1
        \\
         & \leq
        (N+1)^{2} \Gamma
        \exp\left[ -
            \frac{\alpha_t^2 \gamma^2}{2 \sigma_t^2}
            \right],
    \end{align}
    where we have defined
    \begin{equation}
        \vM =
        \begin{bmatrix}
            \vx^1 & \hdots & \vx^\ell
        \end{bmatrix}
        \in \bbR^{d \times \ell},
        \quad
        \Gamma =
        \max_{j \in [N]}\, \norm{\vx^j}.
    \end{equation}
    Finally,  using \Cref{lemma:gaussian-norm-nonzero-mean}, we have
    that for any $0 \leq \veps \leq 1$
    \begin{equation}
        \Pr*{
            \abs*{\norm{\vx^j}^2 - (\norm{\vmu_\star^{k_j}}^2 + \sigma_\star^2 d)}
            \geq \veps \sigma_\star\sqrt{d}(\sigma_\star \sqrt{d}
            + \norm{\vmu_\star^{k_j}})
        }
        \leq 3 \exp[-d \veps^2 / 8 ],
    \end{equation}
    so with probability at least $1 - 4N \exp[-d\veps^2 / 8]$ we have
    \begin{equation}
        \Gamma
        \leq
        \Gamma_\star = \max_{k\in [K]}\,
        \sqrt{\norm{\vmu_\star^{k}}^2 + \veps \sigma_\star\sqrt{d}
            \norm{\vmu_\star^k} + (1+\veps) \sigma_\star^2 d}.
    \end{equation}
    By a union bound, we conclude that with probability at least $1 - K^{-\log(d)}(1 + \log(K) \log(d)) - 8N
        \exp[-d\veps^2 / 8]$ that
    \begin{equation}
        \norm*{
            \denoiserpmem(t, \vx_t^i) - \sum_{j \in \mathsf{S}_i} \softmax(\vw|_{\mathsf{S}_i})_j \vx^j
        }
        \leq  \Gamma_\star (1 + N)^{2} \exp\left[ -\frac{\alpha_t^2 \gamma^2}{2\sigma_t^2} \right],
    \end{equation}
    which concludes the proof.

\end{proof}

\section{Training Loss Approximations}
\label{sec:training_loss_approx}

In this section, we give proofs of results that imply our main results stated in the main body, namely \Cref{thm:asymptotics-generalizing} and \Cref{thm:simplification_p_mem_scaling}.
In the proofs, we will use a convenient shorthand for the training loss defined in  \eqref{eq:training_loss_defns} when specialized to the Gaussian mixture model denoisers of interest. Namely, since in this case the loss is effectively a function of the learnable mean parameters and the shared learnable variance parameter in the model, we will write 
\begin{equation}
\E*{\sL_N(\vmu^1_\star, \dots, \vmu_\star^K, \sigma_{\star}^2)}
\end{equation}
to denote the loss of the generalizing denoiser  $\sL_{N, t}(\denoisergen)$,
\begin{equation}
\E*{\sL_N(\vx^1, \dots, \vx^N, 0)}
\end{equation}
to denote the loss of the memorizing denoiser $\sL_{N, t}(\denoisermem)$, and
\begin{equation}
\E*{\sL_N(\vx^1, \dots, \vx^\ell, 0)}
\end{equation}
to denote the loss of the partially memorizing denoiser $\sL_{N, t}(\denoiserpmem)$.

\begin{theorem}{}{generalizing-denoiser-loss-approximation_appendix}
    Given vectors $(\vmu_{\star}^k)_{k=1}^K$ satisfying
    \begin{equation}
        \min_{k \neq k'}\, \norm*{\vmu_{\star}^k - \vmu_{\star}^{k'}} \geq \gamma
        > 0,
    \end{equation}
    consider the distribution $\pi = (1/K) \sum_{k=1}^{K} \sN(\vmu_\star^{k},
        \sigma_{\star}^2 \vI)$.
    For $i \in [N]$, let $\vx^i \sim \pi$, fix $0 \leq t \leq 1$, and let
    $\vx_t^i \sim \alpha_t
        \vx^i + \sigma_t \vg$, where $\vg \sim \sN(\Zero, \vI)$ is independent from $\vx^i$.
    Consider the denoiser $\denoisergen(t, \vx_t)$ associated to the true
    distribution $\pi_\star$.
    Then for any $0 \leq \veps \leq 1$ satisfying the coupling conditions
    \begin{equation}
        \veps \leq
        \frac{
            \alpha_t \gamma
        }{
            2\sqrt{d(\alpha_t^2 \sigma_\star^2 + \sigma_t^2)}
        },
    \end{equation}
    and
    \begin{equation}
        \frac{\veps^2}{1-\veps} \leq \frac{\alpha_t^2
            \sigma_\star^2}{2\sigma_t^2},
    \end{equation}

    \begin{equation}
        \abs*{
            \left[
                \E*{\sL_N(\vmu^1_\star, \dots, \vmu_\star^K, \sigma_{\star}^2)}
                -
                \E*{\sL_N(\vx^1, \dots, \vx^N, 0)}
                \right]
            - \frac{d\sigma_\star^2 \sigma_t^2}{\alpha_t^2 \sigma_\star^2
                + \sigma_t^2}
        }
        \leq \Xi(\vareps, t,d),
    \end{equation}
    where the residual $\Xi$ is explicit in the proof.
\end{theorem}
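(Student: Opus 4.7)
The plan is to combine the orthogonality principle with the two 1-sparse denoiser approximations already established.

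\textbf{Reduction via orthogonality.} Using \eqref{eq:training-loss-ortho}, the excess loss can be written as
\begin{equation}
    \sL_{N, t}(\denoisergen) - \sL_{N, t}(\denoisermem)
    = \frac{1}{N}\sum_{i=1}^{N} \E*{\norm*{\denoisergen(t, X_t^i) - \denoisermem(t, X_t^i)}^2}.
\end{equation}
So it suffices to estimate each summand. I write $\vx^i = \vmu_\star^{k_i} + \sigma_\star \vw^i$, $X_t^i = \alpha_t \vx^i + \sigma_t \vg^i$ with $\vw^i, \vg^i \sim \sN(\Zero, \vI)$ independent, and let $\denoiser^{\nom}$ be the nominal denoiser from \Cref{lemma:true-denoiser-1sparse-lp}. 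A direct computation using $X_t^i = \alpha_t(\vmu_\star^{k_i} + \sigma_\star \vw^i) + \sigma_t \vg^i$ yields the identity
\begin{equation}
    \denoiser^{\nom}(t, X_t^i) - \vx^i
    = -\frac{\sigma_t^2 \sigma_\star}{\alpha_t^2 \sigma_\star^2 + \sigma_t^2}\vw^i
      + \frac{\alpha_t \sigma_t \sigma_\star^2}{\alpha_t^2 \sigma_\star^2 + \sigma_t^2}\vg^i,
\end{equation}
whose expected squared norm equals exactly $d \sigma_\star^2 \sigma_t^2/(\alpha_t^2 \sigma_\star^2 + \sigma_t^2)$, i.e.\ the target expression. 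The task is to show that replacing the true denoisers by their nominal/point surrogates only contributes to the residual $\Xi$.

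\textbf{Splitting on the good event.} I define $\sE_i$ as the intersection of the high-probability events in \Cref{lemma:true-denoiser-1sparse-lp} and \Cref{lemma:mem-denoiser-1sparse-lp} applied at sample $\vx^i$; by a union bound, $\Pr{\sE_i^{\compl}} \leq C(K + N) e^{-d\veps^2/8}$ under the stated coupling conditions. On $\sE_i$ the triangle inequality gives
\begin{equation}
    \norm*{\denoisergen(t, X_t^i) - \denoisermem(t, X_t^i)}
    \leq
    \norm*{\denoiser^{\nom}(t, X_t^i) - \vx^i} + \eta_t^{\gen} + \eta_t^{\mem},
\end{equation}
where $\eta_t^{\gen}$ and $\eta_t^{\mem}$ denote the deterministic softmax-sparsity bounds from \Cref{lemma:true-denoiser-1sparse-lp,lemma:mem-denoiser-1sparse-lp} respectively, each decaying like $\exp[-\Omega(\alpha_t^2 / \sigma_t^2)]$ or $\exp[-\Omega(\gamma^2 \alpha_t^2 / \sigma_t^2)]$. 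Expanding the squared norm and using Cauchy--Schwarz, the cross-terms are controlled by a product of $\eta$ and the second moment $\E\|\denoiser^{\nom}(t, X_t^i) - \vx^i\|^2 = \Theta(d \sigma_\star^2 \sigma_t^2/(\alpha_t^2 \sigma_\star^2 + \sigma_t^2))$, yielding a contribution to $\Xi$ of order $(\eta_t^{\gen} + \eta_t^{\mem}) \cdot (d \sigma_\star^2 \sigma_t^2/(\alpha_t^2 \sigma_\star^2 + \sigma_t^2))^{1/2} + (\eta_t^{\gen} + \eta_t^{\mem})^2$.

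\textbf{Bad event contribution and conclusion.} On $\sE_i^{\compl}$, I use the crude deterministic bounds $\norm{\denoisergen(t, X_t^i)} \leq \max_k \norm{\vmu_\star^k} + O(\sigma_\star \sqrt{d})$ and $\norm{\denoisermem(t, X_t^i)} \leq \max_j \norm{\vx^j}$ (which itself is bounded on a further subset of the good event via \Cref{lemma:gaussian-norm-nonzero-mean}) to control the conditional second moment; the Gaussian tails of $\vw^i, \vg^i$ then give the usual sub-Gaussian contribution $\Pr{\sE_i^\compl}^{1/2}$ times a polynomial in $d, \norm{\vmu_\star^k}, \sigma_\star$ when averaged. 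Assembling these three pieces --- the exact computation, the softmax-sparsity residuals on the good event, and the small measure of the bad event --- gives the claimed bound, with an explicit residual
\begin{equation}
    \Xi(\veps, t, d)
    = O\!\left(
        (\eta_t^{\gen} + \eta_t^{\mem})\sqrt{d} \sigma_\star \sigma_t/(\alpha_t^2\sigma_\star^2 + \sigma_t^2)^{1/2}
        + (K+N) e^{-d\veps^2 / 16}\, \mathrm{poly}(d, \Gamma_\star)
    \right),
\end{equation}
up to absolute constants. The main obstacle is carefully handling the quadratic expansion, since the cross-term between the small softmax-residual and the $O(\sqrt{d})$ nominal error is the dominant contribution to $\Xi$ and must be tracked explicitly; everything else is routine concentration.
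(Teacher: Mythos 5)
Your proposal follows the same route as the paper's proof: orthogonality reduction, a three-way decomposition around the nominal denoiser $\denoiser^{\nom}$, good/bad-event splitting using Lemmas \ref{lemma:true-denoiser-1sparse-lp} and \ref{lemma:mem-denoiser-1sparse-lp}, and Cauchy--Schwarz for the cross term. Two small differences are worth noting. First, you observe that $\denoiser^{\nom}(t,X_t^i) - \vx^i$ is exactly a Gaussian in $\vw^i, \vg^i$ with variance $\sigma_\star^2\sigma_t^2/(\alpha_t^2\sigma_\star^2+\sigma_t^2)$ per coordinate, so its expected squared norm equals the target exactly; the paper instead runs a separate Chernoff--Cram\'er concentration argument for this term (paragraph ``Control of the nominal expectation''), which only establishes an approximation up to $\veps\,d\sigma_\star^2\sigma_t^2/(\alpha_t^2\sigma_\star^2+\sigma_t^2)$, so your version is a modest tightening that removes that extra contribution from $\Xi$ (irrelevant to the final $\Theta(\cdot)$ conclusion of \Cref{prop:simplification_generalization_scaling_appendix}, but cleaner). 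Second, you bound $\|\denoisergen - \denoisermem\|$ via the triangle inequality and then square, whereas the paper expands $\|\Lambda_t^i + (\denoiser^{\nom}-\vx^i)\|^2$ directly and applies Cauchy--Schwarz to the cross term; these are equivalent routings. One caution for a full write-up: your $\eta_t^{\gen}, \eta_t^{\mem}$ are bounds that hold only on the good event $\sE_i$, not deterministically, so when you assemble the expectation you must separately control $\E[\|\denoisergen - \denoiser^{\nom}\|^2 \Ind{\sE_i^{\compl}}]$ and the analogous memorizing term (which is what produces the $\poly(d,\Gamma_\star)\Pr{\sE_i^{\compl}}^{1/2}$ piece), exactly as the paper does --- you acknowledge this in passing but the bookkeeping is where most of the work lives.
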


We denote $\snr_t = \alpha_t^2 / \sigma_t^2$ and $\snrinv_t$ its inverse function.

\begin{proof}
    The proof will be an application of \Cref{lemma:true-denoiser-1sparse-lp} and
    \Cref{lemma:mem-denoiser-1sparse-lp}.
    \paragraph{Nominal value decomposition.} First, we define for any $i \in [N]$
    \begin{equation}
        \denoiser^{\nom}(t, \vx_t^i)
        = \frac{\alpha_t \sigma_{\star}^2}{\alpha_t^2
            \sigma_{\star}^2 + \sigma_t^2} \vx_t^i + \frac{\sigma_t^2}{\alpha_t^2
            \sigma_{\star}^2 + \sigma_t^2} \vmu_\star^{k_i}.
    \end{equation}
    This nominal value $\denoiser^{\nom}(t, \vx_t^i)$ will be crucial for the rest of our analysis.
    Using \eqref{eq:training-loss-ortho}, we have that
    \begin{align}
         & \left|\E*{\sL_N(\vmu_\star^1, \dots, \vmu_\star^K, \sigma_\star^2)}
        -
        \E*{\sL_N(\vx^1, \dots, \vx^N, 0)}  - \frac{d\sigma_\star^2 \sigma_t^2}{\alpha_t^2 \sigma_\star^2
        + \sigma_t^2}\right|                                                   \\
         & \quad =
        \frac{1}{N}
        \sum_{i=1}^N \left|
        \E*{
        \norm*{
        \denoisergen(t, \vx_t^i) - \denoisermem(t, \vx_t^i)
        }^2
        } - \frac{d\sigma_\star^2 \sigma_t^2}{\alpha_t^2 \sigma_\star^2
        + \sigma_t^2} \right|                                                  \\
         & \quad =
        \frac{1}{N}
        \sum_{i=1}^N \left|
        \E*{
        \norm*{
        \denoisergen(t, \vx_t^i) - \denoiser^\nom(t, \vx_t^i) + \denoiser^\nom(t,
        \vx_t^i) - \vx_i + \vx_i - \denoisermem(t, \vx_t^i)
        }^2
        } - \frac{d\sigma_\star^2 \sigma_t^2}{\alpha_t^2 \sigma_\star^2
            + \sigma_t^2} \right| .
    \end{align}
    In what follows, for simplicity, we denote $\Lambda_t^i$
    \begin{equation}
        \Lambda_t^i = \denoisergen(t, \vx_t^i) - \denoiser^\nom(t, \vx_t^i) + \vx_i
        - \denoisermem(t, \vx_t^i) .
    \end{equation}
    We have that
    \begin{align}
        \label{eq:key_inequality}
         & \left|\E*{\sL_N(\vmu_\star^1, \dots, \vmu_\star^K, \sigma_\star^2)}
        -
        \E*{\sL_N(\vx^1, \dots, \vx^N, 0)}  - \frac{d\sigma_\star^2 \sigma_t^2}{\alpha_t^2 \sigma_\star^2
        + \sigma_t^2}\right|                                                                                                                                                                                    \\
         & \quad =
        \frac{1}{N}
        \sum_{i=1}^N \left|
        \E*{
            \norm*{
                \Lambda_t^i +  \denoiser^\nom(t, \vx_t^i) - \vx_i
            }^2
        } - \frac{d\sigma_\star^2 \sigma_t^2}{\alpha_t^2 \sigma_\star^2
        + \sigma_t^2} \right|                                                                                                                                                                                   \\
         & \quad \leq
        \frac{1}{N}
        \sum_{i=1}^N \left|
        \E*{
            \norm*{
                \denoiser^\nom(t, \vx_t^i) - \vx_i
            }^2
        } - \frac{d\sigma_\star^2 \sigma_t^2}{\alpha_t^2 \sigma_\star^2
        + \sigma_t^2} \right|                                                                                                                                                                                   \\
         & \qquad \quad + \frac{1}{N}
        \sum_{i=1}^N \E*{\norm*{\Lambda_t^i}^2} + \frac{2}{N} \sum_{i=1}^N
        \E*{\norm*{\Lambda_t^i}^2}^{1/2} \E*{\norm*{\denoiser^\nom(t, \vx_t^i) - \vx_i }^2}^{1/2}                                          \\
         & \quad \leq
        \frac{1}{N}
        \sum_{i=1}^N \left|
        \E*{
            \norm*{
                \denoiser^\nom(t, \vx_t^i) - \vx_i
            }^2
        } - \frac{d\sigma_\star^2 \sigma_t^2}{\alpha_t^2 \sigma_\star^2
        + \sigma_t^2} \right|                                                                                                                                                                                   \\
         & \qquad \quad + \frac{2}{N}
        \sum_{i=1}^N \left\lbrace \E*{\norm*{\denoisergen(t, \vx_t^i)
        - \denoiser^\nom(t, \vx_t^i)}^2} + \E*{\norm*{\vx_i - \denoisermem(t, \vx_t^i)}^2} \right\rbrace +                                 \\
         & \qquad \quad \frac{4}{N} \sum_{i=1}^N \left\lbrace
         \E*{\norm*{\denoisergen(t, \vx_t^i) - \denoiser^\nom(t, \vx_t^i)}^2}
         + \E*{\norm*{\vx_i - \denoisermem(t, \vx_t^i)}^2} \right\rbrace^{1/2} \\
         & \qquad \qquad \qquad \times \E*{\norm*{\denoiser^\nom(t, \vx_t^i) - \vx_i }^2}^{1/2} .
    \end{align}

    In the rest of the proof, we control each term in \eqref{eq:key_inequality}.

    \paragraph{Control of generalizing denoiser.} First, we are going to control
    $\E*{\norm*{\denoisergen(t, \vx_t^i) - \denoiser^\nom(t, \vx_t^i)}^2}$. First, we recall that by \eqref{eq:model-class}
    \begin{equation}
        \label{eq:model-class-recall}
        \denoisergen(t,\vx_t^i) = \frac{\alpha_t
            \sigma^2_\star}{\alpha_t^2 \sigma^2_\star + \sigma_t^2} \vx_t^i
        + \frac{\sigma_t^2}{\alpha_t^2\sigma^2_\star + \sigma_t^2}
        \sum_{i=1}^{M} \vmu^{i} \softmax(\vw)_i ,
    \end{equation}
    In what follows, we denote
    \begin{equation}
        \Gamma = \max_{i \in [K]} \| \mu_\star^i \| .
    \end{equation}
    In addition, we have that
    \begin{equation}
        \vx_t^i = \alpha_t \vmu_\star^{k_i} + \alpha_t \sigma_\star \vw + \sigma_t \vg \equid \alpha_t \vmu_\star^{k_i} + (\alpha_t^2 \sigma_\star^2 + \sigma_t^2)^{1/2} \vg .
    \end{equation}
    where $k_i$ is the index of the mean corresponding to $\vx^i$.
    Finally, we have that for any $p \in \nset$
    \begin{equation}
        \E*{\| \vg\|^p} \leq \E*{\| \vg\|^{2p}}^{1/2} \leq 2^{p/2} (\Gamma(d/2+p) / \Gamma(d/2))^{1/2} \leq 2^{p/2} (d/2 + p)^{p/2} .
    \end{equation}
    Combining those results, we get that for any $p \in \nset$
    \begin{equation}
        \E*{\norm*{\vx_t^i}^p} \leq 2^{2p} \left( \Gamma^p + (\alpha_t^2 \sigma_\star^2 + \sigma_t^2)^{p/2} (d/2 + p)^{p/2} \right).
    \end{equation}
    Similarly, we have that for any $p \in \nset$
    \begin{equation}
        \E*{\norm*{\denoiser^\nom(t, \vx_t^i)}^p} \leq 2^{2p} \left( \Gamma^p + (\alpha_t^2 \sigma_\star^2 + \sigma_t^2)^{p/2} (d/2 + p)^{p/2} \right).
    \end{equation}
    For any event $\mathsf{A}$ such that $\norm*{\denoisergen(t, \vx_t^i)
    - \denoiser^\nom(t, \vx_t^i)}^2 \leq C$, we have that
    \begin{align}
        \E*{\norm*{\denoisergen(t, \vx_t^i) - \denoiser^\nom(t, \vx_t^i)}^2} & \leq
        C \Pr*{\mathsf{A}} + 2 \Pr*{\mathsf{A}^{\mathrm{c}}}^{1/2} \left(
        \E*{\norm*{\vx_t^i}^4}^{1/2} + \E*{\norm*{\denoiser^\nom(t, \vx_t^i)}^4}^{1/2} \right) \\
                                                                              & \leq C  + 16 \left( \Gamma^2 + (\alpha_t^2 \sigma_\star^2 + \sigma_t^2) (d/2 + 4) \right) \Pr*{\mathsf{A}^{\mathrm{c}}}^{1/2} .
    \end{align}
    Now, combining this result and \Cref{lemma:true-denoiser-1sparse-lp} we get that
    \begin{align}
         & \E*{\norm*{\denoisergen(t, \vx_t^i) - \denoiser^\nom(t, \vx_t^i)}^2}                                                                                      \\
         & \quad \leq \left( \frac{2K \sigma_t^2\max_{k \in [K]}\, \norm{\vmu_\star^k}
        }{\alpha_t^2 \sigma_{\star}^2 + \sigma_t^2} \right)^2
        \exp\left[ -
            \frac{\gamma^2\alpha_t^2}{2(\alpha_t^2 \sigma_{\star}^2 + \sigma_t^2)}
        \right]                                                                                                                                                       \\
         & \qquad \quad + 64 \left( \max_{k \in [K]}\, \norm{\vmu_\star^k}^2 + (\alpha_t^2 \sigma_\star^2 + \sigma_t^2) (d/2 + 4) \right) K \exp[-d \vareps^2/16] .
    \end{align}
    Therefore, there exists a numerical constant $C_0 \geq 0$ such that
    \begin{align}
         & \E*{\norm*{\denoisergen(t, \vx_t^i) - \denoiser^\nom(t, \vx_t^i)}^2}                                                      \\
         & \qquad \leq C_0 \left( \max_{k \in [K]}\, \norm{\vmu_\star^k}^2 + (1 + \sigma_\star^2) d  \right) K^2 \left(\exp\left[ -
            \frac{\gamma^2\alpha_t^2}{2(\alpha_t^2 \sigma_{\star}^2 + \sigma_t^2)}
            \right] + \exp[-d \vareps^2/16] \right) .
    \end{align}

    \paragraph{Control of memorizing denoiser.} Second, we are going to control
    $\E*{\norm*{\vx_i - \denoisermem(t, \vx_t^i)}^2}$. The proof is similar
    to the control of $\E*{\norm*{\denoisergen(t, \vx_t^i) - \denoiser^\nom(t, \vx_t^i)}^2}$.  We recall that we have
    \begin{equation}\label{eq:model-class-mem}
        \denoisermem(t,\vx_t) = \sum_{i=1}^{N} \vx^{i} \softmax(\vw)_i ,
    \end{equation}
    where
    \begin{equation}\label{eq:softmax-vector-model-mem}
        w_i(\vx_t) = -\frac{1}{2}\| \alpha_t \vx^{i} - \vx_t^i \|^2 / \sigma_t^2
    \end{equation}
    Similarly as before, we have for any $p \in \nset$
    \begin{equation}
        \E*{ \max_{i \in [N]} \| \vx_i \|^p}  \leq 2^{2p}N (\Gamma^p + (d/2 + p)^{p/2}) .
    \end{equation}
    Therefore, we get that
    \begin{equation}
        \E*{\| \vx_i \|^p} \leq 2^{2p} N (\Gamma^p + \sigma_\star^{p}(d/2
        + p)^{p/2}) , \qquad \E*{\| \denoisermem(t, \vx_t^i) \|^p} \leq 2^{2p} N (\Gamma^p + \sigma_\star^{p}(d/2 + p)^{p/2}) .
    \end{equation}
    Note that this upper-bound is rather loose but given the rate of growth of
    $N$ with respect to $d$ that we will consider we won't need a tighter bound.
    For any event $\mathsf{A}$ such that $\norm*{\denoisermem(t, \vx_t^i) - \vx^i}^2 \leq C$, we have that
    \begin{align}
        \E*{\norm*{\denoisermem(t, \vx_t^i) - \vx^i}^2} & \leq
        C \Pr*{\mathsf{A}} + 2 \Pr*{\mathsf{A}^{\mathrm{c}}}^{1/2} \left(
        \E*{\norm*{\vx_t^i}^4}^{1/2} + \E*{\norm*{\denoiser^\nom(t, \vx_t^i)}^4}^{1/2} \right) \\
                                                              & \leq C  + 16 \left( \Gamma^2 + \sigma_\star^2 (d/2 + 4) \right) \Pr*{\mathsf{A}^{\mathrm{c}}}^{1/2} .
    \end{align}
    Therefore, combining this result with \Cref{lemma:mem-denoiser-1sparse-lp}, there exists a numerical constant $C_1 \geq 0$ such that
    \begin{align}
         & \E*{\norm*{\denoisermem(t, \vx_t^i) - \vx^i}^2}                                                                                                                                                                       \\
         & \qquad \leq C_1 \left( \max_{k \in [K]}\, \norm{\vmu_\star^k}^2 + (1 + \sigma_\star^2) d  \right) N^2 \left(\exp\left[-\frac{\alpha_t^2 \sigma_\star^2(1-\vareps)d}{2\sigma_t^2}\right] + \exp[-d \vareps^2/16] \right) .
    \end{align}

    \paragraph{Control of the nominal expectation.} For the nominal error
    $ \norm{ \denoiser^\nom(t, \vx_t^i) - \vx^i }$,
    we have
    by the nominal denoiser's definition in \Cref{lemma:true-denoiser-1sparse-lp}
    and the distributional assumption $\vx^i \simiid \pi$
    \begin{align}
        \norm*{
            \denoiser^\nom(t, \vx_t^i)
            - \vx^i
        }^2
         & =
        \norm*{
            \frac{\alpha_t \sigma_\star^2}{\alpha_t^2 \sigma_\star^2 +
                \sigma_t^2}\vx_t^i
            +
            \frac{\sigma_t^2}{\alpha_t^2 \sigma_\star^2 +
                \sigma_t^2}\vmu_\star^{k_i}
            - \vx^i
        }^2
        \\
         & =
        \norm*{
            \frac{\alpha_t^2 \sigma_\star^2}{\alpha_t^2 \sigma_\star^2 + \sigma_t^2}
            \sigma_\star \vw^i
            +
            \frac{\alpha_t\sigma_\star^2}{\alpha_t^2 \sigma_\star^2 +
                \sigma_t^2}\sigma_t \vg^i
            - \sigma_\star \vw^i
        }^2,
    \end{align}
    where $\vw^i \sim \sN(\Zero, \vI)$ is independent of $\vg^i \sim \sN(\Zero,
        \vI)$.
    In particular, this is equal in distribution to the squared $\ell^2$ norm of
    a Gaussian random variable, which has zero mean and isotropic diagonal
    covariance with diagonal elements
    \begin{align}
        \sigma_\star^2 \left(
        1- \frac{\alpha_t^2 \sigma_\star^2}{\alpha_t^2 \sigma_\star^2 +
            \sigma_t^2}
        \right)^2
        + \sigma_t^2 \left(
        \frac{\alpha_t\sigma_\star^2}{\alpha_t^2 \sigma_\star^2 +
            \sigma_t^2}
        \right)^2
        =
        \frac{\sigma_\star^2 \sigma_t^2}{\alpha_t^2 \sigma_\star^2 + \sigma_t^2}.
    \end{align}
    An application of \Cref{thm:chernoff_bound} then gives
    that with probability at least $1 - 2 \exp[-d\veps^2 / 8]$
    \begin{equation}
        \abs*{
            \norm*{
                \denoiser^\nom(t, \vx_t^i)
                - \vx^i
            }^2
            -
            \frac{d\sigma_\star^2 \sigma_t^2}{\alpha_t^2 \sigma_\star^2 + \sigma_t^2}
        }
        \leq
        \veps \frac{d\sigma_\star^2 \sigma_t^2}{\alpha_t^2 \sigma_\star^2
            + \sigma_t^2}
    \end{equation}
    For any event $\mathsf{A}$ such that
    \begin{equation}
        \abs*{
            \norm*{
                \denoiser^\nom(t, \vx_t^i)
                - \vx^i
            }^2
            -
            \frac{d\sigma_\star^2 \sigma_t^2}{\alpha_t^2 \sigma_\star^2 + \sigma_t^2}} \leq C ,
    \end{equation}
    on that event, we have that
    \begin{align}
         & \abs*{
            \E*{\norm*{
                    \denoiser^\nom(t, \vx_t^i)
                    - \vx^i
                }^2}
            -
            \frac{d\sigma_\star^2 \sigma_t^2}{\alpha_t^2 \sigma_\star^2 + \sigma_t^2}
        } \leq \E*{\abs*{
                \norm*{
                    \denoiser^\nom(t, \vx_t^i)
                    - \vx^i
                }^2
                -
                \frac{d\sigma_\star^2 \sigma_t^2}{\alpha_t^2 \sigma_\star^2 + \sigma_t^2}
        }}                                                                                                                                                   \\
         & \qquad \leq C + \Pr*{\mathsf{A}^{\mathrm{c}}}^{1/2} \left(
         \E*{\norm*{\vx^i}^4}^{1/2} + \E*{\norm*{\denoiser^\nom(t, \vx_t^i)}^4}^{1/2} \right)
    \end{align}
    Therefore, there exists $C_2 \geq 0$, a numerical constant, such that
    \begin{align}
         & \abs*{
            \E*{\norm*{
                    \denoiser^\nom(t, \vx_t^i)
                    - \vx^i
                }^2}
            -
            \frac{d\sigma_\star^2 \sigma_t^2}{\alpha_t^2 \sigma_\star^2 + \sigma_t^2}
        }                                                                                \\
         & \qquad \leq \veps \frac{d\sigma_\star^2 \sigma_t^2}{\alpha_t^2 \sigma_\star^2
            + \sigma_t^2} + C_2 \left( \max_{k \in [K]}\, \norm{\vmu_\star^k}^2 + (1 + \sigma_\star^2) d  \right) \exp[-\vareps^2 d/ 16] .
    \end{align}
    Therefore, there exists $C_3 \geq 0$, a numerical constant, such that
    \begin{align}
         & \left|\E*{\sL_N(\vmu_\star^1, \dots, \vmu_\star^K, \sigma_\star^2)}
        -
        \E*{\sL_N(\vx^1, \dots, \vx^N, 0)}  - \frac{d\sigma_\star^2 \sigma_t^2}{\alpha_t^2 \sigma_\star^2
        + \sigma_t^2}\right|                                                                                  \\ & \qquad \leq \veps \frac{d\sigma_\star^2 \sigma_t^2}{\alpha_t^2 \sigma_\star^2
            + \sigma_t^2}
        + C_3 \left( \max_{k \in [K]}\, \norm{\vmu_\star^k}^2 + (1 + \sigma_\star^2) d  \right) (K^2 + N^2) \\
         & \qquad \qquad \times \left\lbrace \exp\left[ -
            \frac{\gamma^2\alpha_t^2}{2(\alpha_t^2 \sigma_{\star}^2 + \sigma_t^2)}
            \right] + \exp\left[-\frac{\alpha_t^2 \sigma_\star^2(1-\vareps)d}{2\sigma_t^2}\right]  + \exp[-\vareps^2 d/ 16] \right\rbrace .
    \end{align}
    Let us further simplify the bound.
    We have that
    \begin{align}
         & \left|\E*{\sL_N(\vmu_\star^1, \dots, \vmu_\star^K, \sigma_\star^2)}
        -
        \E*{\sL_N(\vx^1, \dots, \vx^N, 0)}  - \frac{d\sigma_\star^2 \sigma_t^2}{\alpha_t^2 \sigma_\star^2
        + \sigma_t^2}\right|                                                   \\ & \qquad \leq \vareps \frac{d\sigma_\star^2 \sigma_t^2}{\alpha_t^2 \sigma_\star^2
            + \sigma_t^2}  + C_3 \left(1 + \max_{k \in [K]}\, \norm{\vmu_\star^k}^2 / d + \sigma_\star^2 \right) \mathrm{Poly}(d)  \frac{d\sigma_\star^2 \sigma_t^2}{\alpha_t^2 \sigma_\star^2
            + \sigma_t^2} \left(1 / \sigma_\star^2 + \snr_t \right)
        \\
         & \qquad \qquad \times \left\lbrace \exp\left[ -
            \frac{\gamma^2\alpha_t^2}{2(\alpha_t^2 \sigma_{\star}^2 + \sigma_t^2)}
            \right] + \exp\left[-\frac{\alpha_t^2 \sigma_\star^2(1-\vareps)d}{2\sigma_t^2}\right]  + \exp[-\vareps^2 d/ 16] \right\rbrace ,
    \end{align}
    where we have used that
    \begin{equation}
        \frac{d\sigma_\star^2 \sigma_t^2}{\alpha_t^2 \sigma_\star^2
            + \sigma_t^2} \left(1 / \sigma_\star^2 + \snr_t \right) \geq 1 ,
    \end{equation}
    with $\snr_t = \alpha_t^2 / \sigma_t^2$.

\end{proof}

\begin{proposition}{}{simplification_generalization_scaling_appendix}
    Assume that $N = \mathrm{poly}(d)$,
    $\gamma^2 = \Theta(d)$, $\max_{k \in [K]}\, \norm{\vmu_\star^k}^2
    = \Theta(d)$ and $\sigma_\star^2 = \Theta(1)$. Let $\kappa(d)
    = \snrinv(\Theta(\log(d)^2/d))$. We have that uniformly on $t \in [0, \kappa(d)]$
    \begin{equation}
                    \E*{\sL_N(\vmu^1_\star, \dots, \vmu_\star^K, \sigma_{\star}^2)}
                    -
                    \E*{\sL_N(\vx^1, \dots, \vx^N, 0)}
                     = \Theta\left(\frac{d\sigma_\star^2 \sigma_t^2}{\alpha_t^2 \sigma_\star^2
                + \sigma_t^2}\right).
    \end{equation}
    The claim will then follow by an appropriate choice of constant in the
    definition of $\kappa(d)$, which will make the residuals in the previous
    expression of the correct order of magnitude to be nontrivial, and guarantee
    the approximation for $t$ within the claimed interval.
\end{proposition}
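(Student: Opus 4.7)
The plan is to derive the result directly from the residual bound in \Cref{thm:generalizing-denoiser-loss-approximation_appendix}, which states that the expected loss difference equals $\frac{d\sigma_\star^2 \sigma_t^2}{\alpha_t^2 \sigma_\star^2 + \sigma_t^2} = \frac{d\sigma_\star^2}{\snr_t \sigma_\star^2 + 1}$ up to a residual $\Xi(\veps,t,d)$ that decomposes into a linear-in-$\veps$ part and a $\poly(d)\cdot(1/\sigma_\star^2+\snr_t)\cdot\frac{d\sigma_\star^2}{\snr_t\sigma_\star^2+1}$ factor multiplying a sum of three exponentials. To upgrade this additive approximation to a $\Theta$-estimate it suffices to show that $\Xi(\veps,t,d) = o\bigl(\frac{d\sigma_\star^2}{\snr_t\sigma_\star^2+1}\bigr)$ uniformly on $t \in [0,\kappa(d)]$; the lower bound on the loss difference then comes for free by reverse triangle inequality.

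The central choice will be to take $\veps = \veps_d := c\log(d)/\sqrt{d}$ for a suitably small absolute constant $c > 0$. This balances the two competing constraints on $\veps$: it makes $\veps_d^2 d = \Theta(\log(d)^2)$ so that the third exponential $\exp[-\veps_d^2 d/16]$ decays super-polynomially in $d$ and dominates the $\poly(d)$ prefactor, while the first-order $\veps_d$-term in $\Xi$ contributes only a multiplicative $o(1)$ correction to the target. Both coupling conditions of \Cref{thm:generalizing-denoiser-loss-approximation_appendix} reduce, under the scaling assumptions $\gamma = \Theta(\sqrt d)$ and $\sigma_\star^2 = \Theta(1)$, to requirements of the form $\veps_d \ltsim \sqrt{\snr_t/(\snr_t\sigma_\star^2+1)}$ and $\veps_d^2 \ltsim \snr_t$, both of which hold uniformly on $t \in [0,\kappa(d)]$ precisely because $\kappa(d)$ is defined by $\snr_{\kappa(d)} = \Theta(\log(d)^2/d)$; this is in fact exactly what dictates the choice of $\kappa(d)$ in the statement.

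For the remaining two exponentials in $\Xi$, their exponents are of order $d\cdot\snr_t/(\snr_t\sigma_\star^2+1)$ and $d\snr_t\sigma_\star^2$ respectively. On $t \in [0,\kappa(d)]$ one has $\snr_t \gtsim \log(d)^2/d$, so both exponents exceed $\Theta(\log(d)^2)$ (and grow much larger as $t \downto 0$), whence each decays faster than any polynomial and defeats any $\poly(d)$ factor. Combining these controls, $\Xi(\veps_d,t,d) = o\bigl(\frac{d\sigma_\star^2}{\snr_t\sigma_\star^2+1}\bigr)$ uniformly on the interval, and the $\Theta$-estimate follows.

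The main obstacle will be ensuring genuine uniformity on the full interval $[0,\kappa(d)]$. In particular, as $t \downto 0$ the $(1/\sigma_\star^2 + \snr_t)$ prefactor appearing in the residual grows without bound, and one must confirm it is absorbed by the $\snr_t$-dependent decay of the first two exponentials (whose exponents scale linearly in $\snr_t$ for $E_2$ and whose decay thus outpaces the linear prefactor with a tremendous margin). A secondary subtlety is the degenerate boundary $t=0$, where target, residual, and loss difference are all identically zero, so the claim is trivially an equality there and the uniform $\Theta$-control propagates to the whole interval.
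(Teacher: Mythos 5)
Your proof takes essentially the same route as the paper: choose $\veps = \Theta(\log(d)/\sqrt{d})$, verify the two coupling conditions of \Cref{thm:generalizing-denoiser-loss-approximation_appendix} hold on $[0,\kappa(d)]$ because $\snr_t \geq \Theta(\log(d)^2/d)$ there, and observe that all three exponents in the residual are then at least $\Theta(\log(d)^2)$, so the exponential decay defeats the $\poly(d)$ prefactor. This matches the paper's own short derivation verbatim.

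One caution worth flagging in your preliminary discussion of the $(1/\sigma_\star^2 + \snr_t)$ prefactor: you attribute its absorption to the $\snr_t$-scaling decay of $E_1, E_2$, but the exponent of $E_1$, namely $\gamma^2\snr_t/(1+\sigma_\star^2\snr_t)$, saturates at $\Theta(d)$ for $\snr_t$ large, and the third exponential $\exp[-\veps^2 d/16]$ is $t$-independent entirely, so neither absorbs the prefactor. The clean observation is that, in the form of the residual written by the paper, the factor is $\tfrac{d\sigma_\star^2\sigma_t^2}{\alpha_t^2\sigma_\star^2+\sigma_t^2}\cdot(1/\sigma_\star^2+\snr_t)\le d$, so the \emph{absolute} residual is $\poly(d)\exp[-\Theta(\log(d)^2)]$ uniformly; the \emph{relative} residual, however, does blow up on an exponentially small neighborhood of $t=0$ where $\snr_t$ is super-polynomially large, and the paper handles this informally via the closing remark about ``an appropriate choice of constant'' in $\kappa(d)$ --- your own dismissal at $t=0$ leans on the same implicit leeway.
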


\begin{proof}
    First note that $\vareps = \Theta(\log(d)/d^{1/2})$ satisfies the coupling condition. Next, if we assume that $d$ is large enough so that $1 - \vareps \leq 1/2$. We have
    \begin{align}
         & \left|\E*{\sL_N(\vmu_\star^1, \dots, \vmu_\star^K, \sigma_\star^2)}
        -
        \E*{\sL_N(\vx^1, \dots, \vx^N, 0)}  - \frac{d\sigma_\star^2 \sigma_t^2}{\alpha_t^2 \sigma_\star^2
        + \sigma_t^2}\right|                                                                                                                                                         \\ & \qquad \leq \vareps \frac{d\sigma_\star^2 \sigma_t^2}{\alpha_t^2 \sigma_\star^2
            + \sigma_t^2}  + C_4 \left(1 + \max_{k \in [K]}\, \norm{\vmu_\star^k}^2 / d + \sigma_\star^2 \right) \mathrm{Poly}(d)  \frac{d\sigma_\star^2 \sigma_t^2}{\alpha_t^2 \sigma_\star^2
            + \sigma_t^2} \left(1 / \sigma_\star^2 + \snr_t \right)
        \\
         & \qquad \qquad \times \exp\left[ -\frac{1}{16} \min\left( \log(d)^2,
         \snr_t \sigma_\star^2 d, \frac{\gamma^2 \snr_t}{1 + \sigma_\star^2
         \snr_t} \right) \right] .
    \end{align}

\end{proof}

\begin{theorem}{}{partial-memorizing-denoiser-loss-approximation_appendix}
    Given vectors $(\vmu_{\star}^k)_{k=1}^K$ satisfying
    \begin{equation}
        \min_{k \neq k'}\, \norm*{\vmu_{\star}^k - \vmu_{\star}^{k'}} \geq \gamma
        > 0,
    \end{equation}
    consider the distribution $\pi = (1/K) \sum_{k=1}^{K} \sN(\vmu_\star^{k},
        \sigma_{\star}^2 \vI)$.
    For $i \in [N]$, let $\vx^i \sim \pi$, fix $0 \leq t \leq 1$, and let
    $\vx_t^i \sim \alpha_t
        \vx^i + \sigma_t \vg$, where $\vg \sim \sN(\Zero, \vI)$ is independent from $\vx^i$.
    Consider the partial memorizing denoiser $\denoiserpmem(t, \vx_t)$. 
    Then for any $0 \leq \veps \leq 1$ satisfying the coupling conditions
    \begin{equation}
        \veps \leq
        \frac{
            \alpha_t \gamma
        }{
            2\sqrt{d(\alpha_t^2 \sigma_\star^2 + \sigma_t^2)}
        },
    \end{equation}
    and
    \begin{equation}
        \frac{\veps^2}{1-\veps} \leq \frac{\alpha_t^2
            \sigma_\star^2}{2\sigma_t^2},
    \end{equation}
we have that 
    \begin{align}
            &\left(1 - \frac{\ell}{N}\right) d \sigma_\star^2 - \Xi(\vareps, t, d) \\
            & \qquad \leq 
                \E*{\sL_N(\vx^1, \dots, \vx^\ell, 0)}
                -
                \E*{\sL_N(\vx^1, \dots, \vx^N, 0} \\
                &\qquad \qquad \leq 2 \left(1 - \frac{\ell}{N}\right) d \sigma_\star^2 + \Xi(\vareps, t, d),
    \end{align}
    where the residual $\Xi$ is explicit in the proof.
\end{theorem}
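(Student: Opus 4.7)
The plan is to apply the orthogonality decomposition \eqref{eq:training-loss-ortho} to rewrite the excess loss as
\begin{equation}
\E*{\sL_{N,t}(\denoiserpmem) - \sL_{N,t}(\denoisermem)} = \frac{1}{N}\sum_{i=1}^{N}\E*{\norm*{\denoiserpmem(t, X_t^i) - \denoisermem(t, X_t^i)}^2},
\end{equation}
and then split the sum into two regimes: $i \in [\ell]$ (samples available to the partial denoiser) and $i \in [N]\setminus[\ell]$ (samples not available).

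For $i \in [\ell]$, I would combine the $i \in [\ell]$ branch of Lemma \ref{lemma:p-mem-denoiser-1sparse-lp} (which gives $\denoiserpmem(t, X_t^i) \approx \vx^i$) with Lemma \ref{lemma:mem-denoiser-1sparse-lp} (which gives $\denoisermem(t, X_t^i) \approx \vx^i$) to conclude that the contribution of these indices is exponentially small in $d$, and fold this into the residual $\Xi$. For $i \in [N]\setminus[\ell]$, the $i \notin [\ell]$ branch of Lemma \ref{lemma:p-mem-denoiser-1sparse-lp} shows that on the coupon-collector event $\mathsf{A}$ from Proposition \ref{prop:coupon_collector}, $\denoiserpmem(t, X_t^i) \approx \sum_{j \in \mathsf{S}_i}\softmax(\vw|_{\mathsf{S}_i})_j \vx^j$, while Lemma \ref{lemma:mem-denoiser-1sparse-lp} still gives $\denoisermem(t, X_t^i) \approx \vx^i$. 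Thus, up to exponentially small errors,
\begin{equation}
\norm*{\denoiserpmem(t, X_t^i) - \denoisermem(t, X_t^i)}^2 \approx \norm*{\sum_{j \in \mathsf{S}_i}\softmax(\vw|_{\mathsf{S}_i})_j \vx^j - \vx^i}^2.
\end{equation}

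The key observation is that for $j \in \mathsf{S}_i \cup \{i\}$, all samples share the cluster index $k_i$, so writing $\vx^j = \vmu_\star^{k_i} + \sigma_\star \vw^j$ and using that the softmax weights sum to $1$ over $\mathsf{S}_i$, the means cancel and we reduce to
\begin{equation}
\norm*{\sigma_\star\Bigl(\sum_{j \in \mathsf{S}_i}\softmax(\vw|_{\mathsf{S}_i})_j \vw^j - \vw^i\Bigr)}^2.
\end{equation}
An application of Proposition \ref{prop:control_softmax_norm} (with $n = |\mathsf{S}_i|+1$, $\vg^j = \sigma_\star \vw^j$) then yields the two-sided control $d\sigma_\star^2(1-3\veps) \leq \|\cdot\|^2 \leq 2 d\sigma_\star^2(1+3\veps)$ on a high-probability event. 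Summing over $i \in [N]\setminus[\ell]$ contributes exactly $(1 - \ell/N)\, d\sigma_\star^2$ to $2(1 - \ell/N)\, d\sigma_\star^2$, as claimed.

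To pass from high-probability bounds to expectations, I would use the same truncation argument as in the proof of Theorem \ref{thm:generalizing-denoiser-loss-approximation_appendix}: bound fourth moments of $\vx^i$, $\denoisermem$, and $\denoiserpmem$ by $\poly(d, N, \Gamma, \sigma_\star)$ times standard Gaussian moments, then Cauchy--Schwarz on the complementary event, absorbing every resulting term (the exponentially-small contributions from both Lemmas \ref{lemma:mem-denoiser-1sparse-lp} and \ref{lemma:p-mem-denoiser-1sparse-lp}, the coupon-collector failure probability $K^{-\log d}(1+\log K \log d)$, and the Chernoff slack $\veps$ coming from Proposition \ref{prop:control_softmax_norm}) into the residual $\Xi(\veps, t, d)$. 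The main obstacle is bookkeeping: unlike Theorem \ref{thm:generalizing-denoiser-loss-approximation_appendix}, here one must carefully track the coupon-collector event to ensure $\mathsf{S}_i \neq \emptyset$ for every $i \notin [\ell]$, and the two-sided nature of the bound (factor $1$ vs.\ $2$) must be preserved --- it arises precisely from the softmax-weighted combination being a contraction toward $\vx^i$'s cluster, never closer than one Gaussian deviation nor farther than the pairwise deviation.
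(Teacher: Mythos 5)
Your proposal follows the paper's proof essentially step for step: orthogonality decomposition to reduce to $\frac{1}{N}\sum_i \E\bigl[\norm{\denoiserpmem - \denoisermem}^2\bigr]$, insertion of $\vx^i$ as the pivot point, the split into $i \in [\ell]$ versus $i \notin [\ell]$, the coupon-collector event guaranteeing $\mathsf{S}_i \neq \emptyset$, the application of \Cref{prop:control_softmax_norm} for the leading-order term, and the truncation/Cauchy--Schwarz argument to pass from high-probability bounds to expectations. Your explicit remark about the means $\vmu_\star^{k_i}$ cancelling inside the softmax-weighted combination over $\mathsf{S}_i$ (because the weights sum to one and all indices in $\mathsf{S}_i \cup \{i\}$ share cluster $k_i$) is exactly the reduction that makes \Cref{prop:control_softmax_norm} applicable, which the paper uses implicitly.
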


\begin{proof}
The proof will be an application of \Cref{lemma:p-mem-denoiser-1sparse-lp} and
    \Cref{lemma:mem-denoiser-1sparse-lp}.
    \paragraph{Softmax value decomposition.} 
    Using \eqref{eq:training-loss-ortho}, we have that
    \begin{align}
         & \E*{\sL_N(\vx^1, \dots, \vx^\ell, 0)}
        -
        \E*{\sL_N(\vx^1, \dots, \vx^N, 0)}                                     \\
         & \quad =
        \frac{1}{N}
        \sum_{i=1}^N 
        \E*{
        \norm*{
        \denoiserpmem(t, \vx_t^i) - \denoisermem(t, \vx_t^i)
        }^2
        }                                           \\
         & \quad =
        \frac{1}{N}
        \sum_{i=1}^N 
        \E*{
        \norm*{
        \denoiserpmem(t, \vx_t^i) - \vx_i + \vx_i - \denoisermem(t, \vx_t^i)
        }^2
        }   .
    \end{align}

    We have that 
    \begin{align}
        &\E*{\sL_N(\vx^1, \dots, \vx^\ell, 0)}
        -
        \E*{\sL_N(\vx^1, \dots, \vx^N, 0)} \\
        &\qquad = \frac{1}{N}
        \sum_{i=1}^N 
        \E*{
        \norm*{
        \denoiserpmem(t, \vx_t^i) - \vx_i + \vx_i - \denoisermem(t, \vx_t^i)
        }^2
        } \\
        &\qquad = \frac{1}{N}
        \sum_{i=1}^N 
        \E*{
        \norm*{
        \denoiserpmem(t, \vx_t^i) - \vx^i}^2
        }
        + 
                \E*{
        \norm*{
        \denoisermem(t, \vx_t^i) - \vx^i}^2
        }
        + 2 \Lambda_t^i ,
    \end{align}
    where 
    \begin{equation}
        |\Lambda_t^i| \leq \E*{
        \norm*{
        \denoisermem(t, \vx_t^i) - \vx^i}^2
        }^{1/2} \E*{
        \norm*{
        \denoiserpmem(t, \vx_t^i)- \vx^i}^2
        }^{1/2} . 
    \end{equation}

    In the rest of the proof, we control each term in \eqref{eq:key_inequality}.

    \paragraph{Control of memorizing denoiser.} Second, we are going to control
    $\E*{\norm*{\vx_i - \denoisermem(t, \vx_t^i)}^2}$. The proof is similar to the one of \Cref{thm:generalizing-denoiser-loss-approximation_appendix} but we reproduce it for completeness.  We recall that we have
    \begin{equation}
        \denoisermem(t,\vx_t) = \sum_{i=1}^{N} \vx^{i} \softmax(\vw)_i ,
    \end{equation}
    where
    \begin{equation}
        w_i(\vx_t) = -\frac{1}{2}\| \alpha_t \vx^{i} - \vx_t^i \|^2 / \sigma_t^2
    \end{equation}
    Similarly as before, we have for any $p \in \nset$
    \begin{equation}
        \E*{ \max_{i \in [N]} \| \vx_i \|^p}  \leq 2^{2p}N (\Gamma^p + (d/2 + p)^{p/2}) .
    \end{equation}
    Therefore, we get that
    \begin{equation}
        \E*{\| \vx_i \|^p} \leq 2^{2p} N (\Gamma^p + \sigma_\star^{p}(d/2
        + p)^{p/2}) , \qquad \E*{\| \denoisermem(t, \vx_t^i) \|^p} \leq 2^{2p} N (\Gamma^p + \sigma_\star^{p}(d/2 + p)^{p/2}) .
    \end{equation}
    Note that this upper-bound is rather loose but given the rate of growth of
    $N$ with respect to $d$ that we will consider we won't need a tighter bound.
    For any event $\mathsf{A}$ such that $\norm*{\denoisermem(t, \vx_t^i) - \vx^i}^2 \leq C$, we have that
    \begin{align}
        \E*{\norm*{\denoisermem(t, \vx_t^i) - \vx^i}^2} & \leq
        C \Pr*{\mathsf{A}} + 2 \Pr*{\mathsf{A}^{\mathrm{c}}}^{1/2} \left(
        \E*{\norm*{\vx_t^i}^4}^{1/2} + \E*{\norm*{\denoiser^\nom(t, \vx_t^i)}^4}^{1/2} \right) \\
                                                              & \leq C  + 16 \left( \Gamma^2 + \sigma_\star^2 (d/2 + 4) \right) \Pr*{\mathsf{A}^{\mathrm{c}}}^{1/2} .
    \end{align}
    Therefore, combining this result with \Cref{lemma:mem-denoiser-1sparse-lp}, there exists a numerical constant $C_1 \geq 0$ such that
    \begin{align}
         & \E*{\norm*{\denoisermem(t, \vx_t^i) - \vx^i}^2}                                                                                                                                                                       \\
         & \qquad \leq C_1 \left( \max_{k \in [K]}\, \norm{\vmu_\star^k}^2 + (1 + \sigma_\star^2) d  \right) N^2 \left(\exp\left[-\frac{\alpha_t^2 \sigma_\star^2(1-\vareps)d}{2\sigma_t^2}\right] + \exp[-d \vareps^2/16] \right) .
    \end{align}

    \paragraph{Control of  the partial memorizing denoiser. } Third, we are
    going to control $\E*{\norm*{\vx_i - \denoiserpmem(t, \vx_t^i)}^2}$. First, similarly as before, we have that 
        \begin{equation}
        \E*{\| \vx_i \|^p} \leq 2^{2p} N (\Gamma^p + \sigma_\star^{p}(d/2
            + p)^{p/2}) , \qquad \E*{\| \denoisermem(t, \vx_t^i) \|^p} \leq 2^{2p} N (\Gamma^p + \sigma_\star^{p}(d/2 + p)^{p/2}) .
    \end{equation}
For any event $\mathsf{A}$, we have that
    \begin{align}
        &\abs*{\E*{\norm*{\denoiserpmem(t, \vx_t^i) - \vx^i}^2}
        - \E*{\norm*{\denoiserpmem(t, \vx_t^i) - \vx^i}^2 1_{\mathsf{A}}}}& \\
        &\qquad \leq  2 \Pr*{\mathsf{A}^{\mathrm{c}}}^{1/2} \left(
        \E*{\norm*{\vx_t^i}^4}^{1/2} + \E*{\norm*{\denoiserpmem(t, \vx_t^i)}^4}^{1/2} \right) \\
                                                              &\qquad \leq 16 \left( \Gamma^2 + \sigma_\star^2 (d/2 + 4) \right) \Pr*{\mathsf{A}^{\mathrm{c}}}^{1/2} . \label{eq:intermediate_result_pmem}
    \end{align}
Hence for $i \in [\ell]$, using \Cref{lemma:p-mem-denoiser-1sparse-lp} we have 
    \begin{align}
        &\E*{\norm*{
            \denoiserpmem(t, \vx_t^i) - \vx^i
        }^2}
        \\
        & \qquad \leq
        2N \Gamma_\star
        \exp\left[- \frac{\alpha_t^2 \sigma_\star^2(1-\veps) d}{\sigma_t^2} \right] + 16 \left( \Gamma^2 + \sigma_\star^2 (d/2 + 4) \right)  (8N)^{1/2}\exp[-d\veps^2 / 16] .
    \end{align}
Now, for $i \notin [\ell]$, we have that 
\begin{align}
    &\norm*{
            \denoiserpmem(t, \vx_t^i) - \vx^i
        }^2 = \norm*{
            \denoiserpmem(t, \vx_t^i) - \sum_{j \in \mathsf{S}_i} \softmax(\vw|_{\mathsf{S}_i})_j \vx^j + \sum_{j \in \mathsf{S}_i} \softmax(\vw|_{\mathsf{S}_i})_j \vx^j - \vx^i
        }^2 \\
        &= \norm*{
            \denoiserpmem(t, \vx_t^i) - \sum_{j \in \mathsf{S}_i} \softmax(\vw|_{\mathsf{S}_i})_j \vx^j}^2 \\
            &\qquad + \norm*{\sum_{j \in \mathsf{S}_i} \softmax(\vw|_{\mathsf{S}_i})_j \vx^j - \vx^i
        }^2 \\
        &\qquad \quad + 2 \left\langle \denoiserpmem(t, \vx_t^i) - \sum_{j \in \mathsf{S}_i} \softmax(\vw|_{\mathsf{S}_i})_j \vx^j, \sum_{j \in \mathsf{S}_i} \softmax(\vw|_{\mathsf{S}_i})_j \vx^j - \vx^i \right\rangle 
\end{align}
Therefore, we have that 
\begin{align}
    &\abs*{\norm*{
            \denoiserpmem(t, \vx_t^i) - \vx^i
        }^2 - \norm*{\sum_{j \in \mathsf{S}_i} \softmax(\vw|_{\mathsf{S}_i})_j \vx^j - \vx^i
        }^2} \\
        &\leq \norm*{
            \denoiserpmem(t, \vx_t^i) - \sum_{j \in \mathsf{S}_i} \softmax(\vw|_{\mathsf{S}_i})_j \vx^j}^2 \\
            &\qquad + \norm*{\denoiserpmem(t, \vx_t^i) - \sum_{j \in \mathsf{S}_i} \softmax(\vw|_{\mathsf{S}_i})_j \vx^j} \norm*{\sum_{j \in \mathsf{S}_i} \softmax(\vw|_{\mathsf{S}_i})_j \vx^j - \vx^i} 
\end{align}
Therefore, we get that 
\begin{align}
    &\abs*{\norm*{
            \denoiserpmem(t, \vx_t^i) - \vx^i
        }^2 - \norm*{\sum_{j \in \mathsf{S}_i} \softmax(\vw|_{\mathsf{S}_i})_j \vx^j - \vx^i
        }^2} \\
        &\leq \norm*{
            \denoiserpmem(t, \vx_t^i) - \sum_{j \in \mathsf{S}_i} \softmax(\vw|_{\mathsf{S}_i})_j \vx^j}^2 \\
            &\qquad + \norm*{\denoiserpmem(t, \vx_t^i) - \sum_{j \in \mathsf{S}_i} \softmax(\vw|_{\mathsf{S}_i})_j \vx^j} \norm*{\sum_{j \in \mathsf{S}_i} \softmax(\vw|_{\mathsf{S}_i})_j \vx^j - \vx^i}  
         . 
\end{align}
Hence, using this result and \Cref{lemma:p-mem-denoiser-1sparse-lp} we get that with probability at least $1 - 8N\exp[-d\veps^2 / 8] - K^{-\log(d)}(1 + \log(K) \log(d))$
\begin{align}
   &\E*{\abs*{\norm*{
            \denoiserpmem(t, \vx_t^i) - \vx^i
        }^2 - \norm*{\sum_{j \in \mathsf{S}_i} \softmax(\vw|_{\mathsf{S}_i})_j \vx^j - \vx^i
        }^2} 1_{\mathsf{A}}}  \\
        & \qquad \leq  8 \Gamma_\star (1 + N)^{2} \exp\left[ -\frac{\alpha_t^2 \gamma^2}{\sigma_t^2} \right] N (\Gamma + \sigma_\star(d/2 + p)^{1/2}) ,
\end{align}
where $\mathsf{A}$ is the event of \Cref{lemma:p-mem-denoiser-1sparse-lp}. Hence, combining this result and \eqref{eq:intermediate_result_pmem} we have 
    \begin{align}
        &\abs*{\E*{\norm*{
            \denoiserpmem(t, \vx_t^i) - \vx^i
        }^2} - \E*{\norm*{\sum_{j \in \mathsf{S}_i} \softmax(\vw|_{\mathsf{S}_i})_j \vx^j - \vx^i}^2} }
        \\
        & \qquad \leq
        \Gamma_\star (1 + N)^{2} \exp\left[ -\frac{\alpha_t^2 \gamma^2}{\sigma_t^2} \right] \\
        & \qquad + 16 \left( \Gamma^2 + \sigma_\star^2 (d/2 + 4) \right)  \left[(8N)^{1/2}\exp[-d\veps^2 / 16] + K^{-\log(d)/2}(1 + \log(K) \log(d))^{1/2} \right] \\
        &\qquad \qquad + 8 \Gamma_\star (1 + N)^{2} \exp\left[ -\frac{\alpha_t^2 \gamma^2}{\sigma_t^2} \right] N (\Gamma + \sigma_\star(d/2 + p)^{1/2}) . 
    \end{align}
    Finally, in order to control $\E*{\norm*{\sum_{j \in \mathsf{S}_i} \softmax(\vw|_{\mathsf{S}_i})_j \vx^j - \vx^i}^2} $, we use \Cref{prop:control_softmax_norm} which concludes the proof. 
\end{proof}

\begin{proposition}{}{simplification_p_mem_scaling_appendix}
    Assume that $N = \mathrm{poly}(d)$,
    $\gamma^2 = \Theta(d)$, $\max_{k \in [K]}\, \norm{\vmu_\star^k}^2
    = \Theta(d)$ and $\sigma_\star^2 = \Theta(1)$. Let $\kappa(d) = \snrinv(\log(d)^2/d)$. We have that uniformly on $t \in [0, \kappa(d)]$
    \begin{equation}
                    \E*{\sL_N(\vx^1, \dots, \vx^\ell, 0)}
                    -
                    \E*{\sL_N(\vx^1, \dots, \vx^N, 0)}
                     = \Theta\left(\left(1 - \frac{\ell}{N}\right) d \sigma_\star^2\right).
    \end{equation}
\end{proposition}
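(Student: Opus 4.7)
The plan is to follow the same orthogonal-decomposition strategy used in the proof of \Cref{thm:generalizing-denoiser-loss-approximation_appendix}: rewrite the excess loss as
\begin{equation}
\E*{\sL_N(\vx^1,\dots,\vx^\ell,0) - \sL_N(\vx^1,\dots,\vx^N,0)}
= \frac{1}{N}\sum_{i=1}^N \E*{\norm*{\denoiserpmem(t,X_t^i) - \denoisermem(t,X_t^i)}^2},
\end{equation}
and then evaluate each summand by substituting sparse high-dimensional approximations for both denoisers. The overall scaling comes from splitting the sum into ``in-set'' indices $i \in [\ell]$, which contribute negligibly, and ``out-of-set'' indices $i \notin [\ell]$, each of which contributes $\Theta(d\sigma_\star^2)$; summing over the $N - \ell$ out-of-set indices yields the $\Theta((1 - \ell/N)d\sigma_\star^2)$ factor.

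First I would apply \Cref{lemma:mem-denoiser-1sparse-lp} to replace $\denoisermem(t,X_t^i)$ by $\vx^i$ up to an exponentially small residual for every $i$, on a high-probability event. For $i \in [\ell]$, the first case of \Cref{lemma:p-mem-denoiser-1sparse-lp} gives the same sparse collapse for $\denoiserpmem$, so these $\ell$ terms of the sum produce only exponentially small residuals. For $i \notin [\ell]$, the second case of \Cref{lemma:p-mem-denoiser-1sparse-lp} (combined with the coupon-collector bound \Cref{prop:coupon_collector} to guarantee the intra-cluster subset $\mathsf{S}_i = \{j \in [\ell] : k_j = k_i\}$ is nonempty, which is where the $\ell \gtrsim K \log K$ condition enters) shows that $\denoiserpmem(t,X_t^i)$ is exponentially close to the intra-cluster softmax average $\sum_{j \in \mathsf{S}_i} \softmax(\vw|_{\mathsf{S}_i})_j\vx^j$.

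The core quantitative step is then to evaluate $\E{\norm{\sum_{j \in \mathsf{S}_i}\softmax(\vw|_{\mathsf{S}_i})_j \vx^j - \vx^i}^2}$ for $i \notin [\ell]$. Conditional on $\mathsf{S}_i$ being nonempty, the points $\vx^j$ for $j \in \mathsf{S}_i$ and $\vx^i$ are i.i.d.\ draws from $\mathcal{N}(\vmu_\star^{k_i},\sigma_\star^2\vI)$, so after subtracting the common mean the problem reduces to a convex combination of i.i.d.\ isotropic Gaussians minus one independent Gaussian. Here I would invoke \Cref{prop:control_softmax_norm}, which supplies the two-sided bound $d\sigma_\star^2(1-3\veps) \leq \norm{\sum_{j \in \mathsf{S}_i}\softmax(\vw|_{\mathsf{S}_i})_j \vx^j - \vx^i}^2 \leq 2 d\sigma_\star^2(1+3\veps)$ on a high-probability event, exactly yielding the $\Theta(d\sigma_\star^2)$ per-index contribution and the leading constant between $1$ and $2$ announced in the theorem statement. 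Converting these high-probability bounds to expectations is done as in the proof of \Cref{thm:generalizing-denoiser-loss-approximation_appendix}, by pairing Gaussian concentration with the crude polynomial-in-$d$ moment bounds $\E{\norm{\vx^i}^p}, \E{\norm{\denoiserpmem}^p} \leq \poly(N,d)$.

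The main obstacle is the \emph{lower} bound, since the upper bound is immediate from the triangle inequality and \Cref{lemma:p-mem-denoiser-1sparse-lp}. Establishing $\Omega(d\sigma_\star^2)$ requires the careful expansion of $\norm{\sum_j \softmax(\vw|_{\mathsf{S}_i})_j \vx^j - \vx^i}^2$ carried out in \Cref{prop:control_softmax_norm}, where the dominant $\norm{\vx^i}^2 \approx d\sigma_\star^2$ contribution must be shown to survive the cross-term $2\sum_j \softmax(\vw|_{\mathsf{S}_i})_j \ip{\vx^j}{\vx^i}$. This survival is what ultimately sets the SNR-dependent endpoint $\kappa(d) = \snrinv(\Theta(\log(d)^2/d))$ of the uniform control: the coupling parameter $\veps$ must be simultaneously small enough to apply \Cref{lemma:p-mem-denoiser-1sparse-lp}'s softmax sparsification and large enough that $N\exp[-d\veps^2/8]$ absorbs the $\poly(d)$ union bounds, which forces $\veps = \Theta(\log(d)/\sqrt{d})$ and in turn restricts the admissible range of $t$.
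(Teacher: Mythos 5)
Your proposal is correct and follows essentially the same route as the paper: orthogonality decomposition, sparsification of $\denoisermem$ and $\denoiserpmem$ via \Cref{lemma:mem-denoiser-1sparse-lp} and \Cref{lemma:p-mem-denoiser-1sparse-lp}, the coupon-collector argument for out-of-set indices, and \Cref{prop:control_softmax_norm} to nail down the $\Theta(d\sigma_\star^2)$ per-index contribution, all with $\veps=\Theta(\log(d)/\sqrt{d})$ — precisely the content of \Cref{thm:partial-memorizing-denoiser-loss-approximation_appendix} to which the paper's proof defers. The one small imprecision is your claim that the upper bound is ``immediate from the triangle inequality'': the factor-$2$ constant actually comes from the Jensen/convexity step in \Cref{prop:control_softmax_norm}, not from the triangle inequality alone, though this does not affect the $\Theta$ conclusion.
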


\begin{proof}
    The proof of this result is similar to \Cref{prop:simplification_generalization_scaling_appendix} by letting $\vareps = \Theta(\log(d)/d^{1/2})$. 
\end{proof}

\section{Additional Related Work}

\label{sec:related_works_app}

\paragraph{Memorization in diffusion models.} Understanding memorization and generalization properties of diffusion
models is crucial for practitioners \citep{somepalli2023diffusion,
    ren2024unveiling, rahman2024frame,
    wang2024replication,chen2024investigating,stein2024exposing}.
Indeed, concerns about privacy
\citep{ghalebikesabi2023differentially, carlini2023extracting,
    nasr2023scalable} and copyright infringement
\citep{cui2023diffusionshield,
    wang2024replication,vyas2023provable,franceschelli2022copyright} are
key issues as  these models are deployed. Several factors can
influence the memorization capabilities of diffusion models.
Duplication and out-of-distribution samples have been shown to lead
to replication in diffusion models
\citep{carlini2023extracting,ross2024geometric,webster2023reproducible}
and solutions have been proposed by curating the dataset
\citep{chen2024towards} or introducing dummy data and adapting
diffusion models to unseen data
\citep{daras2024ambient,yoon2023diffusion}. Neural network
architectures have also been shown to play a role in memorization
with \citep{chavhan2024memorized} identifying specific neurons
causing memorization and \citep{wen2024detecting,wang2024discrepancy}
analyzing variability in the prediction function to identify
problematic prompts in image models.
We refer to \citep{gu2023memorization} for an in-depth experimental
investigation of those issues. Finally, we highlight the work of
\citep{kadkhodaie2023generalization}, in which the authors
investigate the inductive bias image denoisers to investigate the
generalization properties of state-of-the-art diffusion models.

\paragraph{Definitions of memorization.} Note that our definition of memorization (\Cref{def:mem}) is connected to the one of Eidetic Memorization as introduced by \citet{carlini2023extracting}. Similarly, to \citep{carlini2023extracting}, we acknowledge that this notion of memorization is \emph{strong} as it implies that an image is memorized if there exists a near perfect copy in the training dataset which does not fully capture \emph{copyright infringement} or \emph{data privacy} issues. We refer to \citep{elkin2023can} for an in-depth discussion of those issues.

\paragraph{Memorization and overfitting.} \textit{A priori}, it may not be clear
why memorization in diffusion models is substantially different from
\textit{overfitting} which has been a long-standing and well-studied property of
machine learning systems \cite{hastie2009elements}. Overfitting is characterized
by a large gap between training and validation losses. However, overfitting is
not directly connected to memorization. For a small number of parameters, the
partially memorizing denoisers defined in \Cref{sec:training_losses} have large
training and validation losses which ultimately have a small gap, meaning that
such models are not significantly overfit; however, every single sample they
generate is memorized. Meanwhile, the model corresponding to the fully
memorizing denoiser defined in \Cref{sec:setup} is both very overfit and does memorize. It is also worthwhile to discuss the connection of diffusion model memorization with \textit{benign overfitting} \citep{bartlett2020benign} and \textit{double descent} \citep{belkin2019reconciling}. In particular, we wish to clarify why the setting of diffusion models may be separated from benign overfitting. The core difference is as follows: in the usual double descent setting, one studies an \textit{overparameterized} learning problem, such as regression or classification, and there are many models at a fixed parameter count which obtain the minimal training loss; then it is up to (implicit) regularization to choose the best solution, which benignly interpolates the training data. As the number of parameters increase, the training loss of the trained model never increases, and the validation loss eventually also decreases (after an initial increase for the purpose of overfitting). Meanwhile, in the case of diffusion, there is exactly one model which obtains the minimal training loss, which is the memorizing denoiser, and this does not change no matter how many parameters are allocated. Certainly the memorizing denoiser does not benignly interpolate the training data. As the number of parameters increase past a certain point, the training loss never decreases, and in fact may increase, showing that double descent or benign overfitting indeed do not apply straightforwardly to the case of diffusion.

\section{Experimental Details} \label{app:experiments}

We run all experiments on several Nvidia A100 80GB GPUs using Jax 0.6.0 and Equinox 0.12 \citep{kidger2021equinox}. Each training/evaluation job occurs on a single A100 and the results are saved to file to be aggregated later. Aggregation, analysis, and visualization occur on a single A100.

\subsection{Datasets, Optimization, and Initialization Details} \label{app:sub_optimization}

In our experiments in \Cref{sub:experiments_gmm}, we generate synthetic Gaussian mixture models, by generating the \(K\) means \(\vmu_{\star}^{i}\) uniformly on the sphere of radius \(\sqrt{d}\) and setting the ground truth variance \(\sigma_{\star}^{2} = 1\), as prescribed by the results in \Cref{sec:training_losses}. The experiments conducted in \Cref{fig:train-loss-approx-verify,fig:memorization_phase_transition_loss,fig:learned_means_variance} are all under the setting \(N = 200\), \(d = 50\), and \(K = 12\). For the sweep in \Cref{fig:phase_transition_predictable_approximation} we take \((N, d, K)\) tuples from \([50, 100, 150, 200] \times [30, 40, 50, 60] \times [3, 6, 9, 12]\), obtaining a total of 64 \((N, d, K)\) tuples. For each setting of \((N, d, K)\) (including the sweep in \Cref{fig:phase_transition_predictable_approximation}) we train 20 models at different model sizes \(M\): starting from \(M = \lfloor N/10\rfloor\) and moving in increments of \(\lfloor N/10\rfloor\) to \(M = 10 \lfloor N/10 \rfloor\); then, training 10 more models where \(M\) is equally spaced between the \(M\) where the phase transition starts and the \(M\) where it ends (using the empirical criterion in \Cref{sec:experiments}).

In our experiments in \Cref{sub:experiments_image}, we generate synthetic colored FashionMNIST data by sampling \(K\) FashionMNIST \citep{xiao2017fashion} images uniformly at random to use as ``templates'', then using the PIL (Pillow) utility to reshape them to \(15 \times 15\) resolution. For each of the \(K\) templates (components) we generate a color vector using a Gaussian with ground truth mean and variance \((\vu_{\star}^{i}, \vsigma_{\star}^{2}) = (\bm{0}, 1) \in \bbR^{3} \times \bbR\). We take the Kronecker product of the color vector and the template as described in \Cref{sub:experiments_image} in order to form the sample. \Cref{fig:image_data_samples} uses the setting \(N = K = 8\) while the experiment in \Cref{fig:image_data_phase_transition} uses the setting \(N = 100\), \(K = 4\), and color dimension \(d = 3\). Here we train \(10\) models at different model sizes \(M\), starting from \(M = \lfloor N / 10\rfloor\) and moving in increments of \(\lfloor N / 10\rfloor\) to \(M = 10\lfloor N/10\rfloor\).

For all experiments, we use the ``variance preserving'' process which yields  \(\alpha_{t} = \sqrt{1 - t^{2}}\) and \(\sigma_{t} = t\) for  \(t \in [0, 1]\). We use the objective \eqref{eq:denoising-objective} to train our model denoisers. For training, we always train with the loss weighting \(\lambda(t) := \alpha_{t}^{2}/\sigma_{t}^{2}\), which is equivalent to using \textit{noise prediction} (\citep{karras2022elucidating}), and \(t \sim \Unif{(t_{\ell})_{\ell = 0}^{L}}\) where we use \(L = 25\) decreasing timesteps \(t_{\ell} = 0.01 + 0.998(L - \ell) / L = 0.999 - 0.998\ell/L \in (0, 1)\). In lieu of computing the (obviously intractable) inner expectation in \(\mathcal{L}_{N, t}\), we use \(N_{\mathrm{dup}} := 100\) Gaussian noise draws for each of the \(N\) samples to estimate the expectation. We use full-batch Adam for \(N_{\mathrm{epochs}}\) epochs (also, iterations) to optimize the objective; for experiments in \Cref{sub:experiments_gmm} we have \(N_{\mathrm{epochs}} = 50,000\) and for experiments in \Cref{sub:experiments_image} we have \(N_{\mathrm{epochs}} = 100,000\). We use a ``warmup-decay'' learning rate schedule: for \(N_{\mathrm{warmup}} := N_{\mathrm{epochs}}/10\) epochs the learning rate linearly increases from \(0\) to \(10^{-3}\); for the remaining \(N_{\mathrm{decay}} := N_{\mathrm{epochs}} - N_{\mathrm{warmup}}\) epochs the learning rate linearly decreases from \(10^{-3}\) to \(10^{-6}\).

For all models we train, we use a ``partial memorization initialization'' along with the Adam optimizer We use this initialization because the loss at a truly random initialization is extremely high, in many cases often \textit{at least eight orders of magnitude} larger than the loss at optimum, and Adam is often unable to learn effectively given this massive conditioning. The partial memorizing initialization, in the context of the isotropic Gaussian mixture model, sets the initial \(\sigma^{2}\) to \(10^{-6}\), and each mean \(\vmu^{i}\) to a random sample. In the context of the simple image model, it sets each initial template parameter \(\vA_{\vx^{i}}\) to the template which generates a sample, the corresponding initial color vector to the unique vector which generates the (not identically zero) sample given that template, and the initial color variance \(\sigma^{2}\) to \(10^{-6}\). Notice that in the experiments such as \Cref{fig:memorization_phase_transition_loss}, even models initialized with this partial memorizing initialization end up learning (nearly) generalizing solutions --- unless of course it is more favorable to memorize, which occurs with very large \(M\).

\subsection{Formal Description of Sampling Scheme} \label{app:sub_sampling}

We use the implementation of the DDIM sample prescribed in \citet{De-Bortoli2025-ng}, i.e., using the above notation and given a denoiser \(\bar{\vx}\)
\begin{equation}
    \hat{\vx}_{t_{\ell + 1}} = \frac{\sigma_{t_{\ell + 1}}}{\sigma_{t_{\ell}}}\hat{\vx}_{t_{\ell}} + \left(\alpha_{t_{\ell + 1}} - \frac{\sigma_{t_{\ell + 1}}}{\sigma_{t_{\ell}}}\alpha_{t_{\ell}}\right)\bar{\vx}(t_{\ell}, \hat{\vx}_{t_{\ell}}), \qquad \hat{\vx}_{t_{0}} \sim \mathcal{N}(\bm{0}, \bm{I}).
\end{equation}
As previously stated, we use increasing timesteps \(t_{\ell} = 0.999 - \ell/L\) for \(L = 25\) and \(\ell \in \{0, 1, \dots, L\}\). Notice that we use the same timesteps for sampling as for training. We implement this iteration using the DiffusionLab PyPI package \citep{pai25diffusionlab}.

\subsection{Formal Description of Loss Weighting Regression} \label{app:sub_loss_weighting}

Recall that in \Cref{sub:experiments_gmm} we predicted the phase transition using the loss approximations derived in \Cref{sec:training_losses}. To do this, we solved a regression problem \eqref{eq:weighting_regression_problem}. Here, we will discuss how we solve this problem efficiently by smoothing and regularization.

Namely, as a bilevel semi-discrete quadratic program, the problem \eqref{eq:weighting_regression_problem} is hard to optimize outright via gradient methods. As a result we parameterize the distribution over \(M\) via a temperature-weighted softmax with high temperature \(\tau = 1/20\), placing an entropy penalty (\(\beta_{\mathrm{sparsity}} = 10^{-3}\)) on the softmax output to ensure that the learned distribution over \(M\) is sparse. The overall problem is (using the notation from \eqref{eq:weighting_regression_problem})
\begin{align}\label{eq:weighting_regression_full_mse}
    \min_{\tilde{\lambda}} \quad 
    &\sum_{(N, d, K)}\left(\frac{\bar{M}_{\mathrm{pt}}(N, d, K, \tilde{\lambda})}{N} - \frac{M_{\mathrm{pt}}}{N}\right)^{2} \\
    &\qquad - \beta_{\mathrm{sparsity}}\sum_{(N, d, K)}\sum_{M}\tilde{p}(N, d, K, M, \tilde{\lambda})\log \tilde{p}(N, d, K, M, \tilde{\lambda}) \\
    \text{where} \qquad &\bar{M}_{\mathrm{pt}}(N, d, K, \tilde{\lambda}) = \sum_{M} M \cdot \tilde{p}(N, d, K, M, \tilde{\lambda}) \\
    \text{and} \qquad & \tilde{p}(N, d, K, \cdot, \tilde{\lambda}) = \resizebox{0.625\textwidth}{!}{\(\operatorname{softmax}\mathopen{}\left(\left\{-\frac{1}{\tau}\left(\sum_{\ell = 0}^{L}\tilde{\lambda}(t_{\ell})(\check{L}_{N, t}(\theta_{\pmem, M}(N, d, K)) - \check{L}_{N, t}(\theta_{\star}(N, d, K))\right)^{2}\right\}_{M}\right)\)}
\end{align}
In all cases, when we report the loss we refer to just the MSE component described in \eqref{eq:weighting_regression_full_mse}.

\subsection{Simple Image Model Calculations} \label{app:sub_image_model_simplified}

In this section we simplify the computation of the colored image denoiser in \Cref{sub:experiments_image}. Recall our setup in \Cref{sub:experiments_image} along with the notation in \Cref{lemma:denoiser_mog}. Under the settings \(\vmu_{\star}^{i} = \vA_{\star}^{i}\vu_{\star}^{i}\) and \(\vSigma_{\star}^{i} = \sigma_{\star}^{2}\vA_{i}^{\star}(\vA_{i}^{\star})^{\top}\), it holds that
\begin{align}
    \vmu_{\star}^{i}
     & = \vA_{\star}^{i}\vu_{\star}^{i} = (\bm{I} \kron \bm{x}_{\star}^{i})\vu_{\star}^{i} = \vu_{\star}^{i} \kron \vx_{\star}^{i} \\
     \vSigma_{\star}^{i}
     &= \sigma_{\star}^{2}(\vI \kron \vx_{\star}^{i})(\vI \kron \vx_{\star}^{i})^{\top} = \sigma_{\star}^{2}(\vI \kron \vx_{\star}^{i})(\vI \kron (\vx_{\star}^{i})^{\top}) = \sigma_{\star}^{2}(\vI \kron \vx_{\star}^{i}(\vx_{\star}^{i})^{\top})
\end{align}
Also, simplifying the inverse \((\alpha^{2}\vSigma_{\star}^{i} + \sigma^{2}\vI)^{-1}\) using the Sherman-Morrison-Woodbury identity obtains
\begin{align}
    (\alpha^{2}\bm{\Sigma}_{\star}^{i} + \sigma^{2}\bm{I})^{-1}
     & = \frac{1}{\sigma^{2}}\left(\vI +
    \frac{\alpha^{2}}{\sigma^{2}}\vSigma_{\star}^{i}\right)^{-1} \\
     & = \frac{1}{\sigma^{2}}\left(\vI +
    \frac{\alpha^{2}\sigma_{\star}^{2}}{\sigma^{2}}\vA_{\star}^{i}(\vA_{\star}^{i})^{\top}\right)^{-1}
    \\
     & = \frac{1}{\sigma^{2}}\left(\vI -
    \vA_{\star}^{i}\left[\frac{\sigma^{2}}{\alpha^{2}\sigma_{\star}^{2}}\vI +
        (\vA_{\star}^{i})^{\top}\vA_{\star}^{i}\right]^{-1}(\vA_{\star}^{i})^{\top}\right).
\end{align}
Calculating the interior term first, we obtain
\begin{align}
    (\vA_{\star}^{i})^{\top}\vA_{\star}^{i} 
    &= (\vI \kron \vx_{\star}^{i})^{\top}(\vI \kron \vx_{\star}^{i}) = (\vI \kron (\vx_{\star}^{i})^{\top})(\vI \kron \vx_{\star}^{i}) = \vI \kron [(\vx_{\star}^{i})^{\top}(\vx_{\star}^{i})] \\
    &= \vI \kron [\norm{\vx_{\star}^{i}}^{2}] = \norm{\vx_{\star}^{i}}^{2}\vI,
\end{align}
which yields 
\begin{align}
    (\alpha^{2}\vSigma_{\star}^{i} + \sigma^{2}\vI)^{-1} 
    &= \frac{1}{\sigma^{2}}\left(\vI - \frac{1}{\frac{\sigma^{2}}{\alpha^{2}\sigma_{\star}^{2}} + \norm{\vx_{\star}^{i}}^{2}}\vA_{\star}^{i}(\vA_{\star}^{i})^{\top}\right) \\
    &= \frac{1}{\sigma^{2}}\left(\vI - \frac{\alpha^{2}\sigma_{\star}^{2}}{\sigma^{2} + \alpha^{2}\sigma_{\star}^{2}\norm{\vx_{\star}^{i}}^{2}}\vA_{\star}^{i}(\vA_{\star}^{i})^{\top}\right)
\end{align}
Another part that requires elaboration is the action of
\((\vA_{\star}^{i})^{\top}\) on a (block) vector, say \(\vtheta\), which has
\begin{equation}
    (\vA_{\star}^{i})^{\top}\vtheta = (\vI \kron \vx_{\star}^{i})^{\top}\vtheta = (\vI \kron (\vx_{\star}^{i})^{\top}) \vtheta = \begin{bmatrix}(\vx_{\star}^{i})^{\top}\vtheta_{1} \\ \vdots \\ (\vx_{\star}^{i})^{\top}\vtheta_{c}\end{bmatrix}.
\end{equation}
Finally, the last part that can do with simplification is the
log-determinant, which obtains
\begin{align}
    \log\det(\alpha^{2}\vSigma_{\star}^{i} + \sigma^{2}\vI)
     & = \log\det\left(\sigma^{2}\left[\vI +
    \frac{\alpha^{2}}{\sigma^{2}}\vSigma_{\star}^{i}\right]\right) \\
     & = \log\det(\sigma^{2}\vI) + \log\det\left(\vI +
    \frac{\alpha^{2}}{\sigma^{2}}\vSigma_{\star}^{i}\right)        \\
     & = 2cd^{2}\log(\sigma) + \log\det\left(\vI + \frac{\alpha^{2}\sigma_{\star}^{2}}{\sigma^{2}}\vA_{\star}^{i}(\vA_{\star}^{i})^{\top}\right) \\
     & = 2cd^{2}\log(\sigma) + \log\det\left(\vI + \frac{\alpha^{2}\sigma_{\star}^{2}}{\sigma^{2}}(\vA_{\star}^{i})^{\top}\vA_{\star}^{i}\right) \\
     & = 2cd^{2}\log(\sigma) + \log\det\left(\vI + \frac{\alpha^{2}\sigma_{\star}^{2}}{\sigma^{2}}\norm{\vx_{\star}^{i}}^{2}\vI\right) \\
     & = 2cd^{2}\log(\sigma) + \log\det\left(\left\{1 + \frac{\alpha^{2}\sigma_{\star}^{2}}{\sigma^{2}}\norm{\vx_{\star}^{i}}^{2}\right\}\vI\right) \\
     & = 2cd^{2}\log(\sigma) + c \log\left(1 + \frac{\alpha^{2}\sigma_{\star}^{2}}{\sigma^{2}}\norm{\vx_{\star}^{i}}^{2}\right) \\ 
     &= c\left(2 d^{2}\log (\sigma) + \log\left\{1 + \frac{\alpha^{2}\sigma_{\star}^{2}}{\sigma^{2}}\norm{\vx_{\star}^{i}}^{2}\right\}\right).
\end{align}
These terms are all simple to compute, and therefore so is the true denoiser (as well as any we parameterize).

\subsection{More Experimental Results}\label{app:sub_more_experiments}

\paragraph{Interpreting the learned models.}

In this section we take advantage of the white-box nature of our denoiser and specifically how it relates to the data generating process, to try to uncover some mechanistic aspects of memorization in this setup. There are two questions we wish to answer:
\begin{itemize}
    \item Is the learned variance \(\sigma_{\mathrm{train}}^{2}\) an effective proxy for memorization or generalization?
    \item How do the learned means \(\vmu_{\mathrm{train}}^{i}\) behave in memorized and generalized models?
\end{itemize}
Recall that the ground truth (generalizing) denoiser has variance \(\sigma_{\star}^{2}\) and means \(\vmu_{\star}^{i}\), and the memorizing denoiser has variance  \(0\) and means \(\vx^{i}\). Thus, one intuition would be that a smaller variance implies a propensity of the trained model to memorize, and learned means closer to samples would imply the same. To verify this behavior we plot the learned variance and the average distance of the learned means to training samples and ground truth means, respectively, in \Cref{fig:learned_means_variance}. We can confirm that our basic intuition is true, with a twist: while the learned variance \(\sigma_{\mathrm{train}}^{2}\) decreases as the model size \(M\) increases, it does so linearly, and \textit{even in the generalization regime}. Similarly the average distance of each learned mean to the closest point in the training dataset decreases linearly to \(0\). In this sense, these and other mechanistically derived quantities may serve as a continuous proxy for the rapid phase transition, a behavior also observed in large models with far more complicated ``circuits'' \citep{nanda2023progress,schaeffer2023emergent}.

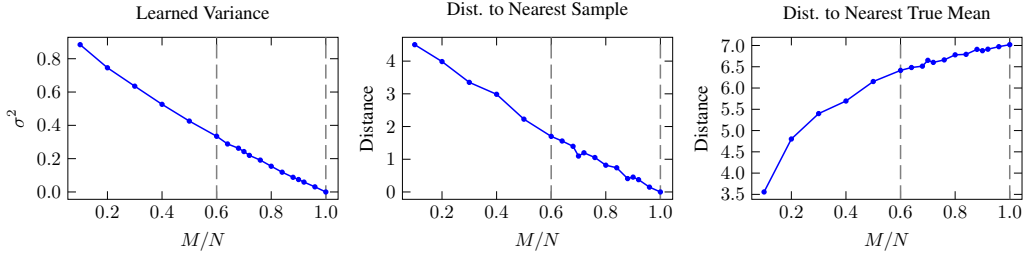
\begin{figure}

    \resizebox{\textwidth}{!}{
        \makebox{
            \begin{tikzpicture}

    \definecolor{darkgray176}{RGB}{176,176,176}
    \definecolor{gray}{RGB}{128,128,128}
    
    \begin{axis}[
    width=5cm,
    height=3cm,
    scale only axis,
    tick pos=both,
    title={Learned Variance},
    x grid style={darkgray176},
    xlabel={\(\displaystyle M/N\)},
    xmin=0.055, xmax=1.045,
    xtick style={color=black},
    xtick={0,0.2,0.4,0.6,0.8,1,1.2},
    xticklabels={
      \(\displaystyle {0.0}\),
      \(\displaystyle {0.2}\),
      \(\displaystyle {0.4}\),
      \(\displaystyle {0.6}\),
      \(\displaystyle {0.8}\),
      \(\displaystyle {1.0}\),
      \(\displaystyle {1.2}\)
    },
    y grid style={darkgray176},
    ylabel={\(\displaystyle \sigma^{2}\)},
    ymin=-0.0442193857557533, ymax=0.928607118090594,
    ytick style={color=black},
    ytick={-0.2,0,0.2,0.4,0.6,0.8,1},
    yticklabels={
      \(\displaystyle {\ensuremath{-}0.2}\),
      \(\displaystyle {0.0}\),
      \(\displaystyle {0.2}\),
      \(\displaystyle {0.4}\),
      \(\displaystyle {0.6}\),
      \(\displaystyle {0.8}\),
      \(\displaystyle {1.0}\)
    }
    ]
    \addplot [thick, blue, mark=*, mark size=1, mark options={solid}]
    table {%
    0.1 0.884387731552124
    0.2 0.745699346065521
    0.3 0.635148167610168
    0.4 0.526133179664612
    0.5 0.425619006156921
    0.6 0.334497421979904
    0.64 0.288249999284744
    0.68 0.262623012065887
    0.7 0.242677569389343
    0.72 0.219632029533386
    0.76 0.191179499030113
    0.8 0.154407575726509
    0.84 0.118640154600143
    0.88 0.0879866108298302
    0.9 0.074664905667305
    0.92 0.0591967403888702
    0.96 0.0306813102215528
    1 7.82717057834503e-10
    };
    \addplot [thick, gray, dash pattern=on 7.4pt off 3.2pt]
    table {%
    0.6 -0.0442193857557533
    0.6 0.928607118090595
    };
    \addplot [thick, gray, dash pattern=on 7.4pt off 3.2pt]
    table {%
    1 -0.0442193857557533
    1 0.928607118090595
    };
    \end{axis}
    
    \end{tikzpicture}
    
            \begin{tikzpicture}

\definecolor{darkgray176}{RGB}{176,176,176}
\definecolor{gray}{RGB}{128,128,128}

\begin{axis}[
    width=5cm,
    height=3cm,
    scale only axis,
tick pos=both,
title={Dist. to Nearest Sample},
x grid style={darkgray176},
xlabel={\(\displaystyle M/N\)},
xmin=0.055, xmax=1.045,
xtick style={color=black},
xtick={0,0.2,0.4,0.6,0.8,1,1.2},
xticklabels={
  \(\displaystyle {0.0}\),
  \(\displaystyle {0.2}\),
  \(\displaystyle {0.4}\),
  \(\displaystyle {0.6}\),
  \(\displaystyle {0.8}\),
  \(\displaystyle {1.0}\),
  \(\displaystyle {1.2}\)
},
y grid style={darkgray176},
ylabel={Distance},
ymin=-0.225098208926829, ymax=4.72782559983043,
ytick style={color=black},
ytick={-1,0,1,2,3,4,5},
yticklabels={
  \(\displaystyle {\ensuremath{-}1}\),
  \(\displaystyle {0}\),
  \(\displaystyle {1}\),
  \(\displaystyle {2}\),
  \(\displaystyle {3}\),
  \(\displaystyle {4}\),
  \(\displaystyle {5}\)
}
]
\addplot [thick, blue, mark=*, mark size=1, mark options={solid}]
table {%
0.1 4.50269269943237
0.2 3.98611307144165
0.3 3.35026669502258
0.4 2.98557329177856
0.5 2.22525930404663
0.6 1.69916903972626
0.64 1.55755591392517
0.68 1.3953572511673
0.7 1.09731209278107
0.72 1.1975599527359
0.76 1.05270433425903
0.8 0.817983627319336
0.84 0.739579975605011
0.88 0.409011840820312
0.9 0.45385667681694
0.92 0.376038670539856
0.96 0.1473518460989
1 3.46914712281432e-05
};
\addplot [thick, gray, dash pattern=on 7.4pt off 3.2pt]
table {%
0.6 -0.225098208926829
0.6 4.72782559983043
};
\addplot [thick, gray, dash pattern=on 7.4pt off 3.2pt]
table {%
1 -0.225098208926829
1 4.72782559983043
};
\end{axis}

\end{tikzpicture}
            \begin{tikzpicture}

\definecolor{darkgray176}{RGB}{176,176,176}
\definecolor{gray}{RGB}{128,128,128}

\begin{axis}[
    width=5cm,
    height=3cm,
    scale only axis,
tick pos=both,
title={Dist. to Nearest True Mean},
x grid style={darkgray176},
xlabel={\(\displaystyle M/N\)},
xmin=0.055, xmax=1.045,
xtick style={color=black},
xtick={0,0.2,0.4,0.6,0.8,1,1.2},
xticklabels={
  \(\displaystyle {0.0}\),
  \(\displaystyle {0.2}\),
  \(\displaystyle {0.4}\),
  \(\displaystyle {0.6}\),
  \(\displaystyle {0.8}\),
  \(\displaystyle {1.0}\),
  \(\displaystyle {1.2}\)
},
y grid style={darkgray176},
ylabel={Distance},
ymin=3.38363699913025, ymax=7.19258065223694,
ytick style={color=black},
ytick={3,3.5,4,4.5,5,5.5,6,6.5,7,7.5},
yticklabels={
  \(\displaystyle {3.0}\),
  \(\displaystyle {3.5}\),
  \(\displaystyle {4.0}\),
  \(\displaystyle {4.5}\),
  \(\displaystyle {5.0}\),
  \(\displaystyle {5.5}\),
  \(\displaystyle {6.0}\),
  \(\displaystyle {6.5}\),
  \(\displaystyle {7.0}\),
  \(\displaystyle {7.5}\)
}
]
\addplot [thick, blue, mark=*, mark size=1, mark options={solid}]
table {%
0.1 3.55677080154419
0.2 4.80165958404541
0.3 5.39901542663574
0.4 5.69466352462769
0.5 6.15304136276245
0.6 6.41429662704468
0.64 6.4819860458374
0.68 6.51340866088867
0.7 6.65123271942139
0.72 6.60252952575684
0.76 6.66096210479736
0.8 6.78037977218628
0.84 6.79149627685547
0.88 6.90878582000732
0.9 6.87750959396362
0.92 6.91250419616699
0.96 6.9719386100769
1 7.019446849823
};
\addplot [thick, gray, dash pattern=on 7.4pt off 3.2pt]
table {%
0.6 3.38363699913025
0.6 7.19258065223694
};
\addplot [thick, gray, dash pattern=on 7.4pt off 3.2pt]
table {%
1 3.38363699913025
1 7.19258065223694
};
\end{axis}

\end{tikzpicture}
        }
    }
    \label{fig:learned_means_variance}
    \caption{\small \textit{Left:} A plot of the learned variance as a function of the model size \(M\). \textit{Middle:} The average distance of a learned mean to its nearest sample in the training data, as a function of \(M\). \textit{Right:} The average distance of a learned mean to the nearest ground truth mean, as a function of \(M\). All plots include the start and end of the phase transition. While the variance eventually decays to \(0\), it surprisingly only does so \textit{linearly}, and for every \(M\) before the \textit{end} of the phase transition the ground truth variance does not collapse to \(0\). Similarly, as the memorization ratio increases and the phase transition occurs, the average distance from a learned mean to the nearest sample decreases \textit{linearly} as a function of \(M\). Meanwhile, the average distance from a learned mean to the nearest true mean \textit{increases}. Surprisingly, this behavior happens during the generalization phase as well. Note that the provided example is representative among our trained models.}
\end{figure}

\paragraph{Assessing the variance of different seeds.} In \Cref{fig:memorization_phase_transition_loss_errorbars}, we examine multiple runs using different random seeds to see their effect on the loss and memorization plots (akin to \Cref{fig:train-loss-approx-verify}). The shading on the memorization plot, which showcases the minimum and maximum value of the quantity across three seeds, amounts to error bars on the regression experiment in \Cref{fig:phase_transition_predictable_approximation}, since the approximated losses will be the same (as they are computed deterministically), and the only remaining variation is the regression target, i.e., the location of the phase transition.

\begin{figure}
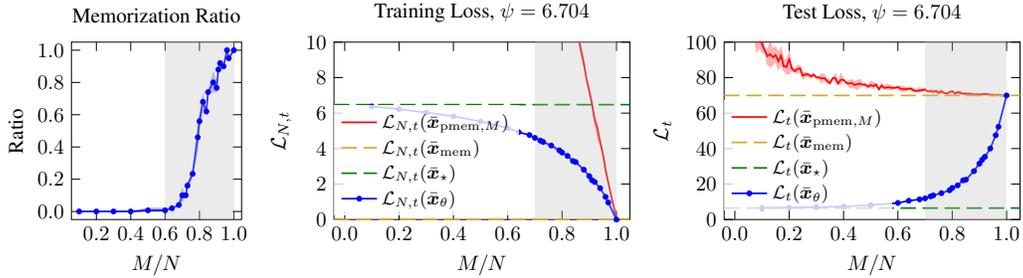

    \centering

    \resizebox{\textwidth}{!}{
        \makebox{
            \begin{tikzpicture}

\definecolor{darkgray176}{RGB}{176,176,176}
\definecolor{gray}{RGB}{128,128,128}

\begin{axis}[
    width=2.875cm,
    height=3cm,
    scale only axis,
tick pos=both,
title={Memorization Ratio},
x grid style={darkgray176},
xlabel={\(\displaystyle M/N\)},
xmin=0.055, xmax=1.045,
xtick style={color=black},
xtick={0,0.2,0.4,0.6,0.8,1,1.2},
xticklabels={
  \(\displaystyle {0.0}\),
  \(\displaystyle {0.2}\),
  \(\displaystyle {0.4}\),
  \(\displaystyle {0.6}\),
  \(\displaystyle {0.8}\),
  \(\displaystyle {1.0}\),
  \(\displaystyle {1.2}\)
},
y grid style={darkgray176},
ylabel={Ratio},
ymin=-0.05, ymax=1.05,
ytick style={color=black},
ytick={-0.2,0,0.2,0.4,0.6,0.8,1,1.2},
yticklabels={
  \(\displaystyle {\ensuremath{-}0.2}\),
  \(\displaystyle {0.0}\),
  \(\displaystyle {0.2}\),
  \(\displaystyle {0.4}\),
  \(\displaystyle {0.6}\),
  \(\displaystyle {0.8}\),
  \(\displaystyle {1.0}\),
  \(\displaystyle {1.2}\)
}
]
\path [draw=blue, fill=blue, opacity=0.3]
(axis cs:0.1,0)
--(axis cs:0.1,0)
--(axis cs:0.2,0)
--(axis cs:0.3,0)
--(axis cs:0.4,0)
--(axis cs:0.5,0)
--(axis cs:0.6,0)
--(axis cs:0.64,0.0199999995529652)
--(axis cs:0.68,0.0399999991059303)
--(axis cs:0.7,0.0599999986588955)
--(axis cs:0.72,0.0999999940395355)
--(axis cs:0.73,0.140000000596046)
--(axis cs:0.76,0.219999998807907)
--(axis cs:0.79,0.379999995231628)
--(axis cs:0.8,0.519999980926514)
--(axis cs:0.82,0.639999985694885)
--(axis cs:0.84,0.620000004768372)
--(axis cs:0.85,0.740000009536743)
--(axis cs:0.88,0.740000009536743)
--(axis cs:0.9,0.740000009536743)
--(axis cs:0.91,0.85999995470047)
--(axis cs:0.92,0.919999957084656)
--(axis cs:0.94,0.899999976158142)
--(axis cs:0.96,1)
--(axis cs:0.97,0.939999997615814)
--(axis cs:1,1)
--(axis cs:1,1)
--(axis cs:1,1)
--(axis cs:0.97,0.959999978542328)
--(axis cs:0.96,1)
--(axis cs:0.94,0.899999976158142)
--(axis cs:0.92,0.919999957084656)
--(axis cs:0.91,0.899999976158142)
--(axis cs:0.9,0.799999952316284)
--(axis cs:0.88,0.85999995470047)
--(axis cs:0.85,0.740000009536743)
--(axis cs:0.84,0.620000004768372)
--(axis cs:0.82,0.719999969005585)
--(axis cs:0.8,0.599999964237213)
--(axis cs:0.79,0.539999961853027)
--(axis cs:0.76,0.239999994635582)
--(axis cs:0.73,0.179999992251396)
--(axis cs:0.72,0.0999999940395355)
--(axis cs:0.7,0.140000000596046)
--(axis cs:0.68,0.0399999991059303)
--(axis cs:0.64,0.0199999995529652)
--(axis cs:0.6,0.0199999995529652)
--(axis cs:0.5,0.0199999995529652)
--(axis cs:0.4,0)
--(axis cs:0.3,0)
--(axis cs:0.2,0)
--(axis cs:0.1,0)
--cycle;

\addplot [thick, blue, mark=*, mark size=1, mark options={solid}]
table {%
0.1 0
0.2 0
0.3 0
0.4 0
0.5 0.00666666636243463
0.6 0.00666666636243463
0.64 0.0199999995529652
0.68 0.0399999991059303
0.7 0.0999999940395355
0.72 0.0999999940395355
0.73 0.159999996423721
0.76 0.233333334326744
0.79 0.459999978542328
0.8 0.560000002384186
0.82 0.679999947547913
0.84 0.620000004768372
0.85 0.740000009536743
0.88 0.799999952316284
0.9 0.766666650772095
0.91 0.879999995231628
0.92 0.919999957084656
0.94 0.899999976158142
0.96 1
0.97 0.949999988079071
1 1
};
\fill [gray!50!white, opacity=0.3] (axis cs:0.6,-0.05) rectangle (axis cs:1,1.05);
\end{axis}

\end{tikzpicture}
            \input{figs/experiments/train_loss_with_memorization_with_errorbars.tex}
            \input{figs/experiments/test_loss_with_memorization_with_errorbars.tex}
        }
    }

    \caption{\textbf{Our loss calculations and memorization trends are stable under different random seeds.} We observe the same behaviors as \Cref{fig:memorization_phase_transition_loss} when re-attempting the same experiment with three separate random seeds; we provide error bars but note that they are extremely small, indicating a tiny variance.}
    \label{fig:memorization_phase_transition_loss_errorbars}
    \vspace{-1.5em}
\end{figure}

\end{document}